\documentclass[letterpaper]{article} 
\usepackage{aaai2026}  
\usepackage{times}  
\usepackage{helvet}  
\usepackage{courier}  
\usepackage[hyphens]{url}  
\usepackage{graphicx} 
\urlstyle{rm} 
\usepackage{natbib}  
\usepackage{caption} 
\frenchspacing  
\setlength{\pdfpagewidth}{8.5in}  
\setlength{\pdfpageheight}{11in}  
%
\usepackage{algorithm}
\usepackage{algorithmic}

%
\usepackage{newfloat}
\usepackage{listings}
\DeclareCaptionStyle{ruled}{labelfont=normalfont,labelsep=colon,strut=off} 
\lstset{%
	basicstyle={\footnotesize\ttfamily},
	numbers=left,numberstyle=\footnotesize,xleftmargin=2em,
	aboveskip=0pt,belowskip=0pt,%
	showstringspaces=false,tabsize=2,breaklines=true}
\floatstyle{ruled}
\newfloat{listing}{tb}{lst}{}
\floatname{listing}{Listing}
%
\pdfinfo{
/TemplateVersion (2026.1)
}

\usepackage{amsmath}
\usepackage{amssymb}
\usepackage{mathtools}
\usepackage{amsthm}
\usepackage{physics}
\usepackage{mathrsfs}
\mathtoolsset{showonlyrefs}
\usepackage{enumerate}
\usepackage{lastpage}
\usepackage{float}
\usepackage{verbatim}
\usepackage{microtype}
\usepackage{graphicx}
\usepackage{subfigure}
\usepackage{booktabs} 


\newcommand{\explain}[1]{&&\text{(#1)}}

\newcommand{\fS}{\mathcal{S}}
\newcommand{\fA}{\mathcal{A}}
\newcommand{\fY}{\mathcal{Y}}

\newcommand{\fF}{\mathcal{F}}
\newcommand{\fO}{\mathcal{O}}

\newcommand{\fX}{\mathcal{X}}
\newcommand{\fV}{\mathcal{V}}

\newcommand{\R}{\mathbb{R}}

\newcommand{\E}{\mathbb{E}}

\newcommand{\ns}{{|\fS|}}

\newcommand{\ny}{{|\fY|}}

\newcommand{\tb}[1]{{\textbf{#1}}}

\newcommand{\paren}[1]{{\left(#1\right)}}
\newcommand{\e}[1]{{\epsilon}^{\paren{#1}}}
\newcommand{\Mbb}{{\overline{\overline{M}}}}
\newcommand{\ebb}[1]{{\overline{\overline{\epsilon}}}^{\paren{#1}}}
\newcommand{\Mb}{{\overline{M}}}
\newcommand{\eb}[1]{{\overline{\epsilon}}^{\paren{#1}}}

\theoremstyle{plain}
\newtheorem{theorem}{Theorem}[section]

\newtheorem{lemma}[theorem]{Lemma}
\newtheorem{corollary}[theorem]{Corollary}
\theoremstyle{definition}

\newtheorem{assumption}[theorem]{Assumption}
\theoremstyle{remark}
\newtheorem{remark}[theorem]{Remark}

\setcounter{secnumdepth}{2} 

%


\title{Asymptotic and Finite Sample Analysis of \\Nonexpansive Stochastic Approximations with Markovian Noise}
\author {
    Ethan Blaser\textsuperscript{\rm 1}, 
    Shangtong Zhang\textsuperscript{\rm 1}
}
\affiliations {
    \textsuperscript{\rm 1} Department of Computer Science, University of Virginia\\
    blaser@email.virginia.edu, shangtong@virginia.edu
}

\begin{document}

\maketitle

\begin{abstract}
    Stochastic approximation is a powerful class of algorithms
    with celebrated success. However, a large body of previous analysis focuses on stochastic approximations driven by contractive operators,
    which is not applicable in some important reinforcement learning settings like the average reward setting. This work instead investigates stochastic approximations with merely nonexpansive operators. In particular, we study nonexpansive stochastic approximations with Markovian noise, providing both asymptotic and finite sample analysis.
    Key to our analysis are novel bounds of noise terms resulting from the Poisson equation. As an application, we prove for the first time that classical tabular average reward temporal difference learning converges to a sample-path dependent fixed point.
\end{abstract}


\section{Introduction}
Stochastic approximation (SA) algorithms \citep{robbins1951stochastic, kushner2003stochastic, borkar2009stochastic} form the foundation of many iterative optimization and learning methods by updating a vector incrementally and stochastically. Prominent examples include stochastic gradient descent \citep{kiefer1952stochastic} and temporal difference (TD) learning \citep{sutton1988learning}. These algorithms generate a sequence of iterates $\qty{x_n}$ starting from an initial point $x_0 \in \R^d$ through the recursive update:
\begin{equation}
    x_{n+1} \doteq x_n + \alpha_{n+1}\qty(H\qty(x_n, Y_{n+1})-x_n)\label{eq:sa}
    \tag{SA}
\end{equation}
where $\left\{\alpha_n\right\}$ is a sequence of learning rates, $\{Y_n\}$ is a sequence of random noise in a space $\fY$, and a function $H:\mathbb{R}^d \times \mathcal{Y} \rightarrow \mathbb{R}^d$ maps the current iterate $x_n$ and noise $Y_{n+1}$ to the actual incremental update. 
We use $h$ to denote the expected update, i.e., $h(x) \doteq \E\qty[H(x, y)]$,
where the expectation will be formally defined shortly.

Despite the foundational role of SA in analyzing reinforcement learning (RL, \citet{sutton2018reinforcement}) algorithms, most of the existing literature assumes that the expected mapping $h$ is a contraction.
However, in many problems in RL, particularly those involving average reward formulations \citep{tsitsiklis1999average, puterman2014markov, wan2020learning, wan2021average, he2022emphatic}, $h$ is only guaranteed to be non-expansive, not contractive. 
Table \ref{tab:comparison} highlights the relative scarcity of results concerning nonexpansive mappings. As a result, it is surprising that the convergence of some of the simplest and most fundamental RL algorithms, such as tabular average reward TD \citep{tsitsiklis1999average}, has not been fully settled, despite more than 25 years having passed since its introduction.

One tool for analyzing \eqref{eq:sa} with nonexpansive $h$,
which has recently gained renewed attention, is Krasnoselskii-Mann (KM) iterations:
\begin{align}
    \label{eq km update}
    x_{n+1} = x_n + \alpha_{n+1} (h(x_n) - x_n).
    \tag{KM}
\end{align}
Under some other restrictive conditions, 
\citet{krasnosel1955two} first proves the convergence of~\eqref{eq km update} to a fixed point of $h$ and this result is further generalized by \citet{Edelstein1966ARO,ishikawa1976fixed,reich1979weak,liu1995ishikawa}.
More recently, \citet{cominetti2014rate} use a novel fox-and-hare model to connect KM iterations with Bernoulli random variables, providing a sharper convergence rate for $\norm{x_n - h(x_n)} \to 0$.
\citet{kim2007robustness,cominetti2014rate,bravo2019rates} further consider \eqref{eq km update} with some deterministic additive noise.

However, practitioners usually do not have access to $h$ directly.
Instead,
they only have access to a noisy estimate of $h$ (cf. $H$ in~\eqref{eq:sa}).
As a result,
the general SA update~\eqref{eq:sa} is also called the Stochastic KM (SKM) iterations when $h$ is nonexpansive.
Under mild conditions,
\citet{bravo2024stochastic} prove the almost sure convergence of SKM,
together with the convergence rates of $\E\qty[\norm{x_n - h(x_n)}]$.
However,
one significant limitation of \citet{bravo2024stochastic} is that they assume $\qty{Y_t}$ are i.i.d.,
which significantly restricts their applications in RL because the corresponding $\qty{Y_t}$ in many RL algorithms (e.g., the aforementioned tabular average reward TD) is a Markov chain.
This is the second gap that this work shall close.


\begin{table*}[h!]
\centering
\caption{Overview of stochastic approximation methods, with a focus on those that consider non-expansive mappings. ``Non-expansive $h$" refers to works where the expected mapping is non-expansive, as opposed to strictly a contraction. ``Markovian $\qty{Y_n}$" indicates cases where the noise term $\qty{Y_n}$ is Markovian. ``Asymptotic" refers to works that prove almost sure convergence, which is not necessarily weaker than non-asymptotic convergence results. Note that we present only a representative subset of results for SA with contractive mappings due to an abundance of literature in the area. For a more comprehensive treatment, see \citet{DBLP:books/sp/BenvenisteMP90, kushner2003stochastic, borkar2009stochastic}.}
\label{tab:comparison}
\begin{tabular}{lcccc}
\specialrule{1.2pt}{0pt}{2pt}
\textbf{} & \textbf{Nonexpansive $h$} & \textbf{Markovian $\{Y_n\}$} & \textbf{Asymptotic} & \textbf{Non-Asymptotic} \\ 
\specialrule{1.2pt}{0pt}{2pt}
\citet{krasnosel1955two} & \checkmark &  & \checkmark &  \\ \hline
\citet{ishikawa1976fixed} & \checkmark &  & \checkmark &  \\ \hline
\citet{reich1979weak} & \checkmark &  & \checkmark &  \\ \hline
\citet{DBLP:books/sp/BenvenisteMP90} &  &  & \checkmark &  \\ \hline
\citet{liu1995ishikawa} &  &  & \checkmark &  \\ \hline
\citet{szepesvari1997asymptotic} &  &  & \checkmark &  \\ \hline
\citet{abounadi2002stochastic} & \checkmark &  & \checkmark &  \\ \hline
\citet{tadic2002almost} &  & \checkmark &  & \checkmark \\ \hline
\citet{kushner2003stochastic} &  &  & \checkmark &  \\ \hline
\citet{koval2003law} &  &  & \checkmark & \checkmark \\ \hline
\citet{tadic2004almost} &  & \checkmark &  & \checkmark \\ \hline
\citet{kim2007robustness} & \checkmark &  & \checkmark &  \\ \hline
\citet{borkar2009stochastic} &  &  & \checkmark &  \\ \hline
\citet{cominetti2014rate} & \checkmark &  & \checkmark & \checkmark \\ \hline
\citet{bravo2019rates} & \checkmark &  & \checkmark & \checkmark \\ \hline
\citet{chen2021lyapunov} &  & \checkmark &  & \checkmark \\ \hline
\citet{borkar2021ode} &  & \checkmark & \checkmark & \checkmark \\ \hline
\citet{karandikar2024convergence} &  & \checkmark & \checkmark & \checkmark \\ \hline
\citet{bravo2024stochastic} & \checkmark &  & \checkmark & \checkmark \\ \hline
\citet{qian2024almost} &  & \checkmark & \checkmark & \checkmark \\ \hline
\citet{liu2024ode} &  & \checkmark & \checkmark &  \\ \hline
\textbf{Ours} & \checkmark & \checkmark & \checkmark & \checkmark \\ \hline
\specialrule{1.2pt}{0pt}{2pt}
\end{tabular}
\end{table*}

To summarize, we make two contributions in this work to close the two gaps.
\tb{First}, Theorem \ref{thm:main_conv_result} proves that the sequence $\qty{x_n}$ generated by \eqref{eq:sa} with Markovian $\qty{Y_n}$ and nonexpansive $h$, converges almost surely to some random point $x_* \in \fX_*$, where $\fX_*$ is the set of fixed points of $h$. 
Importantly, $x_*$ may depend on the entire sample-path.
Theorem \ref{thm:conv_rate} further provides the convergence rate of the expected residuals $\E\qty[\norm{x_n - h(x_n)}]$. 
Both only assume $\qty{Y_t}$ is a Markov chain.
Table~\ref{tab:comparison} highlights the improvement of this work over those prior.
The key idea of our approach is to use Poisson's equation to decompose the error $\qty{H(x_n, Y_{n+1})-h(x_n)}$ into boundable error terms \citep{DBLP:books/sp/BenvenisteMP90}. 
While Poisson's equation has been previously used for handling Markovian noise,
our method departs from prior arts in how we bound the resulting error terms.
Specifically, \citet{DBLP:books/sp/BenvenisteMP90} and \citet{konda2000actor} use stopping times, 
while \citet{borkar2021ode} employ a Lyapunov function and use the scaled iterates technique. 
By contrast, we leverage a 1-Lipschitz continuity assumption on $H$ to directly control the growth of error terms.
\tb{Second},
Theorem \ref{thm:avg_rew_td} uses our novel SKM results to provide the first proof of almost sure convergence of tabular average reward TD to a possibly sample-path dependent fixed point.

\paragraph{Notations} In this paper, all vectors are column. We use $\norm{\cdot}$ to denote a generic operator norm.
We use $\norm{\cdot}_2$ and $\norm{\cdot}_\infty$ to denote $\ell_2$ norm and infinity norm respectively. We use $\fO(\cdot)$ to hide deterministic constants for simplifying presentation,
while the letter $\zeta$ is reserved for sample-path dependent constants.

\section{Asymptotic Analysis of SKM Iterations} \label{sec:SA}

To broaden the applicability of our result,
we future allow~\eqref{eq:sa} to have additional additive noise.
Namely,
we consider the following SKM updates 
\begin{equation} 
\label{eq:skm_markov}
    \!\! x_{n+1} = x_n + \alpha_{n+1}\left(H(x_n, Y_{n+1}) - x_n + \e{1}_{n+1}\right)  \tag{SKM},
\end{equation}
where 
$\qty{x_n}$ are stochastic vectors evolving in $\R^d$,
$\qty{Y_n}$ is a Markov chain evolving in a finite state space $\fY$,
$H: \R^d \times \fY \to \R^d$ defines the update,
$\qty{\e{1}_{n+1}}$ is a sequence of stochastic noise evolving in $\R^d$,
and $\qty{\alpha_n}$ is a sequence of deterministic learning rates. 
We make the following assumptions.
\begin{assumption}[Ergodicity]\label{as:steadystate}
The Markov chain $\qty{Y_n}$ is irreducible and aperiodic. 
\end{assumption}
The Markov chain $\qty{Y_n}$ thus adopts a unique invariant distribution,
denoted $d_\mu$.
We use $P$ to denote the transition matrix of $\qty{Y_n}$.
\begin{assumption}[1-Lipschitz] \label{as:H 1 Lipschitz}
    The function $H$ is 1-Lipschitz continuous in its first argument w.r.t. some operator norm $\norm{\cdot}$ and uniformly in its second argument, i.e., for any $x, x', y$, it holds that
    \begin{align}
        \norm{H(x, y) - H(x', y)} \leq \norm{x - x'}.
    \end{align}
\end{assumption}
This assumption has two important implications.
First, it implies that $H(x, y)$ can grow at most linearly.
Indeed, let $x' = 0$, we get
    $\norm{H(x, y)} \leq \norm{H(0, y)} + \norm{x}$.
Define $C_H \doteq \max_y \norm{H(0, y)}$,
we get
\begin{align}
    \norm{H(x, y)} \leq C_H + \norm{x}. \label{eq:H linear growth}
\end{align}
Second,
define
the function $h: \mathbb{R}^d \rightarrow \mathbb{R}^d$ as the expectation of $H$ over the stationary distribution $d_\mu$: \begin{align} h(x) \doteq \E_{y\sim d_\mu}[H(x, y)]. \label{def: h} \end{align}
We then have that $h$ is non-expansive. 
Namely,
\begin{align}
    \norm{h(x) - h(x')} &\textstyle \leq \sum_y d_\mu(y) \norm{H(x, y) - H(x', y)} \\
    &\leq \norm{x - x'}. \label{eq:h nonexpansive}
\end{align}
We need to assume that the problem is solvable.
\begin{assumption}[Fixed Points]\label{as: fixed points nonempty}
    The non-expansive operator $h$ adopts at least one fixed point.
\end{assumption}
We use $\fX_* \neq \emptyset$ to denote the set of fixed points of $h$.
\begin{assumption}[Learning Rate]\label{as:lr}
    The learning rate $\qty{\alpha_n}$ has the form
    \begin{align}
        \alpha_n = \frac{1}{(n+1)^b}, \alpha_0 = 0,
    \end{align}
    where $b \in (\frac{4}{5}, 1]$.
\end{assumption}
The primary motivation for requiring $b \in (\frac{4}{5}, 1]$ is that our learning rates $\alpha_n$ need to decrease quickly enough for certain key terms in the proof to be finite. The specific need for $b > \frac{4}{5}$ can be seen in the proof of \eqref{eq: ai aj^2 tj^2} in Lemma \ref{lem:lr}.
We now impose assumptions on the additive noise.
\begin{assumption}[Additive Noise]\label{as:e1}
\begin{align}
    \sum_{k=1}^\infty \alpha_k \norm{\e{1}_k} <& \infty \qq{a.s.,} \label{as: total noise} \\
    \textstyle \E\left[\norm{\e{1}_n}^2 \right] =& \textstyle \fO\qty(1 \, /\,n). \label{as: second moment}
\end{align}
\end{assumption}

The first part of Assumption \ref{as:e1} can be interpreted as a requirement that the total amount of additive noise remains finite. 
Additionally, we impose a condition on the second moment of this noise, requiring it to converge at the rate $\fO \qty(\frac{1}{n})$. 
While these assumptions on ${\e{1}_n}$ may seem restrictive, it should be noted that even if $\e{1}_n$ were absent, our work would still extend the results of \citet{bravo2024stochastic} to cases involving Markovian noise, as the Markovian noise component is already incorporated in $Y_n$, which represents a significant result.
For most RL applications involving algorithms which have only one set of learnable weights, the additional noise $\e{1}_k$ will simply be 0.
We are now ready to present the asymptotic analysis of~\eqref{eq:skm_markov}. 

\begin{theorem} \label{thm:main_conv_result}
Let Assumptions \ref{as:steadystate} - \ref{as:e1} hold.
Then the iterates $\qty{x_n}$ generated by~\\\eqref{eq:skm_markov}
satisfy 
\begin{align}
    \lim_{n\to\infty} x_n = x_* \qq{a.s.,}
\end{align}
where $x_* \in \fX_*$ is a possibly sample-path dependent fixed point.
Or more precisely speaking, let $\omega$ denote a sample path $(w_0, Y_0, Y_1, \dots)$ and write $x_n(\omega)$ to emphasize the dependence of $x_n$ on $\omega$.
Then there exists a set $\Omega$ of sample paths with $\Pr(\Omega) = 1$ such that for any $\omega \in \Omega$, the limit $\lim_{n\to\infty} x_n(\omega)$ exists, denoted as $x_*(\omega)$, and satisfies $x_*(\omega) \in \fX_*$.
\end{theorem}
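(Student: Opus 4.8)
The plan is to use Poisson's equation to rewrite \eqref{eq:skm_markov} as a stochastic Krasnoselskii--Mann iteration of the form \eqref{eq skm update} driven by a martingale difference sequence plus an additive perturbation whose $\alpha$-weighted sum is finite almost surely, and then to carry out (an adaptation of) the SKM convergence argument of \citet{bravo2024stochastic}. The starting point is a decomposition of the Markovian noise $\xi_{n+1} \doteq H(x_n, Y_{n+1}) - h(x_n)$. Since $\fY$ is finite and $\qty{Y_n}$ is irreducible and aperiodic (Assumption \ref{as:steadystate}), $P$ is geometrically ergodic, so for each fixed $x$ the series $\nu_x \doteq \sum_{k \ge 0} P^k\paren{H(x, \cdot) - h(x)}$ converges and solves the Poisson equation $\nu_x - P\nu_x = H(x, \cdot) - h(x)$. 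Using Assumption \ref{as:H 1 Lipschitz} and \eqref{eq:H linear growth} one checks that $x \mapsto \nu_x$ is Lipschitz and grows at most linearly, $\norm{\nu_x} \le K_1 + K_2 \norm{x}$. Writing $w_n \doteq (P\nu_{x_n})(Y_n)$ and $\mathcal{F}_n \doteq \sigma(x_0, Y_0, \dots, Y_n)$, the Poisson equation yields
\begin{align}
    \xi_{n+1} = \underbrace{\nu_{x_n}(Y_{n+1}) - (P\nu_{x_n})(Y_n)}_{\doteq\, M_{n+1}} + \paren{w_n - w_{n+1}} + \underbrace{(P\nu_{x_{n+1}})(Y_{n+1}) - (P\nu_{x_n})(Y_{n+1})}_{\doteq\, \rho_{n+1}},
\end{align}
where $\qty{M_{n+1}}$ is a martingale difference sequence with $\E\qty[\norm{M_{n+1}}^2 \mid \mathcal{F}_n] = \fO\paren{1 + \norm{x_n}^2}$ and, by Lipschitzness of $\nu$ together with \eqref{eq:H linear growth}, $\norm{\rho_{n+1}} = \fO\paren{\alpha_{n+1}(1 + \norm{x_n} + \norm{\e{1}_{n+1}})}$.

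Next I would substitute this into \eqref{eq:skm_markov} and pass to the shifted iterate $\hat x_n \doteq x_n + \alpha_n w_n$, which has the same limit points as $x_n$ once $\qty{x_n}$ is bounded (since then $\alpha_n\norm{w_n}\to 0$). The telescoping term $\alpha_{n+1}(w_n - w_{n+1})$ becomes $(\alpha_{n+1} - \alpha_n) w_n$, and, using $\norm{h(x_n) - h(\hat x_n)} \le \norm{x_n - \hat x_n} = \alpha_n\norm{w_n}$, the recursion takes the form of \eqref{eq skm update} with nonexpansive operator $h$ applied to $\hat x_n$, martingale noise $M_{n+1}$, and an additive perturbation $e_{n+1}$ collecting $\rho_{n+1}$, the term $(\alpha_{n+1}-\alpha_n)w_n/\alpha_{n+1}$, the $\fO(\alpha_n(1+\norm{x_n}))$ error of the change of variables, and $\e{1}_{n+1}$.

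The crux — and the main obstacle — is then to show simultaneously that $\qty{x_n}$ is bounded almost surely and that $\sum_n \alpha_{n+1}\norm{e_{n+1}} < \infty$ a.s.: these are intertwined because all the Poisson-induced error terms grow linearly in $\norm{x_n}$, so bounding them appears to presuppose boundedness, while boundedness appears to presuppose control of them. The resolution exploits Assumption \ref{as:H 1 Lipschitz}: every error term enters the recursion at order $\alpha_n^2(1+\norm{x_n}^2)$, or as $\alpha_n$ times $\abs{\alpha_{n+1}-\alpha_n}(1+\norm{x_n})$, rather than at order $\alpha_n(1+\norm{x_n})$. Since $\sum_n \alpha_n^2 < \infty$ and the step-size sums appearing in the SKM analysis are finite for $b > \tfrac45$ (this is the content of Lemma \ref{lem:lr}, in particular \eqref{eq: ai aj^2 tj^2}), a Robbins--Siegmund argument on $\norm{x_n - x^*}^2$ for a fixed $x^* \in \fX_*$ — using the nonexpansiveness \eqref{eq:h nonexpansive} to bound the drift by $\norm{x_n - x^*}^2$ and Assumption \ref{as:e1} to control $\e{1}$ — closes the loop, delivering both $\sup_n\norm{x_n} < \infty$ a.s.\ and the almost sure convergence of $\norm{x_n - x^*}$ for every $x^* \in \fX_*$ (quasi-Fej\'er monotonicity).

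Finally, with boundedness in hand I would run the SKM machinery to obtain asymptotic regularity $\norm{x_n - h(x_n)} \to 0$ a.s., adapting the Bernoulli / ``fox-and-hare'' coupling estimate of \citet{cominetti2014rate, bravo2019rates, bravo2024stochastic} to absorb $M_{n+1}$ and $e_{n+1}$ (this argument is essentially what also proves Theorem \ref{thm:conv_rate}). On the probability-one event $\Omega$ where $\qty{x_n}$ is bounded, $\norm{x_n - h(x_n)} \to 0$ and $\norm{x_n - x^*}$ converges for every $x^* \in \fX_*$ (obtained over a countable dense subset of $\fX_*$ and extended by continuity of $h$); since $d < \infty$ and $h$ is continuous, any limit point $\bar x(\omega)$ of $\qty{x_n}$ satisfies $\bar x = h(\bar x)$, hence $\bar x \in \fX_*$, and then $\norm{x_n - \bar x}$ converges while admitting a subsequence tending to $0$, so $x_n(\omega) \to \bar x(\omega)$, which we take as $x_*(\omega) \in \fX_*$.
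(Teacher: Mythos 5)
Your overall architecture (Poisson equation, isolating a martingale difference $M_{n+1}$, telescoping the $w_n - w_{n+1}$ term, reducing to an inexact KM iteration) matches the paper's, but the step you call the crux --- the Robbins--Siegmund / quasi-Fej\'er argument on $\norm{x_n - x^*}^2$ --- would fail in this setting. To extract anything from the martingale structure of $M_{n+1}$ in a squared-distance recursion you must expand the square and kill the cross term $\expval{x_n - x^*, M_{n+1}}$ by conditioning, which requires the norm to come from an inner product; and to bound the drift you simultaneously need $\norm{(1-\alpha)(x_n-x^*) + \alpha(h(x_n)-x^*)} \leq \norm{x_n - x^*}$, i.e.\ nonexpansiveness of $h$ in that \emph{same} norm. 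But Assumption \ref{as:H 1 Lipschitz} and \eqref{eq:h nonexpansive} only give nonexpansiveness with respect to an arbitrary operator norm, and in the paper's target application (Section \ref{sec:RL}, Theorem \ref{thm:avg_rew_td}) that norm is $\norm{\cdot}_\infty$: the average-reward TD operator is \emph{not} nonexpansive in $\norm{\cdot}_2$. Passing between norms by equivalence inflates the drift by a constant $>1$ and destroys quasi-Fej\'er monotonicity, while the triangle-inequality alternative $\norm{x_{n+1}-x^*} \leq \norm{x_n - x^*} + \alpha_{n+1}\norm{M_{n+1}} + \dots$ fails because $\sum_n \alpha_n \norm{M_n}$ is not finite (one only has $\E\norm{M_{n+1}} = \fO(\tau_n)$). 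So the circularity you identify between boundedness and noise control is not actually resolved by your argument.

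The paper escapes this differently: it never proves $\sup_n \norm{x_n} < \infty$ up front. Lemma \ref{lem:xn_norm} only gives $\norm{x_n} = \fO(\tau_n)$ (a divergent envelope), and all Poisson-induced error bounds are allowed to grow at that rate; the condition $b > \frac{4}{5}$ is exactly what keeps the resulting weighted sums finite. The martingale noise is aggregated not by a squared-distance expansion of the main recursion but through the auxiliary linear recursion \eqref{eq:U_n_define}, whose $L^2$ analysis is done in $\norm{\cdot}_2$ (where orthogonality is available) and transferred to the general norm by equivalence --- legitimate there because $U_n$ carries no drift term. Once \eqref{eq:sum_Uk_converges} and \eqref{eq:limit_to_0} are verified, the shifted sequence $z_n = x_n - U_n$ satisfies a deterministic inexact KM recursion pathwise with $\norm{e_n} \leq \norm{U_{n-1}}$, and Lemma \ref{lem:bravo_2.1} (applied per sample path, valid in a general normed space) delivers boundedness, asymptotic regularity, and convergence simultaneously (Lemma \ref{lem: applying bravo 2.1}). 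If you want to keep your shifted-iterate formulation, you would need to replace your boundedness step by this kind of normed-space KM argument rather than a Hilbertian Fej\'er one.
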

\begin{proof}
We first define two useful shorthands,
\begin{align}
\alpha_{k,n} & \doteq \alpha_k \prod_{j=k+1}^n \paren{1-\alpha_j}, \, \alpha_{n,n} \doteq \alpha_n \label{eq:alpha_in_define}, \\
    \tau_n & \doteq \sum_{k=1}^n \alpha_k\paren{1-\alpha_k}. \label{eq:tau_n_def}
\end{align}
    We then start with a decomposition of the error $H(x, Y_{n+1}) - h(x)$ using Poisson's equation akin to \citet{metivier1987theoremes,DBLP:books/sp/BenvenisteMP90}.
    Namely,
    thanks to the finiteness of $\fY$,
    it is well known (see, e.g., Theorem 17.4.2 of \citet{meyn2012markov}
  or Theorem 8.2.6 of \citet{puterman2014markov}) that there exists a function $\nu(x, y): \R^d \times \fY \to \R^d$ such that
    \begin{align}
        H(x, y) - h(x) = \nu(x, y) - (P\nu)(x, y). \label{eq: Poisson decomp}
    \end{align}
    Here, we use
    $P\nu$ to denote the function $(x, y) \mapsto \sum_{y'} P(y, y')\nu(x, y')$.
    The error can then be decomposed as
    \begin{align}
        \label{eq poisson noise representation}
        &H(x, Y_{n+1}) - h(x) = M_{n+1} + \e{2}_{n+1} + \e{3}_{n+1},
    \end{align}
    where
    \begin{align}
 M_{n+1} &\doteq \nu(x_n,Y_{n+2}) - (P\nu)(x_n,Y_{n+1}), \label{eq:M_define} \\
 \e{2}_{n+1} &\doteq \nu\paren{x_n,Y_{n+1}} - \nu\paren{x_{n+1},Y_{n+2}}, \label{eq:e2_define} \\
 \e{3}_{n+1} &\doteq  \nu\paren{x_{n+1},Y_{n+2}} - \nu\paren{x_n,Y_{n+2}}. \label{eq:e3_define}
    \end{align}
Here $\qty{M_{n+1}}$ is a Martingale difference sequence.
We then use
\begin{align}
    \xi_{n+1} &\doteq \e{1}_{n+1}+ \e{2}_{n+1} + \e{3}_{n+1}, \label{eq:xi_define}
\end{align}
to denote all the non-Martingale noise, yielding
\begin{align}
        x_{n+1}&=\paren{1-\alpha_{n+1}}x_n + \alpha_{n+1}\paren{h\paren{x_n}+M_{n+1}+\xi_{n+1}}.
\end{align}
We now define an auxiliary sequence $\qty{U_n}$ to capture how the noise evolves
\begin{align}
    U_{n+1}\doteq \paren{1-\alpha_{n+1}}U_n + \alpha_{n+1}\paren{M_{n+1}+\xi_{n+1}}, \, U_0 \doteq 0. \label{eq:U_n_define}
\end{align} 
If we can prove that the total noise is well controlled in the following sense
\begin{align}
    \sum_{k=1}^\infty \alpha_k \norm{U_{k-1}} &< \infty \qq{a.s.,} \label{eq:sum_Uk_converges}\\
    \lim_{n\rightarrow \infty}\norm{U_n}&=0 \qq{a.s.,} \label{eq:limit_to_0} 
\end{align}
then a result from \citet{bravo2024stochastic} 
can be applied on each sample path to complete the almost sure convergence proof.
The remainder of the proof is dedicated to the verification of these two conditions.

Telescoping~\eqref{eq:U_n_define} yields
\begin{align}
    U_n =& \underbrace{\sum_{k=1}^n \alpha_{k,n} M_k}_{\Mb_n} + \underbrace{\sum_{k=1}^n \alpha_{k,n} \e{1}_k}_{\eb{1}_n}+ \\&\quad\underbrace{\sum_{k=1}^n \alpha_{k,n} \e{2}_k}_{\eb{2}_n} + \underbrace{\sum_{k=1}^n \alpha_{k,n} \e{3}_k}_{\eb{3}_n}. \label{eq:Un_norm_bar_bounds}
\end{align}
Then, we can upper-bound \eqref{eq:sum_Uk_converges} as
\begin{align}
    \sum_{k=1}^n \alpha_k \norm{U_{k-1}} &\leq  \underbrace{\sum_{k=1}^n \alpha_k \norm{\Mb_{k-1}}}_{\Mbb_n} + \underbrace{\sum_{k=1}^n \alpha_k \norm{\eb{1}_{k-1}}}_{\ebb{1}_n} \\ & + \underbrace{\sum_{k=1}^n \alpha_k \norm{\eb{2}_{k-1}}}_{\ebb{2}_n}  + \underbrace{\sum_{k=1}^n \alpha_k \norm{\eb{3}_{k-1}}}_{\ebb{3}_n}. \label{eq:sumUn_derivation}
\end{align}
Here we bound only $\ebb{2}_n$ to demonstrate the novelty of our approach to handling these error terms.
The almost sure bounds for $\Mbb_{n}, \ebb{1}_n,$ and $\ebb{3}_n$ are provided in Lemmas~\ref{lem:sup_M},~\ref{lem:sup_e1}, and~\ref{lem:sup_e3}  respectively.
Starting with the definition of $\eb{2}_n$ from \eqref{eq:Un_norm_bar_bounds}, and substituting the definition of $\e{2}_n$ from \eqref{eq:e2_define} we have,
    \begin{align}
        &\eb{2}_n \\
        &= -\sum_{k=1}^n \alpha_{k,n} \qty(\nu\qty(x_{k},Y_{k+1}) - \nu\qty(x_{k-1},Y_{k})),\\
        &= -\sum_{k=1}^n \alpha_{k,n}\nu\qty(x_{k},Y_{k+1}) - \alpha_{k-1,n}\nu\paren{x_{k-1},Y_{k}}  \\
        &\quad \quad+ \alpha_{k-1,n}\nu\paren{x_{k-1},Y_{k}} - \alpha_{k,n} \nu\paren{x_{k-1},Y_{k}}, \\
        &= -\alpha_{n}\nu\paren{x_{n},Y_{n+1}} - \sum_{k=1}^n\qty(\alpha_{k-1,n}-\alpha_{k,n})\nu \qty(x_{k-1},Y_k),
    \end{align}
    where the last equality holds because $\alpha_0\doteq0$ and $\alpha_{n,n} = \alpha_n$. Taking the norm gives
    \begin{align}
        \norm{\eb{2}_n}
        &\leq \alpha_n \norm{\nu \paren{x_{n},Y_{n+1}}} \\ &\quad + \sum_{k=1}^n \abs{\alpha_{k-1,n}-\alpha_{k,n} }\norm{\nu \paren{x_{k-1},Y_k}}, \label{eq: e2 bar stop}\\
        &\leq \zeta_{\ref{lem:v_norm}}(\alpha_n\tau_n + \sum_{k=1}^n \left|\alpha_{k-1,n}-\alpha_{k,n} \right|\tau_{k-1}), \\
        &\leq 2\zeta_{\ref{lem:v_norm}}\alpha_n \tau_n,
    \end{align}
where the second inequality holds by Lemma \ref{lem:v_norm} with $\zeta_{\ref{lem:v_norm}}$ denoting a sample-path dependent constant defined in Lemma~\ref{lem:v_norm}, and the last inequality holds because $\alpha_0 \doteq 0$, and that $\alpha_{i,n}$ and $\tau_i$ are monotonically increasing (Lemma \ref{lem:bravo b1}).

Then, from the definition of $\ebb{2}_n$ in \eqref{eq:sum_Uk_converges}, we have
    \begin{align}
        \ebb{2}_n=\sum_{k=1}^n \alpha_k \norm{\eb{2}_{k-1}} \leq  2\zeta_{\ref{lem:v_norm}} \sum_{k=1}^n \alpha_{k}^2 \tau_{k},
    \end{align} 
    where the inequality holds because $\alpha_0 \doteq 0$ and $\alpha_k$ is decreasing.
    Then, by Lemma \ref{lem:lr}, we have $\sup_n \sum_{k=1}^n \alpha_{k}^2 \tau_{k} < \infty$, which when combined with the monotone convergence theorem proves that $\lim_{n\rightarrow \infty} \ebb{2}_n < \infty$, verifying~\eqref{eq:sum_Uk_converges}.

We now verify~\eqref{eq:limit_to_0}.
This time, rewrite $U_n$ as
\begin{align}
    U_n &= -\sum_{k=1}^n \alpha_k U_{k-1}+\alpha_k\paren{M_k + \e{1}_k + \e{2}_k + \e{3}_k}.
\end{align}
Lemma~\ref{lem:martingale_series_bound}, Assumption \ref{as:e1}, and Lemmas~\ref{lem:ak_e2},~\ref{lem:ak_e3} prove that
  $\sup_n \norm{\sum_{k=1}^n\alpha_k M_k} < \infty$ and $\sup_n \norm{\sum_{k=1}^n \alpha_k \e{j}_k} < \infty$ for $j \in \qty{1,2,3}$ respectively.

Together with~\eqref{eq:Un_norm_bar_bounds},
this means that $\sup_n \norm{U_n} < \infty$.
In other words,
we have established the stability of~\eqref{eq:U_n_define}.
Then, it can be shown (Lemma~\ref{lem:xi_noise_convergence}), using an extension of Theorem 2.1 of \citet{borkar2009stochastic} (Lemma~\ref{thm:borkar_2_thm1}),
that $\qty{U_n}$ converges to the globally asymptotically stable equilibrium of the ODE
    $\dv{U(t)}{t} = -U(t)$,
which is 0.
This verifies~\eqref{eq:limit_to_0}.
Lemma~\ref{lem: applying bravo 2.1} then invokes a result from \citet{bravo2024stochastic} and completes the proof.
\end{proof}

\begin{remark}
    We want to highlight that the technical novelty of our work comes from two sources. The first is that while the use of Poisson's equation for handling Markovian noise is well-established, including the noise representation in~\eqref{eq poisson noise representation}, previous works with such error decomposition (e.g., \citet{DBLP:books/sp/BenvenisteMP90,konda2000actor,borkar2021ode}) usually only need to bound terms like ${\sum_k \alpha_k \e{1}_k}$.
    In contrast, our setup requires the bounding of additional terms such as $\eb{1}_n = \sum_k \alpha_{k, n} \e{1}_k$
    and $\ebb{1}_n = \sum_i \alpha_i \norm{\eb{1}_{k-1}}$
    that appear novel and more challenging. 
    Specifically,
    \citet{DBLP:books/sp/BenvenisteMP90,konda2000actor} consider the stopping time when $\norm{x_n}$ first exceeds some threshold.
    \citet{borkar2021ode} develop a contractive and recursive bound for $\norm{\nu(x_k, Y_{k+1})}$.
    Both are highly complicated and do not apply to our problem of bounding $\ebb{1}$.
    We instead leverage the 1-Lipschitzness of $H$ and use the sample-path dependent direct bound (cf. Lemma~\ref{lem:v_norm}) for $\norm{\nu(x_k, Y_{k+1})}$. 
    Second, our work extends Theorem 2.1 of \citet{borkar2009stochastic} by relaxing an assumption on the convergence of the deterministic noise term. Instead of requiring the noise to converge to 0, we only require a more mild condition on the asymptotic rate of change of this noise term. This extension, detailed in Appendix \ref{sec:borkar_ext}, has independent utility beyond this work. 
\end{remark}

\section{Finite Sample Analysis of SKM Iterations}\label{sec:c_rate}
The previous analysis not only guarantees the almost sure convergence of the iterates, but can also be used to obtain estimates of the expected fixed-point residuals.

\begin{theorem} \label{thm:conv_rate}
Consider the iteration (\ref{eq:skm_markov}) and let Assumptions \ref{as:steadystate} $-$ \ref{as:e1} hold. There exists a constant $C_{\ref{thm:conv_rate}}$ such that
\begin{align}
\E\left[\norm{x_n - h\paren{x_n}} \right] \leq \frac{C_{\ref{thm:conv_rate}}}{\sqrt{\tau_n}} = \begin{cases} 
      \fO \big (1/\sqrt{n^{1-b}}\big) \, \text{if} \ \frac{4}{5}<b<1, \\
      \fO \big(1/\sqrt{\log n}\big) \, \text{if} \  b=1.
   \end{cases}
\end{align}
\end{theorem}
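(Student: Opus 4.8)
The plan is to follow the same route as the proof of Theorem~\ref{thm:main_conv_result}: decompose the Markovian noise through Poisson's equation so that \eqref{eq:skm_markov} becomes a stochastic Krasnoselskii--Mann iteration driven by a martingale difference sequence plus small additive perturbations, and then invoke the \emph{quantitative} counterpart of the \citet{bravo2024stochastic} result (the analogue of what closes the proof of Theorem~\ref{thm:main_conv_result}), which supplies an explicit $1/\sqrt{\tau_n}$ bound for such iterations provided a handful of expected noise functionals are finite uniformly in $n$. Verifying those finiteness conditions is the whole job; it reuses the learning-rate estimates of Lemmas~\ref{lem:lr},~\ref{lem:v_norm},~\ref{lem:bravo b1} together with one new ingredient, an $L^2$ growth bound on the iterates.

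First I would reuse the decomposition from the proof of Theorem~\ref{thm:main_conv_result}, writing
\begin{align}
x_{n+1} = \paren{1-\alpha_{n+1}}x_n + \alpha_{n+1}\paren{h(x_n) + M_{n+1} + \xi_{n+1}},
\end{align}
with $\qty{M_{n+1}}$ the martingale difference sequence of \eqref{eq:M_define} and $\xi_{n+1} = \e{1}_{n+1} + \e{2}_{n+1} + \e{3}_{n+1}$. Second, I would establish the $L^2$ analogue of Lemma~\ref{lem:v_norm}, namely $\E\norm{x_n}^2 = \fO(\tau_n^2)$: the linear growth \eqref{eq:H linear growth} gives $\norm{x_n} \le \norm{x_0} + C_H\sum_{k=1}^n\alpha_k + \sum_{k=1}^n\alpha_k\norm{\e{1}_k}$, and since $\E\big[(\sum_k\alpha_k\norm{\e{1}_k})^2\big] \le \big(\sum_k\alpha_k\sqrt{\E\norm{\e{1}_k}^2}\big)^2 = \big(\sum_k\fO(k^{-b-1/2})\big)^2 < \infty$ (Assumption~\ref{as:e1} and $b>\tfrac12$), one gets $\E\norm{x_n}^2 = \fO\big(1+(\sum_{k\le n}\alpha_k)^2\big) = \fO(\tau_n^2)$. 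Third, from this and the Lipschitzness in $x$ of the Poisson solution $\nu$ (a geometric series of $P$-iterates of $H-h$ over the finite chain), I would deduce $\E\norm{M_k}^2,\,\E\norm{\nu(x_k,\cdot)}^2 = \fO(\tau_k^2)$, $\E\norm{\e{3}_k}^2 = \fO(\alpha_k^2\tau_k^2)$ (since $\e{3}_k$ is Lipschitz in $x_k-x_{k-1}=\fO(\alpha_k)$), and $\E\norm{\eb{2}_k} = \fO(\alpha_k\tau_k)$ by the Abel-summation/monotonicity bound (Lemma~\ref{lem:bravo b1}) already used for $\eb{2}_n$ in the proof of Theorem~\ref{thm:main_conv_result}.

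I would then apply the quantitative KM error bound to the displayed iteration: its right-hand side is $1/\sqrt{\tau_n}$ times the sum of a deterministic term $d(x_0,\fX_*)$, a cumulative martingale contribution controlled by $\sum_{k\le n}\alpha_k^2\E\norm{M_k}^2$ and by weighted partial sums of the martingale of the type appearing in the Theorem~\ref{thm:main_conv_result} analysis (e.g.\ $\E[\Mbb_n] = \sum_{i\le n}\alpha_i\E\norm{\Mb_{i-1}}$), and $\alpha$-weighted magnitudes of the remaining noise, $\sum_{k\le n}\alpha_k\E\norm{\e{1}_k}$, $\sum_{k\le n}\alpha_k\E\norm{\e{3}_k}$, $\sup_{k\le n}\E\norm{\eb{2}_k}$, $\E[\ebb{2}_n]$. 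By the moment bounds above, the $\e{1}$ term is $\sum_k\fO(k^{-b-1/2})<\infty$, the $\e{3}$ and $\eb{2}$ summation terms are $\fO(\sum_k\alpha_k^2\tau_k)<\infty$, and $\sup_k\E\norm{\eb{2}_k}=\fO(\sup_k\alpha_k\tau_k)=\fO(1)$. The martingale part is where $b\in(\tfrac45,1]$ is genuinely consumed: one cannot use the crude weight bound $\alpha_{k,n}\le\alpha_k$ (which, with $\E\norm{M_k}^2=\fO(\tau_k^2)$, fails to converge), but must exploit the concentration of the normalized weights --- morally $\sum_j\alpha_{j,n}^2\tau_j^2=\fO(\alpha_n\tau_n^2)$, so that $\E[\Mbb_n]=\fO(\sum_i\alpha_i^{3/2}\tau_i)$ --- and $\sum_i\alpha_i^{3/2}\tau_i=\sum_i\fO(i^{1-5b/2})$ converges exactly when $b>\tfrac45$; this is the content of the estimate \eqref{eq: ai aj^2 tj^2} in Lemma~\ref{lem:lr}. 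With every contribution bounded uniformly in $n$, $\E\norm{x_n-h(x_n)}\le C_{\ref{thm:conv_rate}}/\sqrt{\tau_n}$, and substituting $\tau_n=\Theta(\log n)$ for $b=1$ and $\tau_n=\Theta(n^{1-b})$ for $b<1$ (Lemma~\ref{lem:lr}) gives the two stated cases.

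I expect the main obstacle to be exactly this martingale bookkeeping: because the iterates' second moments grow like $\tau_n^2$, the martingale energy accumulated along the path is only borderline summable, the argument closes only at $b>\tfrac45$, and one must keep the telescoping term $\e{2}$ in Abel-summed form rather than routing it through the $\sum_k\alpha_k\norm{e_k}$ channel. A secondary point: since Assumption~\ref{as:H 1 Lipschitz} only fixes a general operator norm, there is no inner-product energy identity to lean on --- the moment computations use equivalence of norms on $\R^d$, and the $1/\sqrt{\tau_n}$ rate itself is obtained by invoking the normed-space (``fox-and-hare'') machinery of \citet{bravo2024stochastic} rather than re-deriving it.
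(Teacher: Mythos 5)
Your overall route is the paper's: the Poisson decomposition, the auxiliary noise process $U_n$, moment bounds on each piece ($\E\norm{x_n}^2 = \fO(\tau_n^2)$, $\E\norm{\Mb_n} = \fO(\tau_n\sqrt{\alpha_{n+1}})$, $\E\norm{\eb{2}_n} = \fO(\alpha_n\tau_n)$, and the borderline summability of $\sum_k\alpha_k^{3/2}\tau_k$ at $b>\tfrac{4}{5}$), followed by the quantitative KM estimate of \citet{bravo2024stochastic}. These moment computations match the paper's Corollaries \ref{lem: M rate}--\ref{lem: e3 rate} and Lemma \ref{lem: E Un bound order}, and your Minkowski bound on $\E[(\sum_k\alpha_k\norm{\e{1}_k})^2]$ is if anything sharper than the paper's Cauchy--Schwarz version.

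The gap is in the final assembly. You describe the quantitative KM bound as ``$1/\sqrt{\tau_n}$ times a sum of functionals'' and conclude once ``every contribution is bounded uniformly in $n$.'' That is not the form of the bound. What Lemma \ref{lem:bravo_2.1} actually gives, applied to $z_n = x_n - U_n$ with $e_k = U_{k-1}$, is
\begin{align}
\norm{z_n - h(z_n)} \le \zeta\,\sigma(\tau_n) + \sum_{k=2}^n 2\alpha_k\,\sigma(\tau_n-\tau_k)\norm{U_{k-1}} + 2\norm{U_n},
\end{align}
and only the first term is literally a constant times $1/\sqrt{\tau_n}$. The convolution term is not controlled by uniform boundedness (or even summability) of $\sum_k\alpha_k\E\norm{U_{k-1}}$: since $\sigma(\tau_n-\tau_k)=1$ for $k$ near $n$, a noise sequence with finite $\sum_k\alpha_k\E\norm{U_{k-1}}$ but slow pointwise decay leaves that sum of constant order. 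Extracting $\fO(1/\sqrt{\tau_n})$ requires feeding the pointwise decay $\E\norm{U_k}\le\omega_k=\fO(\tau_k\sqrt{\alpha_{k+1}})$ into the convolution-integral argument of Bravo's Theorems 2.11 and 3.1 (the paper's Lemma \ref{lem: bravo combo 2.11 3.1}), i.e., exhibiting a decreasing convex majorant $f$ with $\omega_{k-1}\le(1-\alpha_k)f(\tau_k)$ and evaluating $\int f(x)/\sqrt{\tau_n-x}\,\diff x$. Likewise the trailing term $2\norm{U_n}$ carries no $\sigma$ factor, so the final rate can never beat $\E\norm{U_n}$ itself; one must check separately that $\tau_n\sqrt{\alpha_{n+1}}=\fO(1/\sqrt{\tau_n})$, which is another place the lower bound on $b$ is consumed. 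You have already computed every decay rate needed to run this argument --- the repair is to route them through the convolution lemma rather than the ``uniformly bounded'' shortcut.
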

\begin{proof}
Considering the sequence $z_n \doteq x_n - U_n$ we have,
\begin{align}
    \norm{x_n - h\qty(x_n)}&\leq \norm{z_n - h\qty(z_n)} + 2\norm{z_n - x_n} , \\
    &= \norm{z_n - h\qty(z_n)} + 2\norm{U_n}.
\end{align}
where the inequality holds due to the non-expansivity of $h$ as proven in \eqref{eq:h nonexpansive}.
Then, our proof of Theorem \ref{thm:main_conv_result} guarantees the conditions under which the $\qty{z_n}$ is bounded. Specifically, we proved in Lemma \ref{lem: applying bravo 2.1} that if $\sum_{n=1}^{\infty}\alpha_k \norm{U_{k-1}} < \infty$ \eqref{eq:sum_Uk_converges} and $\norm{U_n} \rightarrow 0$ \eqref{eq:limit_to_0} almost surely, 
then a result from \citet{bravo2024stochastic} (included as Lemma \ref{lem:bravo_2.1} for completeness)
can be invoked to bound $\norm{z_n -h(z_n)}$. 
Specifically,
by identifying $e_k= U_{k-1}$ in Lemma~\ref{lem:bravo_2.1},
we get
\begin{align}
    &\norm{x_n - h\paren{x_n}} \\
    &\leq  \zeta_{\ref{lem:bravo_2.1}} \sigma\qty(\tau_n) +\sum_{k=2}^n 2\alpha_k \sigma\paren{\tau_n - \tau_k}\norm{U_{k-1}} + 4\norm{U_n}. \label{eq: asymptotic rate}
\end{align}
for $\zeta_{\ref{lem:bravo_2.1}} = 2 \text{dist}(x_0, \fX_*)+\sum_{k=2}^\infty \alpha_k\norm{U_{k-1}}$. However, $\zeta_{\ref{lem:bravo_2.1}}$ is a sample-path dependent constant whose order is unknown, and the random sequence $\norm{U_n}$ may occasionally become very large. Therefore, we compute the non-asymptotic error bound of the expected residuals $\E\left[\norm{x_n - h(x_n)}\right]$, which gives,
\begin{align}
    &\E\qty[\norm{x_n - h\paren{x_n}}] \leq  \underbrace{\E \qty[\zeta_{\ref{lem:bravo_2.1}}] \sigma\qty(\tau_n)}_{R_1} \\ &\quad + \underbrace{\sum_{k=2}^n 2\alpha_k \sigma\qty(\tau_n - \tau_k)\E\qty[\norm{U_{k-1}}]}_{R_2} + \underbrace{4\E\qty[\norm{U_n}]}_{R_3}. \label{eq: expected rate}
\end{align}

Recalling that $\sigma(y) \doteq \min\qty{1, 1/\sqrt{\pi y}}$, we can see that if there exists a deterministic constant $C_{\ref{thm:conv_rate}}$ such that $\E \qty[\zeta_{\ref{lem:bravo_2.1}}]\leq C_{\ref{thm:conv_rate}}$, we obtain that $R_1 = \fO\qty(1 / \sqrt{\tau_n})$. Therefore, in order to prove the Theorem, it is sufficient to find such a constant $C_{\ref{thm:conv_rate}}$ such that $\E \qty[\zeta_{\ref{lem:bravo_2.1}}]\leq C_{\ref{thm:conv_rate}}$, and prove that $R_2$, and $R_3$ are also $\fO\qty(1 / \sqrt{\tau_n})$. 

We proceed by first upper-bounding $R_3$, i.e., $\E\qty[\norm{U_n}]$. 
Taking the expectation of \eqref{eq:Un_norm_bar_bounds}, we have,
\begin{align}
    &\E \qty[\norm{U_n}] \\
    \leq& \E\qty[\norm{\Mb_n}] + \E\qty[\norm{\eb{1}_n}]+\E\qty[\norm{\eb{2}_n}] + \E\qty[\norm{\eb{3}_n}] \\
    \leq& C_{\ref{lem: M rate}} \tau_n\sqrt{\alpha_{n+1}} + \sum_{i=1}^n \alpha_{i,n}\E\qty[\norm{\e{1}_i}]+C_{\ref{lem: e2 rate}}\alpha_n\tau_n \\
    &\quad +  C_{\ref{lem: e3 rate}} \alpha_n \sum_{i=1}^n \alpha_i \tau_i \quad \text{(Corollaries \ref{lem: M rate}, \ref{lem: e2 rate}, \ref{lem: e3 rate})}\\
    \doteq& \omega_n \label{eq: expected un bound}
\end{align}
It can be shown (Lemma \ref{lem: E Un bound order}) that $\omega_n = \fO(\tau_n\sqrt{\alpha_{n+1}})$, which is dominated by $1/\sqrt{\tau_n}$.

For $R_2$, 
Lemma \ref{lem: bravo combo 2.11 3.1} proves, similarly to Theorems 2.11 and 3.1 of \citet{bravo2024stochastic}, that $R_2=\fO\qty(1 / \sqrt{\tau_n})$.

For $R_1$.
We first observe that
\begin{align}
    \sum_{k=2}^\infty \alpha_k \E \qty[\norm{U_{k-1}}] \leq \sum_{k=2}^\infty \alpha_k \omega_{k-1} = \fO\qty(\sum_{k=2}^\infty \alpha_k^{3/2}\tau_{k-1}),
\end{align}
which is finite by Lemma \ref{lem:lr}.
It is then obvious to see that there exists a $C_{\ref{thm:conv_rate}}$ such that $\E \qty[\zeta_{\ref{lem:bravo_2.1}}]\! = 2\text{dist}(x_0, \fX_*) + \sum_{k=2}^\infty \alpha_k \E \qty[\norm{U_{k-1}}] \leq C_{\ref{thm:conv_rate}}$,
which completes the proof.

 
\end{proof}

\begin{remark}
    While the convergence rate is relatively slow, especially compared to the discounted setting (e.g., \citet{chen2021lyapunov}), it matches the rate in the i.i.d. noise case for nonexpansive operators \citep{bravo2024stochastic}. This slow rate is inherent due to the nonexpansive nature of $h$ \citep{cominetti2014rate} and is not a limitation of our analysis.
\end{remark}

\section{Application in Average Reward Temporal Difference Learning}\label{sec:RL}
In this section, we provide the first proof of almost sure convergence to a fixed point for average reward TD in its simplest tabular form. Remarkably, this convergence result has remained unproven for over 25 years despite the algorithm’s fundamental importance and simplicity.  

\subsection{Reinforcement Learning Background}
In reinforcement learning (RL), we consider a Markov Decision Process (MDP; \citet{bellman1957markovian,puterman2014markov}) with a finite state space $\fS$,
a finite action space $\fA$,
a reward function $r: \fS \times \fA \to \R$,
a transition function $p: \fS \times \fS \times \fA \to [0, 1]$,
an initial distribution $p_0: \fS \to [0, 1]$.
At time step $0$,
an initial state $S_0$ is sampled from $p_0$.
At time $t$,
given the state $S_t$,
the agent samples an action $A_t \sim \pi(\cdot | S_t)$, 
where $\pi: \fA \times \fS \to [0, 1]$ is the policy being followed by the agent.
A reward $R_{t+1} \doteq r(S_t, A_t)$ is then emitted and the agent proceeds to a successor state $S_{t+1} \sim p(\cdot | S_t, A_t)$. 
In the rest of the paper, we will assume the Markov chain $\qty{S_t}$ induced by the policy $\pi$ is irreducible and thus adopts a unique stationary distribution $d_\mu$.
The average reward (a.k.a. gain, \citet{puterman2014markov}) is defined as
$\bar{J}_{\pi} \doteq \lim_{T\rightarrow \infty} \frac{1}{T}\sum_{t=1}^T \E\left[R_t\right].$
Correspondingly,
the differential value function (a.k.a. bias, \citet{puterman2014markov}) is defined as 
\begin{align}
     v_\pi(s) \doteq \lim_{T\to\infty} \frac{1}{T} \sum_{\tau=1}^T \E\left[ \sum_{i=1}^{\tau} (R_{t+i} - \bar{J}_{\pi}) \mid S_t = s\right].
\end{align}
The corresponding Bellman equation (a.k.a. Poisson's equation) is then
\begin{align}
    \label{eq bellman equation}
    v = r_\pi -\bar{J}_{\pi}e + P_\pi v ,
\end{align} 
where $v \in \R^\ns$ is the free variable,
$e$ denotes an all-one vector,
$r_\pi \in \R^\ns$ is the reward vector induced by the policy $\pi$, i.e., $r_\pi(s) \doteq \sum_a \pi(a|s) r(s, a)$, and $P_\pi \in \R^{\ns \times \ns}$ is the transition matrix induced by the policy $\pi$, i.e., $P_\pi(s, s') \doteq \pi(a|s)p(s'|s, a)$. 
It is known \citep{puterman2014markov} that all solutions to~\eqref{eq bellman equation} form a set 
\begin{align}
    \label{eq fixed points}
    \fV_* \doteq \qty{v_\pi + ce \mid c \in \R}.
\end{align}

The policy evaluation problem in average reward MDPs is to estimate $v_\pi$,
perhaps up to a constant offset $ce$.

\subsection{Average Reward Temporal Difference Learning}
\label{sec:avg reward td}  
Temporal Difference learning (TD; \citet{sutton1988learning}) is a foundational algorithm in RL \citep{sutton2018reinforcement}. Inspired by its success in the discounted setting, \citet{tsitsiklis1999average} proposed using the update rule \eqref{eq: avg td update} to estimate $v_\pi$ (up to a constant offset) for average reward MDPs. The updates are given by:  
\begin{align}
    \label{eq: avg td update}
    J_{t+1}\!&= J_t + \beta_{t+1} (R_{t+1} - J_t), \tag{Average Reward TD} \\ 
    v_{t+1}(S_t)\!&= v_t(S_t)\!+\! \alpha_{t+1}\!(R_{t+1} - J_t + v_t(S_{t+1}) - v_t(S_t)),
\end{align}  
where $ \{S_0, R_1, S_1, \dots\}$ is a trajectory of states and rewards from an MDP under a fixed policy in a finite state space $\mathcal{S}$, $J_t \in \mathbb{R}$ is the scalar estimate of the average reward $\bar J_\pi $, $ v_t \in \mathbb{R}^{|\mathcal{S}|} $ is the tabular value estimate, and $\{\alpha_t, \beta_t\}$ are learning rates.

To utilize Theorem \ref{thm:main_conv_result} to prove the almost sure convergence of ~\eqref{eq: avg td update},
we first rewrite it in a compact form to match that of \eqref{eq:skm_markov}. Define the augmented Markov chain $Y_{t+1} \doteq (S_t, A_t, S_{t+1})$.
It is easy to see that $\qty{Y_t}$ evolves in the finite space
$\fY \doteq \qty{(s, a, s') \mid \pi(a|s) > 0, p(s'|s, a) > 0}$.
We then define a function $H: \R^\ns \times \fY \to \R^\ns$ by defining the $s$-th element of $H(v, (s_0, a_0, s_1))$ as
\begin{align}
    &H(v, (s_0, a_0, s_1))[s] \doteq \\ &\quad \mathbb{I}_{\qty{s = s_0}} (r(s_0, a_0) - \bar{J}_{\pi}
    + v(s_1) - v(s_0))+ v(s).
\end{align}
Then, the update to $\qty{v_t}$ in~\eqref{eq: avg td update} can then be expressed as
\begin{align}
    \label{eq avg td update compact}
    v_{t+1} = v_t + \alpha_{t+1} \qty(H(v_t, Y_{t+1}) - v_t + \epsilon_{t+1}).
\end{align}
Here, $\epsilon_{t+1} \in \R^\ns$ is the random noise vector defined as
$\epsilon_{t+1}(s) \doteq \mathbb{I}\qty{s = S_t} (J_t - \bar J_\pi)$.
This $\epsilon_{t+1}$ is the current estimate error of the average reward estimator $J_t$.
Intuitively, the indicator $\mathbb{I}\qty{s = S_t}$ reflects the asynchronous nature of~\eqref{eq: avg td update}.
For each $t$, 
only the $S_t$-indexed element in $v_t$ is updated.

Throughout the rest of the section, we utilize the following assumption.
\begin{assumption}[Ergodicity]
    \label{as:rl_chain}
    Both $\fS$ and $\fA$ are finite.
    The Markov chain $\qty{S_t}$ induced by the policy $\pi$ is aperiodic and irreducible.
\end{assumption}
\begin{theorem} \label{thm:avg_rew_td}
Let Assumption \ref{as:rl_chain} hold. 
Consider the learning rates in the form of $\alpha_t = \frac{1}{(t+1)^b}, \beta_t = \frac{1}{t}$ with $b \in (\frac{4}{5}, 1]$.
Then the iterates $\qty{v_t}$ generated by~\eqref{eq: avg td update} satisfy
\begin{align}
    \lim_{t\to\infty}{v_t} = v_* \qq{a.s.,}
\end{align}
where $v_* \in \fV_*$ is a possibly sample-path dependent fixed point.
\end{theorem}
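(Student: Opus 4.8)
The plan is to reduce the convergence of \eqref{eq: avg td update} to an application of Theorem \ref{thm:main_conv_result}, using the compact reformulation \eqref{eq avg td update compact} that is already set up in the excerpt. To do this I must verify that the five structural assumptions (Assumptions \ref{as:steadystate}--\ref{as:e1}) are satisfied with $x_n = v_t$, the Markov chain $Y_{t+1} = (S_t, A_t, S_{t+1})$, the operator $H$ defined componentwise above, and the additive noise $\e{1}_{t+1} = \epsilon_{t+1}$ with $\epsilon_{t+1}(s) = \mathbb{I}\{s = S_t\}(J_t - \bar J_\pi)$. Once all assumptions are checked, Theorem \ref{thm:main_conv_result} gives $v_t \to v_*$ a.s.\ for some $v_* \in \fX_*$, and the last thing to do is identify $\fX_*$, the fixed point set of $h(v) = \E_{y \sim d_\mu}[H(v,y)]$, with $\fV_*$ from \eqref{eq fixed points}.

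First, the ergodicity assumption (Assumption \ref{as:steadystate}) for $\{Y_t\}$: since $\{S_t\}$ is irreducible and aperiodic by Assumption \ref{as:rl_chain} and $A_t$ is drawn i.i.d.\ from $\pi(\cdot|S_t)$, the augmented chain $\{(S_t, A_t, S_{t+1})\}$ on the finite space $\fY$ inherits irreducibility and aperiodicity; its stationary distribution is $d_\mu(s)\pi(a|s)p(s'|s,a)$. Second, the $1$-Lipschitz property (Assumption \ref{as:H 1 Lipschitz}) w.r.t.\ $\norm{\cdot}_\infty$: for fixed $y = (s_0,a_0,s_1)$, $H(v,y) - H(v',y)$ has $s$-th component $(v-v')(s) + \mathbb{I}\{s=s_0\}\big((v(s_1)-v(s_0)) - (v'(s_1)-v'(s_0))\big)$; for $s \ne s_0$ this is $(v-v')(s)$, and for $s = s_0$ it telescopes to $(v-v')(s_1)$, so every component of $H(v,y)-H(v',y)$ equals some component of $v - v'$, giving $\norm{H(v,y)-H(v',y)}_\infty \le \norm{v-v'}_\infty$. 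Third, the expected operator is $h(v) = v + D_\mu(r_\pi - \bar J_\pi e + P_\pi v - v)$ where $D_\mu = \mathrm{diag}(d_\mu)$ is invertible (all $d_\mu(s)>0$), so $h(v) = v$ iff $v = r_\pi - \bar J_\pi e + P_\pi v$, i.e.\ iff $v$ solves the Bellman equation \eqref{eq bellman equation}; hence $\fX_* = \fV_* \neq \emptyset$, verifying Assumption \ref{as: fixed points nonempty} and simultaneously handling the final identification step. The learning-rate assumption (Assumption \ref{as:lr}) holds by hypothesis since $\alpha_t = (t+1)^{-b}$ with $b \in (\frac45,1]$.

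The main obstacle is verifying the additive-noise assumption (Assumption \ref{as:e1}) for $\epsilon_{t+1}$, which requires controlling the error $J_t - \bar J_\pi$ of the \eqref{eq: avg td update} scalar recursion $J_{t+1} = J_t + \beta_{t+1}(R_{t+1} - J_t)$ with $\beta_t = 1/t$. Since $\norm{\epsilon_{t+1}}_\infty \le |J_t - \bar J_\pi|$, it suffices to show $\sum_t \alpha_t |J_t - \bar J_\pi| < \infty$ a.s.\ and $\E[|J_t - \bar J_\pi|^2] = \fO(1/t)$. The $J_t$ recursion is itself a linear stochastic approximation with Markovian noise $\{R_{t+1}\} = \{r(S_t,A_t)\}$ driven by the ergodic chain $\{(S_t,A_t)\}$ and target $\bar J_\pi = \E_{d_\mu}[r(S,A)]$; with $\beta_t = 1/t$ one has the classical rate $\E[(J_t - \bar J_\pi)^2] = \fO(1/t)$ (e.g.\ via a Poisson-equation decomposition of $R_{t+1} - \bar J_\pi$ into a martingale difference plus telescoping terms, exactly as in Section \ref{sec:SA}; the boundedness of $r$ on the finite space makes all constants explicit). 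The square-summability-type bound $\sum_t \alpha_t |J_t - \bar J_\pi| < \infty$ a.s.\ then follows since $\sum_t \alpha_t \E[|J_t - \bar J_\pi|] \le \sum_t \alpha_t \sqrt{\E[(J_t-\bar J_\pi)^2]} = \fO\big(\sum_t t^{-b} t^{-1/2}\big) < \infty$ because $b + 1/2 > 1$, and a nonnegative random series with finite expectation is finite a.s. Having verified all of Assumptions \ref{as:steadystate}--\ref{as:e1} and identified $\fX_* = \fV_*$, Theorem \ref{thm:main_conv_result} applied to \eqref{eq avg td update compact} yields $\lim_{t\to\infty} v_t = v_* \in \fV_*$ a.s., completing the proof.
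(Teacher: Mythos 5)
Your proposal is correct and follows the paper's overall strategy — reduce \eqref{eq: avg td update} to the compact form \eqref{eq avg td update compact} and verify Assumptions \ref{as:steadystate}--\ref{as:e1} so that Theorem \ref{thm:main_conv_result} applies — but two of your verifications take genuinely different routes, and both are defensible. First, for Assumption \ref{as:e1}: the paper verifies \eqref{as: total noise} by viewing the $J_t$ recursion as linear TD with $\gamma=0$ and invoking an almost-sure (law-of-iterated-logarithm type) rate $\abs{J_t - \bar J_\pi} = \fO\qty(\sqrt{\ln\ln t}/\sqrt{t})$ from \citet{tadic2002almost}, and separately verifies \eqref{as: second moment} by citing the $L^2$ rate of \citet{srikant2019finite}. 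You instead derive \eqref{as: total noise} directly from the $L^2$ rate: $\sum_t \alpha_t \E\qty[\abs{J_t - \bar J_\pi}] = \fO\qty(\sum_t t^{-b-1/2}) < \infty$ since $b + \tfrac12 > 1$, and a nonnegative series with finite expectation is a.s.\ finite. This is more elementary, makes the citation of the almost-sure rate unnecessary, and in fact shows that \eqref{as: total noise} is already implied by \eqref{as: second moment} and Assumption \ref{as:lr} (the paper's own Lemma \ref{lem: e1 lr}, equation \eqref{eq: e1 lr ake1}, contains exactly this computation, though the paper does not use it for this purpose). Your sketch of the $L^2$ rate itself via a Poisson-equation decomposition is only a sketch, but since $J_t$ is just the running average of rewards along an ergodic finite chain, the $\fO(1/t)$ variance bound is standard; the paper handles this by citation as well. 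Second, you explicitly compute $h(v) = v + D_\mu\qty(r_\pi - \bar J_\pi e + P_\pi v - v)$ and use invertibility of $D_\mu$ to identify $\fX_* = \fV_*$. The paper only remarks that $\fV_*$ is nonempty and leaves the identification of the fixed-point set of $h$ with $\fV_*$ implicit, even though the theorem's conclusion $v_* \in \fV_*$ requires it; your explicit argument closes that small gap. The remaining steps (ergodicity of the augmented chain, the $1$-Lipschitz case analysis, the learning-rate check) coincide with the paper's.
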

\begin{proof}
    We proceed via verifying assumptions of Theorem~\ref{thm:main_conv_result}. 
    In particular,
    we consider the compact form~\eqref{eq avg td update compact}.

Under Assumption~\ref{as:rl_chain}, it is obvious that $\qty{Y_t}$ is irreducible and aperiodic and adopts a unique stationary distribution.
    
    To verify Assumption \ref{as:H 1 Lipschitz}, we demonstrate that $H$ is $1-$Lipschitz in $v$ w.r.t $\norm{\cdot}_\infty$. For notation simplicity, let $y=(s_0, a_0, s_1)$. Separating by cases based on $s$, we have
    \begin{align}
    \abs{H(v,y)[s] - H(v',y)[s]}
    &=
    \begin{cases}
    \abs{v(s)-v'(s)}, & s\neq s_0,\\
    \abs{v(s_1)-v'(s_1)}, & s=s_0,
    \end{cases}
    \end{align}
    and in both cases the right side is at most $\|v - v'\|_\infty$. Thus,
    \begin{align}
    \norm{H(v,y)-H(v',y)}_\infty
    &= \max_{s\in\fS} \abs{H(v,y)[s]-H(v',y)[s]} \\
    &\le \norm{v - v'}_\infty.
    \end{align}
    It is well known that the set of solutions to Poisson's equation $\fV_*$ defined in \eqref{eq fixed points} is non-empty \citep{puterman2014markov}, verifying Assumption \ref{as: fixed points nonempty}.
    Assumption \ref{as:lr} is directly met by the definition of $\alpha_t$.
    
    To verify Assumption \ref{as:e1}, we first notice that for~\eqref{eq: avg td update}, we have
        $\norm{\e{1}_t}_\infty = \abs{\bar{J}_{\pi} - J_t}$.
    It is well-known from the ergodic theorem that $J_t$ converges to $\bar J_\pi$ almost surely.
    Assumption~\ref{as:e1}, however, requires both an almost sure convergence rate and an $L^2$ convergence rate.
    To this end, we rewrite the update of $\qty{J_t}$ as
    \begin{align}
        J_{t+1} = J_t + \beta_{t+1} \left(R_{t+1} + \gamma J_t \phi(S_{t+1}) - J_t \phi(S_t) \right) \phi(S_t),
    \end{align}
    where we define $\gamma \doteq 0$ and $\phi(s) \doteq 1 \, \forall s$.
    It is now clear that the update of $\qty{J_t}$ is a special case of linear TD in the discounted setting \citep{sutton1988learning}.
    Given our choice of $\beta_t = \frac{1}{t}$,
    the general result about the almost sure convergence rate of linear TD (Theorem~1 of \citet{tadic2002almost}) ensures that
    \begin{align}
        \abs{J_t - \bar J_\pi} \leq  \frac{\zeta_{\ref{thm:avg_rew_td}} \sqrt{\ln\ln t}}{\sqrt{t}} \qq{a.s.,}
    \end{align}
    where $\zeta_{\ref{thm:avg_rew_td}}$ is a sample-path dependent constant.
    This immediately verifies~\eqref{as: total noise}.
    We do note that this almost sure convergence rate can also be obtained via a law of the iterated logarithm for Markov chains (Theorem 17.0.1 of \citet{meyn2012markov}).
    The general result about the $L^2$ convergence rate of linear TD (Theorem~11 of \citet{srikant2019finite}) ensures that
    \begin{align}
        \E\qty[\abs{J_t - \bar J_\pi}^2] = \textstyle \fO\qty(\frac{1}{t}).
    \end{align}
    This immediately verifies~\eqref{as: second moment} and completes the proof.
    \end{proof}

\begin{remark}
    The convergence rate we established in Theorem~\ref{thm:conv_rate} also applies directly to the update in~\eqref{eq: avg td update}, and yields a bound on the expected residuals. However, this rate does not improve upon the existing result in \citet{zhang2021finite}, and thus we omit it here. A further discussion on the significance of Theorem \ref{thm:avg_rew_td} in comparison to the results in \citet{zhang2021finite} is deferred to the subsequent section.
\end{remark}

\subsection{Significance of Theorem 4.2}

Since \eqref{eq: avg td update} has been previously studied, we highlight the significance of Theorem~\ref{thm:avg_rew_td}, which provides the first proof of almost sure convergence of \eqref{eq: avg td update} to a (possibly sample-path dependent) fixed point in the tabular setting.

\citet{tsitsiklis1999average} proves the almost sure convergence for linear function approximation, where $v(s)$ is approximated by $\phi(s)^\top w$ with feature matrix $\Phi \in \R^{\ns \times K}$. This setting reduces to the tabular case when $\Phi = I$. However, their result requires assumptions like linear independence of $\Phi$'s columns and $\Phi w \neq c e$ for any scalar $c$.
The latter unfortunately does not hold in the tabular case (e.g., $I e = e$). 
With a non-trivial construction of $\Phi$, 
it is possible to adapt their result to show that the $\qty{v_t}$ in~\eqref{eq: avg td update} converge almost surely to some (possibly sample-path dependent) subset of $\fV_*$.
Even so, it is not clear whether $\qty{v_t}$ itself converges.
It is possible that $\qty{v_t}$ oscillates inside or around $\fV_*$.
Our result rules out this possibility by showing that on every sample-path $\qty{v_t}$ must converge to a single fixed point, although different sample-paths may converge to different fixed points.

\citet{zhang2021finite} later established $L^2$ convergence for the linear case without requiring $\Phi w \neq c e$, and derived convergence rates. However, $L^2$ convergence does not imply almost sure convergence, and even if one could strengthen their result to almost sure convergence, it would still only guarantee convergence to a set rather than a fixed point.

\citet{chen2025non} studies average reward TD using a seminorm contraction argument and show that the seminorm distance of the iterates to the fixed point set converges to zero. This does not imply convergence of the iterates themselves, since distinct points can have zero seminorm distance, so oscillations within $\fV_*$ are not ruled out. Theorem~\ref{thm:avg_rew_td} provides a stronger result by proving almost sure convergence of the iterates to a fixed point.

\section{Related Work}\label{sec: related}
\paragraph{ODE and Lyapunov Methods for Asymptotic Convergence} 
A large body of research has employed ODE-based methods to establish almost sure convergence of SA algorithms \citep{DBLP:books/sp/BenvenisteMP90, kushner2003stochastic, borkar2009stochastic}. These methods typically begin by proving the stability of the iterates $\qty{x_n}$ (i.e., $\sup_n \norm{x_n} < \infty$). \citet{abounadi2002stochastic} use this ODE method to study the convergence of SKM iterations, but they require the additive noise sequence to be uniformly bounded, and that the set of fixed points of the nonexpansive map be a singleton to prove the stability of the iterates.

The ODE@$\infty$ technique \citep{borkar2000ode,borkar2021ode,meyn2024projected,liu2024ode} is a powerful stability technique in RL. 
If the so-called ``ODE@$\infty$ is globally asymptotically stable,
existing results such as \citet{meyn2022control, borkar2021ode,liu2024ode} can be used to establish the desired stability of $\qty{x_t}$. However, if we consider a generic non-expansive operator $h$ which may admit multiple fixed points or induce oscillatory behavior, we cannot guarantee the global asymptotic stability of the ODE@$\infty$ without additional assumptions. This limits the utility of the ODE@$\infty$ method in analyzing \eqref{eq:skm_markov}. 

In addition to ODE methods, there are other works that use Lyapunov methods such as 
\citep{bertsekas1996neuro,konda2000actor,srikant2019finite,borkar2021ode,chen2021lyapunov,DBLP:journals/corr/abs-2111-02997,zhang2023convergence} to provide asymptotic and non-asymptotic results of various RL algorithms. Both the ODE and Lyapunov based methods are distinct from the fox-and-hare based approach for \eqref{eq km update} with additive noise introduced by \citep{cominetti2014rate} upon which our work is built.

\paragraph{Average Reward RL} 
The~\eqref{eq: avg td update} algorithm introduced by \citet{tsitsiklis1999average} is the most fundamental policy evaluation algorithm in average reward settings. In addition to the tabular setting we study here, \eqref{eq: avg td update} has also been extended to linear function approximation \citep{tsitsiklis1999average,konda2000actor,wu2020finite,zhang2021finite}.  

Furthermore, the~\eqref{eq: avg td update} algorithm has inspired the design of many other TD algorithms for average reward MDPs,
for both policy evaluation and control,
including \citet{konda2000actor, yang2016efficient, wan2021average, zhang2021policy, wan2020learning, he2022emphatic, saxena2023off}.
Because the operators in the average reward setting are not contractive, we envision that our work will shed light on the almost sure convergence of these follow-up algorithms.

\section{Conclusion}
In this work, we provide the first proof of almost sure convergence as well as non-asymptotic finite sample analysis of stochastic approximations under nonexpansive maps with Markovian noise. As an application, we provide the first proof of almost sure convergence of \eqref{eq: avg td update} to a potentially sample-path dependent fixed point. This result highlights the underappreciated strength of SKM iterations, a tool whose potential is often overlooked in the RL community. Addressing several follow-up questions could open the door to proving the convergence of many other RL algorithms. Do SKM iterations converge in $L^p$? Do they follow a central limit theorem or a law of the iterated logarithm? Can they be extended to two-timescale settings? Resolving these questions could pave the way for significant advancements in RL theory. We leave them for future investigation.

\section{Acknowledgments}
EB acknowledges support from the NSF Graduate Research Fellowship (NSF-GRFP) under award 1842490. This work is also supported in part by the US National Science Foundation under the awards III-2128019, SLES-2331904, and CAREER-2442098, the Commonwealth Cyber Initiative's Central Virginia Node under the award VV-1Q26-001, a Cisco Faculty Research Award, and an Nvidia academic grant program award.

\bibliography{bibliography}
\clearpage

\appendix
\onecolumn

\section{Mathematical Background}\label{app: math background}

\begin{lemma}[Theorem 2.1 from \citet{bravo2024stochastic}] \label{lem:bravo_2.1}
Let $\qty{z_n}$ be a sequence generated by~\eqref{eq inexact km update}. 
\begin{align}
    \label{eq inexact km update}
    z_{t+1} = z_t + \alpha_{t+1} (T z_t - z_t + e_{t+1}),
    \tag{IKM}
\end{align}
Let $\text{Fix}(T)$ denote the set of fixed points of $T$ (assumed to be nonempty).
Additionally, let $\tau_n$ be defined according to \eqref{eq:tau_n_def} and the real function $\sigma: (0,\infty) \rightarrow (0,\infty)$ as
\begin{align}
    \sigma(y) = \min\qty{{1,1 / \sqrt{\pi y}}}.
\end{align}

If $\zeta_{\ref{lem:bravo_2.1}} \geq 0$ is such that $\norm{Tz_n -x_0} \leq \zeta_{\ref{lem:bravo_2.1}}$  for all $n\geq1$, then
    \begin{equation}
        \norm{z_n - Tz_n} \leq \zeta_{\ref{lem:bravo_2.1}} \sigma\paren{\tau_n} + \sum_{k=1}^n 2\alpha_k\norm{e_k}\sigma\paren{\tau_n-\tau_k} + 2 \norm{e_{n+1}}. \label{eq:bravo_2.1}
    \end{equation}
Moreover, if $\tau_n\rightarrow \infty$ and $\norm{e_{n}}\rightarrow 0$ with $S\doteq \sum_{n=1}^\infty \alpha_n \norm{e_n} < \infty$, then \eqref{eq:bravo_2.1} holds with $\zeta_{\ref{lem:bravo_2.1}} = 2\inf_{x \in \text{Fix}(T)}\norm{x_0 - x}+S$, and we have $\norm{z_n - Tz_n} \rightarrow 0$ as well as $z_n \rightarrow x_*$ for some fixed point $x_*\in \text{Fix}(T)$
\end{lemma}

\begin{lemma}[Monotonicity of $\alpha_{k,n}$ from Lemma B.1 in \citet{bravo2024stochastic}] \label{lem:bravo b1}
    For $\alpha_n = \frac{1}{\paren{n+1}^b}$ with $0<b\leq1$ and $\alpha_{i,n}$ in \eqref{eq:alpha_in_define}, we have $\alpha_{k,n} \leq \alpha_{k+1,n}$ for $k\geq 1$ so that $\alpha_{k+1,n}\leq \alpha_{n,n} = \alpha_n$.
\end{lemma}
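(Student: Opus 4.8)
The plan is to reduce the monotonicity claim to a single clean scalar inequality by exploiting the multiplicative (telescoping) structure of $\alpha_{k,n}$. Fixing $n$ and any index $1 \le k < n$, I would first form the ratio of consecutive terms. Since $\alpha_{k,n} = \alpha_k \prod_{j=k+1}^n \paren{1-\alpha_j}$ and $\alpha_{k+1,n} = \alpha_{k+1}\prod_{j=k+2}^n \paren{1-\alpha_j}$, all but one factor of the product cancels, leaving
\begin{align}
    \frac{\alpha_{k,n}}{\alpha_{k+1,n}} = \frac{\alpha_k \paren{1-\alpha_{k+1}}}{\alpha_{k+1}}.
\end{align}
Because every factor $1-\alpha_j \in (0,1)$ and the step sizes are positive, both $\alpha_{k,n}$ and $\alpha_{k+1,n}$ are strictly positive, so the desired inequality $\alpha_{k,n} \le \alpha_{k+1,n}$ is equivalent to the pointwise condition $\alpha_k\paren{1-\alpha_{k+1}} \le \alpha_{k+1}$.

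Next I would substitute the explicit form $\alpha_m = \frac{1}{(m+1)^b}$. Clearing denominators by multiplying through by the positive quantity $(k+1)^b (k+2)^b$ turns the pointwise condition into $(k+2)^b - 1 \le (k+1)^b$, i.e.
\begin{align}
    (k+2)^b - (k+1)^b \le 1.
\end{align}
This is the heart of the argument and the only place where the hypothesis $0 < b \le 1$ is used. I would establish it from the concavity of $t \mapsto t^b$ on $(0,\infty)$ for $b \le 1$: the increments of a concave function over a fixed window are non-increasing, so $(k+2)^b - (k+1)^b \le 1^b - 0^b = 1$. Equivalently, by the mean value theorem $(k+2)^b - (k+1)^b = b\,\xi^{b-1}$ for some $\xi \in (k+1,k+2)$, and $b \le 1$ together with $\xi > 1$ (so that $\xi^{b-1}\le 1$) gives $b\,\xi^{b-1} \le b \le 1$.

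Finally I would assemble the pieces. The scalar bound yields $\alpha_{k,n} \le \alpha_{k+1,n}$ for every $1 \le k < n$, which is exactly the claimed monotonicity in $k$. Chaining these inequalities up to the top index, and noting that the empty product gives $\alpha_{n,n} = \alpha_n \prod_{j=n+1}^n\paren{1-\alpha_j} = \alpha_n$, delivers $\alpha_{k+1,n} \le \alpha_{n,n} = \alpha_n$, completing the statement. I expect no serious obstacle: the entire difficulty is concentrated in the one-line concavity bound $(k+2)^b - (k+1)^b \le 1$, while the surrounding steps are routine algebraic simplification of the telescoping product.
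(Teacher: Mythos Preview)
Your proof is correct. The paper does not supply its own proof of this lemma; it is quoted verbatim from \citet{bravo2024stochastic} in the Mathematical Background appendix, so there is no in-paper argument to compare against. The route you take---forming the ratio $\alpha_{k,n}/\alpha_{k+1,n} = \alpha_k(1-\alpha_{k+1})/\alpha_{k+1}$, reducing to the scalar inequality $(k+2)^b - (k+1)^b \le 1$, and dispatching that by concavity of $t \mapsto t^b$ for $0 < b \le 1$---is exactly the natural one and is essentially how the source \citet{bravo2024stochastic} handles it as well.
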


\begin{lemma}[Lemma B.2 from \cite{bravo2024stochastic}] \label{lem:bravo_b.2}
    For $\alpha_n = \frac{1}{\paren{n+1}^b}$ with $0<b\leq1$ and $\alpha_{i,n}$ in \eqref{eq:alpha_in_define}, we have $\sum_{k=1}^n \alpha_{k,n}^2 \leq \alpha_{n+1}$ for all $n\geq 1$.
\end{lemma}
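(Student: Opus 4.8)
The plan is to collapse the two-index quantity into a scalar recursion. Put $s_n \doteq \sum_{k=1}^n \alpha_{k,n}^2$, with $s_0 \doteq 0$. Since $\alpha_{k,n} = (1-\alpha_n)\,\alpha_{k,n-1}$ for $k \le n-1$ while $\alpha_{n,n} = \alpha_n$, squaring and summing gives
\begin{align}
    s_n = (1-\alpha_n)^2\, s_{n-1} + \alpha_n^2 ,
\end{align}
so the lemma is equivalent to the scalar claim $s_n \le \alpha_{n+1}$ for all $n \ge 1$, which I would attack by induction on $n$, the base case $s_1 = \alpha_1^2 = 4^{-b} \le 3^{-b} = \alpha_2$ being immediate.

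For the inductive step, feeding $s_{n-1} \le \alpha_n$ into the recursion yields $s_n \le (1-\alpha_n)^2\alpha_n + \alpha_n^2 = \alpha_n(1 - \alpha_n(1-\alpha_n))$, so one would want $\alpha_n(1 - \alpha_n(1-\alpha_n)) \le \alpha_{n+1}$, i.e.\ $\tfrac{\alpha_{n+1}}{\alpha_n} = (1 - \tfrac1{n+2})^b \ge 1 - \alpha_n(1-\alpha_n)$, with $\alpha_n = (n+1)^{-b}$. The relevant elementary facts are $(1-t)^b \ge 1-t$ and $(1-t)^{-(1-b)} \ge 1+(1-b)t$ for $t \in [0,1)$, $b \in (0,1]$ (monotonicity and convexity of power maps), which combine to $(1-t)^b \ge 1 - bt - t^2$; when $b$ is bounded away from $1$ this closes the step after a routine comparison of powers of $n+1$ and $n+2$. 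I expect the main obstacle to be that the bound $s_n \le \alpha_{n+1}$ is \emph{tight} as $b \to 1$, so the crude substitution $s_{n-1} \le \alpha_n$ is exactly too lossy there.

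Concretely, at $b=1$ the product telescopes: $\alpha_{k,n} = \tfrac1{n+1}$ for every $k \le n$, so $s_n = \tfrac{n}{(n+1)^2} = \alpha_n(1-\alpha_n)$ \emph{exactly}, which meets $\alpha_{n+1} = \tfrac1{n+2}$ with no room in the $(n+1)^{-3}$ term. To close the induction one should therefore carry a sharpened hypothesis whose $b=1$ specialization is this identity — the natural candidate is a bound of the form $s_n \le \alpha_n(1-\alpha_n)$ (or $s_n \le \alpha_{n+1}(1-\alpha_n^2)$) — verify that it, or a lower-order correction of it, is propagated by the recursion, and conclude via the clean inequality $\alpha_n(1-\alpha_n) \le \alpha_{n+1}$, which follows from $(n+2)^b - (n+1)^b \le b(n+1)^{b-1}$ by concavity of $x \mapsto x^b$. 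Finitely many small indices (where $\alpha_n > \tfrac12$, which for $b \in (\tfrac45,1]$ means only $n=1$) fall outside such an invariant and are checked directly. Finally, I note that this statement is exactly Lemma B.2 of \citet{bravo2024stochastic}, so in the paper it is simply invoked rather than reproved.
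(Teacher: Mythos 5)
The paper itself does not prove this statement: it is imported verbatim as Lemma B.2 of \citet{bravo2024stochastic} and simply invoked, as you correctly note in your final sentence. So there is no in-paper argument to compare against; what can be assessed is whether your sketch would actually close. Your starting point is right and is the natural one: the recursion $s_n = (1-\alpha_n)^2 s_{n-1} + \alpha_n^2$ follows immediately from $\alpha_{k,n} = (1-\alpha_n)\alpha_{k,n-1}$ for $k\le n-1$, the base case checks out, and your diagnosis that the naive induction hypothesis $s_{n-1}\le\alpha_n$ is exactly too lossy is correct (at $b=1$ the required inequality $(1-\alpha_n)^2\alpha_n+\alpha_n^2\le\alpha_{n+1}$ fails by precisely $(n+1)^{-2}(n+2)^{-1}$). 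The exact identity $s_n=\alpha_n(1-\alpha_n)$ at $b=1$ and the concluding inequality $\alpha_n(1-\alpha_n)\le\alpha_{n+1}$ (via concavity of $x\mapsto x^b$) are both correct.

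The gap is in the inductive step for the sharpened invariant, which is asserted rather than proved. Propagating $s_{n-1}\le\alpha_{n-1}(1-\alpha_{n-1})$ through the recursion requires $(1-\alpha_n)^2\alpha_{n-1}(1-\alpha_{n-1})+\alpha_n^2\le\alpha_n(1-\alpha_n)$, which you verify only at $b=1$ (where it holds with equality, so there is no slack to absorb errors for nearby $b$); for general $b\in(0,1]$ this reduces, after dividing by $\alpha_n$, to an inequality whose right-hand side $\alpha_n(1-3\alpha_n+\alpha_n^2)$ is negative whenever $\alpha_n>(3-\sqrt{5})/2\approx 0.382$ — a weaker threshold than your stated $\alpha_n>1/2$ — so the propagation genuinely fails for more initial indices than you account for, and over the lemma's full stated range $0<b\le 1$ the number of excluded indices grows like $2.6^{1/b}$ and is not uniformly finite, so "checked directly" is not a finite verification. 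Two further loose ends: (i) after the excluded indices you must re-enter the strengthened invariant, but $s_{n_0}\le\alpha_{n_0+1}$ does not imply $s_{n_0}\le\alpha_{n_0}(1-\alpha_{n_0})$ (the latter is the stronger bound by the very inequality you use to conclude), so the bootstrap is missing; and (ii) even for $n$ past the threshold, the propagation inequality is delicate near $b=1$ and needs the exact second-order cancellation you observed, not just the leading-order comparison of $\delta\approx b/n$ against $\alpha_n$. The strategy is plausible and the key observations are sound, but as written the inductive step is not established for $b<1$, so this remains a sketch rather than a proof; for the purposes of this paper the honest route is the one the authors take, namely citing \citet{bravo2024stochastic}.
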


\begin{lemma}[Monotone Convergence Theorem from \citet{folland1999real}]\label{lem: beppo levi}
    Given a measure space $\qty(X, M, \mu)$, define $L^+$ as the space of all measurable functions from $X$ to $[0,\infty]$. Then, if $\qty{f_n}$ is a sequence in $L^+$ such that $f_j \leq f_{j+1}$ for all j, and $f= \lim_{n\rightarrow \infty} f_n$, then $\int f d\mu = \lim_{n\rightarrow \infty} \int f_n d\mu$.
\end{lemma}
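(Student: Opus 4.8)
The statement is the classical Monotone Convergence Theorem of Beppo Levi, so the plan is the standard two-sided argument. First I would check that $f$ is a legitimate member of $L^+$: since $\qty{f_n}$ is nondecreasing, the pointwise limit $f(x) = \lim_n f_n(x) = \sup_n f_n(x)$ exists in $[0,\infty]$ for every $x$, and a pointwise supremum of measurable functions is measurable, so $f \in L^+$ and $\int f \diff\mu$ is well defined. The easy inequality then follows from monotonicity of the integral on $L^+$: from $f_n \le f_{n+1} \le f$ we get that $\qty{\int f_n \diff\mu}$ is a nondecreasing sequence in $[0,\infty]$, hence convergent, with $\lim_{n\to\infty} \int f_n \diff\mu \le \int f \diff\mu$.

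The substance is the reverse inequality $\lim_{n\to\infty} \int f_n \diff\mu \ge \int f \diff\mu$. Since for $f \in L^+$ the integral is \emph{defined} as $\int f \diff\mu = \sup\qty{\int s \diff\mu : s \text{ simple},\ 0 \le s \le f}$, it suffices to fix a simple function $s$ with $0 \le s \le f$ together with a constant $c \in (0,1)$, and to show $\lim_{n\to\infty} \int f_n \diff\mu \ge c \int s \diff\mu$. Set $E_n \doteq \qty{x : f_n(x) \ge c\, s(x)}$. Because $\qty{f_n}$ is nondecreasing the sets $E_n$ increase, and because $c < 1$ while $s \le f = \lim_n f_n$ pointwise, every point of $X$ eventually lies in some $E_n$, so $\bigcup_n E_n = X$. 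Then $\int f_n \diff\mu \ge \int_{E_n} f_n \diff\mu \ge c \int_{E_n} s \diff\mu$, and since $A \mapsto \int_A s \diff\mu$ is a measure on $M$, continuity from below gives $\int_{E_n} s \diff\mu \to \int_X s \diff\mu$. Letting $n \to \infty$ and then $c \uparrow 1$ yields $\lim_{n\to\infty} \int f_n \diff\mu \ge \int s \diff\mu$; taking the supremum over all admissible $s$ gives $\lim_{n\to\infty} \int f_n \diff\mu \ge \int f \diff\mu$, and combining with the easy inequality completes the proof.

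The only real obstacle is this reverse inequality, since one cannot move the limit inside the integral directly. The idea that makes it work is the familiar device of replacing $s$ by the strictly smaller $c\, s$ and exhausting $X$ by the superlevel sets $E_n$, so that continuity from below of the measure $A \mapsto \int_A s \diff\mu$ applies, after which $c \uparrow 1$ recovers the full integral of $s$. The remaining ingredients — measurability of a pointwise supremum, monotonicity of the integral, and the supremum-over-simple-functions definition of $\int f \diff\mu$ for $f \in L^+$ — are routine and involve no estimates.
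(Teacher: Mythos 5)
Your proof is correct and is precisely the standard Beppo Levi argument from Folland (Theorem 2.14 there), which is the source the paper cites for this lemma without reproducing a proof. The two-sided argument, the $c \in (0,1)$ scaling of the simple function, and continuity from below of $A \mapsto \int_A s \,\diff\mu$ are all exactly as in the cited reference, so there is nothing to add.
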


\section{Additional Lemmas from Section \ref{sec:SA}}
In this section, we present and prove the lemmas referenced in Section \ref{sec:SA} as part of the proof of Theorem \ref{thm:main_conv_result}. Additionally, we establish several auxiliary lemmas necessary for these proofs.

We begin by proving several convergence results related to the learning rates.
\begin{lemma}[Learning Rates]\label{lem:lr}
With $\tau_n$ defined in \eqref{eq:tau_n_def} we have,
\begin{align}
    \tau_n = \begin{cases} 
      \fO\qty(n^{1-b}) & \text{if} \quad \frac{4}{5}<b<1, \\
      \fO\qty(\log n) & \text{if} \quad b=1.
      \end{cases} \label{eq: tau_n order}
\end{align}
This further implies,
\begin{align}
    \sup_n \, \sum_{k=1}^n \alpha_k^2 \tau_k &< \infty, \label{eq: ai2 ti}\\ 
    \sup_n \, \sum_{k=1}^n \alpha_k^2 \tau_k^2 &< \infty, \label{eq: ai2 ti2}\\
    \sup_n \, \sum_{k=1}^n \alpha_k^{3/2}\tau_{k-1}&<\infty, 
    \label{eq: ai32 ti}\\
    \sup_n \sum_{k=0}^{n-1}\abs{\alpha_k - \alpha_{k+1}}\tau_k &< \infty, \label{eq: abs ai ti}\\
    \sup_n \space \sum_{k=1}^n \alpha_k^2 \sum_{j=1}^{i-1}\alpha_j\tau_j &< \infty, \label{eq: ai2 sum aj tj}\\
    \sup_n\sum_{k=1}^n \alpha_k \sqrt{\sum_{j=1}^{k-1} \alpha_{j,k-1}^2 \tau_{j-1}^2 } &< \infty, \label{eq: ai aj^2 tj^2}\\
\end{align}
\end{lemma}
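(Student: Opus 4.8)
The plan is to first establish the order of $\tau_n$ in \eqref{eq: tau_n order}, then derive each of the six supremum bounds by reducing them to convergence of explicit series of the form $\sum_k k^{-p}(\log k)^q$. For the order of $\tau_n$, I would write $\tau_n = \sum_{k=1}^n \alpha_k(1-\alpha_k) = \sum_{k=1}^n \alpha_k - \sum_{k=1}^n \alpha_k^2$. Since $\alpha_k = (k+1)^{-b}$, the second sum $\sum_k \alpha_k^2 = \sum_k (k+1)^{-2b}$ converges (as $2b > 8/5 > 1$), so it is $\fO(1)$ and contributes nothing to the order. The first sum $\sum_{k=1}^n (k+1)^{-b}$ is $\Theta(n^{1-b})$ when $b < 1$ and $\Theta(\log n)$ when $b = 1$ by the standard integral comparison $\int_1^n x^{-b}\,dx$. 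This gives \eqref{eq: tau_n order}.

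For the remaining bounds, the strategy is uniform: substitute $\alpha_k = \Theta(k^{-b})$ and $\tau_k = \fO(k^{1-b})$ (or $\fO(\log k)$ when $b=1$) into each summand, obtain a power of $k$ times possibly a log factor, and check that the resulting exponent is strictly less than $-1$ so the series converges; then $\sup_n$ of the partial sums is the (finite) value of the convergent series. Concretely: \eqref{eq: ai2 ti} has summand $\fO(k^{-2b} \cdot k^{1-b}) = \fO(k^{1-3b})$, and $1-3b < -1 \iff b > 2/3$, which holds; \eqref{eq: ai2 ti2} has summand $\fO(k^{-2b}k^{2-2b}) = \fO(k^{2-4b})$, and $2-4b<-1 \iff b>3/4$, which holds; \eqref{eq: ai32 ti} has summand $\fO(k^{-3b/2}k^{1-b}) = \fO(k^{1-5b/2})$, and $1-5b/2<-1 \iff b>4/5$, which is exactly the lower endpoint of our learning-rate range — this is where Assumption \ref{as:lr}'s constraint $b > 4/5$ is used (and, as the excerpt notes, also in \eqref{eq: ai aj^2 tj^2}). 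For \eqref{eq: abs ai ti}, note $\abs{\alpha_k - \alpha_{k+1}} = \alpha_k - \alpha_{k+1} = \fO(k^{-b-1})$ by the mean value theorem applied to $x \mapsto (x+1)^{-b}$, so the summand is $\fO(k^{-b-1}k^{1-b}) = \fO(k^{-2b})$, which is summable since $2b>1$. For \eqref{eq: ai2 sum aj tj}, the inner sum $\sum_{j=1}^{k-1}\alpha_j\tau_j = \fO(\sum_{j=1}^{k-1} j^{-b}j^{1-b}) = \fO(\sum_j j^{1-2b})$; since $1-2b$ may exceed $-1$ (it does when $b<1$), this inner sum is $\fO(k^{2-2b})$, so the full summand is $\fO(k^{-2b}k^{2-2b}) = \fO(k^{2-4b})$, summable exactly as in \eqref{eq: ai2 ti2} since $b>3/4$.

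The main obstacle is \eqref{eq: ai aj^2 tj^2}, because of the nested square root of a sum. Here I would bound the inner quantity $\sum_{j=1}^{k-1}\alpha_{j,k-1}^2\tau_{j-1}^2$ by pulling out the largest $\tau$ factor: since $\tau_{j-1} \le \tau_{k-1}$ for $j \le k$ (monotonicity of $\tau$, Lemma \ref{lem:bravo b1}), we get $\sum_{j=1}^{k-1}\alpha_{j,k-1}^2\tau_{j-1}^2 \le \tau_{k-1}^2 \sum_{j=1}^{k-1}\alpha_{j,k-1}^2 \le \tau_{k-1}^2 \alpha_k$ by Lemma \ref{lem:bravo_b.2}. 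Taking the square root gives $\tau_{k-1}\sqrt{\alpha_k} = \fO(k^{1-b} \cdot k^{-b/2}) = \fO(k^{1-3b/2})$, so the full summand of \eqref{eq: ai aj^2 tj^2} is $\alpha_k \cdot \fO(k^{1-3b/2}) = \fO(k^{-b}k^{1-3b/2}) = \fO(k^{1-5b/2})$, and $1-5b/2 < -1 \iff b > 4/5$, matching \eqref{eq: ai32 ti}. (When $b=1$ all the $k^{1-b}$ factors above become $\log k$ factors; the exponents only improve, with the logs being absorbed since in each case the power of $k$ is already strictly below $-1$ with room to spare, or one uses $\log k = \fO(k^\delta)$ for small $\delta$.) Finally, each $\sup_n$ follows from the corresponding partial sums being nondecreasing and bounded by the value of the convergent series, i.e. by the monotone convergence theorem (Lemma \ref{lem: beppo levi}) applied with counting measure.
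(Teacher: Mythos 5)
Your proposal is correct and follows essentially the same route as the paper: establish the order of $\tau_n$ by integral comparison, substitute $\alpha_k = \Theta(k^{-b})$ and the order of $\tau_k$ into each summand to reduce to convergence of $\sum_k k^{-p}(\log k)^q$, and handle \eqref{eq: ai aj^2 tj^2} by pulling out $\tau_{k-1}$ and invoking Lemma \ref{lem:bravo_b.2} to bound $\sum_j \alpha_{j,k-1}^2$ by $\alpha_k$. The only (immaterial) differences are that you use the mean value theorem where the paper uses a binomial expansion for $\alpha_k - \alpha_{k+1}$, and you treat the $b=1$ case of \eqref{eq: ai aj^2 tj^2} uniformly with the $b<1$ case, which is slightly cleaner than the paper's separate cruder bound there.
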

Since this Lemma is comprised of several short proofs regarding the deterministic learning rates defined in Assumption \ref{as:lr}, we will decompose each result into subsections. Recall that $\alpha_n \doteq \frac{1}{\qty(n+1)^b}$ where $\frac{4}{5}<b\leq 1$.

\paragraph{\eqref{eq: tau_n order}:}
\
\begin{proof}
From the definition of $\tau_n$ in \eqref{eq:tau_n_def}, we have
\begin{align}
    \tau_n &\doteq \sum_{k=1}^{n}\alpha_k\qty(1-\alpha_k)\leq \sum_{k=1}^{n}\alpha_k = \sum_{k=1}^{n} \frac{1}{(k+1)^b}.
\end{align}

Case 1: $b=1$. It is easy to see $\tau_n = \fO\qty(\log n)$.

Case 2: When $b<1$, we can approximate the sum with an integral, with
\begin{align}
    \sum_{k=1}^{n}\frac{1}{\qty(k+1)^b} \leq \int_{1}^n \frac{1}{k^b} \,dk = \frac{n^{1-b} - 1}{1-b}
\end{align}
Therefore we have $\tau_n = \fO \qty( n^{1-b})$ when $b<1$.
\end{proof}

In analyzing the subsequent equations, we will use the fact that $\tau_n = \fO \qty(\log n)$ when $b=1$ and $\tau_n = \fO \qty( n^{1-b})$ when $\frac{4}{5}<b<1$.  Additionally, we have $\alpha_n = \qty( \frac{1}{n^b})$. 
\paragraph{\eqref{eq: ai2 ti}:}
\
\begin{proof}
We have an order-wise approximation of the sum
\begin{align}
\sum_{k=1}^n \alpha_k^2 \tau_k = \begin{dcases} 
      \fO \qty(\sum_{k=1}^n \frac{1}{k^{3b-1}}) & \text{if} \ \frac{4}{5}<b<1, \\
      \fO \qty(\sum_{k=1}^n \frac{\log(k)}{k^{2}}) & \text{if} \ b=1.
   \end{dcases}.
\end{align}
In both cases of $b=1$ and $\frac{4}{5}<b<1$, the series clearly converge as $n\rightarrow \infty$.  
\end{proof}

\paragraph{\eqref{eq: ai32 ti}:}
\
\begin{proof}
We have an order-wise approximation of the sum
\begin{align}
\sum_{k=1}^n \alpha_k^{3/2} \tau_k = \begin{dcases} 
      \fO \qty(\sum_{k=1}^n \frac{1}{k^{\frac{5}{2}b-1}}) & \text{if} \ \frac{4}{5}<b<1, \\
      \fO \qty(\sum_{k=1}^n \frac{\log(k)}{k^{3/2}}) & \text{if} \ b=1.
   \end{dcases}.
\end{align}
In both cases of $b=1$ and $\frac{4}{5}<b<1$, the series clearly converge as $n\rightarrow \infty$.  
\end{proof}

\paragraph{\eqref{eq: ai2 ti2}:}
\
\begin{proof}
We can give an order-wise approximation of the sum
\begin{align}
\sum_{k=1}^n \alpha_k^2 \tau_k^2 = \begin{dcases} 
      \fO \qty(\sum_{k=1}^n \frac{1}{k^{4b-2}}) & \text{if} \ \frac{4}{5}<b<1, \\
      \fO \qty(\sum_{k=1}^n \frac{\log^2(k)}{k^{2}}) & \text{if} \ b=1.
   \end{dcases}.
\end{align}
In both cases of $b=1$ and $\frac{4}{5}<b<1$, the series clearly converge as $n\rightarrow \infty$. 
\end{proof}
\paragraph{\eqref{eq: abs ai ti}:}
\
\begin{proof}
Since $\alpha_n$ is strictly decreasing, we have $\abs{\alpha_k - \alpha_{k+1}} = \alpha_k - \alpha_{k+1}$.

Case 1: For the case where $b=1$, it is trivial to see that,
\begin{align}
    \sum_{k=1}^n \abs{\alpha_k - \alpha_{k+1}}\tau_k = \fO \qty(\sum_{k=1}^n \frac{\log(k)}{k^{2}+k}).
\end{align}
This series clearly converges.

Case 2:
For the case where $\frac{4}{5}< b < 1$, we have
\begin{align}
    \alpha_n - \alpha_{n+1} &= \fO \qty(\frac{1}{n^b} - \frac{1}{(n+1)^b}),\\
    &= \fO \qty(\frac{(n+1)^b - n^b}{n^b(n+1)^b}). \label{eq: nb frac}
\end{align}
To analyze the behavior of this term for large $n$ we first consider the binomial expansion of $(n+1)^b$,
\begin{align}
    (n+1)^b &= n^b \qty(1+\frac{1}{n})^b = n^b(1+b\frac{1}{n} + \frac{b(b-1)}{2}\frac{1}{n^2} + \dots)
\end{align}
Subtracting $n^b$ from $(n+1)^b$:
\begin{align}
    (n+1)^b - n^b = n^b(1+b\frac{1}{n} + \frac{b(b-1)}{2}\frac{1}{n^2} + \dots) - n^b = \fO \qty(bn^{b-1}).
\end{align}
The leading order of the denominator of \eqref{eq: nb frac} is clearly $n^{2b}$, which gives 
\begin{align}
    \alpha_n - \alpha_{n+1} = \fO \qty(\frac{bn^{b-1}}{n^{2b}}) = \fO \qty( \frac{b}{n^{b+1}}).
\end{align}
Therefore with $\tau_n = \fO \qty(n^{1-b})$,
\begin{align}
\sum_{k=1}^n \abs{\alpha_k - \alpha_{k+1}}\tau_k = \fO \qty( b\sum_{k=1}^n \frac{1}{k^{2b}})
\end{align}
which clearly converges as $n\rightarrow \infty$ for $\frac{4}{5} < b < 1$.
\end{proof}

\paragraph{\eqref{eq: ai2 sum aj tj}:}
\
\begin{proof}
Case 1: In the proof for \eqref{eq: tau_n order} we prove that $\sum_{k=1}^n \alpha_k = \fO \qty(\log n)$ when $b=1$. Then since $\tau_k$ is increasing, we have 
\begin{equation}
    \sum_{k=1}^n \alpha_k^2 \sum_{j=1}^{k-1}\alpha_j\tau_j \leq  \sum_{k=1}^n \alpha_k^2 \tau_k \sum_{j=1}^{k-1}\alpha_j = \fO \qty(\sum_{k=1}^n \frac{\log^2 k}{k^2}),
\end{equation}
which clearly converges as $n \rightarrow \infty$.

Case 2: For the case when $b\in (\frac{4}{5},1)$, we first consider the inner sum of \eqref{eq: ai2 sum aj tj},
\begin{align}
    \sum_{j=1}^{k-1}\alpha_j\tau_j = \fO \qty( \sum_{j=1}^{k-1}\frac{1}{j^{2b-1}}),
\end{align}
which we can approximate by an integral,
\begin{align}
    \int_1^k \frac{1}{x^{2b-1}} \ dx = \fO \qty( k^{2-2b}).
\end{align}
Therefore,
\begin{align}
    \sum_{k=1}^n \alpha_k^2 \sum_{j=1}^{k-1}\alpha_j\tau_j = \fO \qty( \sum_{k=1}^n \frac{k^{2-2b}}{k^{2b}}) = \fO \qty( \sum_{k=1}^n \frac{1}{k^{4b-2}}),
\end{align}
which converges for $\frac{4}{5} < b \leq 1$ as $n \rightarrow \infty$.
\end{proof}

\paragraph{\eqref{eq: ai aj^2 tj^2}:}
\begin{proof}

Case 1: For $b=1$, because we have $\alpha_{j,i} < \alpha_{j+1,i}$ and $\alpha_{i,i} = \alpha_i$ from Lemma \ref{lem:bravo b1}, we have the order-wise approximation,
\begin{align}
    \sum_{i=1}^n \alpha_i \sqrt{\sum_{j=1}^{i-1}\alpha_{j,i-1}^2\tau_{j-1}^2} 
        &\leq \sum_{i=1}^n \alpha_i \sqrt{ \alpha_{i-1}^2 \tau_{i-1}^2 \sum_{j=1}^{i-1} 1}, \explain{$\tau_i$ is increasing}\\
        &=\sum_{i=1}^n \alpha_i \alpha_{i-1} \tau_{i-1} \sqrt{i-1}. \\
        &= \fO \qty( \sum_{i=1}^n\frac{\log(i-1)}{i\sqrt{(i-1)}}) \\
        &= \fO \qty( \sum_{i=1}^n \frac{\log(i-1)}{i^{3/2}}),
\end{align}
which clearly converges. 

Case 2: For the case when $b\in (\frac{4}{5},1)$, we have,
\begin{align}
    \sum_{i=1}^n \alpha_i \sqrt{\sum_{j=1}^{i-1}\alpha_{j,i-1}^2\tau_{j-1}^2} 
        &\leq \sum_{i=1}^n \alpha_i \tau_{i-1} \sqrt{ \sum_{j=1}^{i-1} \alpha_{j,i-1}^2}, \explain{$\tau_i$ is increasing} \\
        &=\sum_{i=1}^n \alpha_i \tau_{i-1} \sqrt{\alpha_{i}}. \explain{Lemma \ref{lem:bravo_b.2}} \\
        &= \fO \qty( \sum_{i=1}^n\frac{i^{1-b}}{i^b \sqrt{i^b}}) \\
        &= \fO \qty(\sum_{i=1}^n \frac{1}{i^{5b/2 -1}}),
\end{align}
which converges for $\frac{4}{5}< b<1$. 
\end{proof}

Then, under Assumption \ref{as:e1}, we prove additional results about the convergence of the first and second moments of the additive noise $\qty{\e{1}_n}$.
\begin{lemma}\label{lem: e1 lr}
Let Assumptions~\ref{as:lr} and~\ref{as:e1} hold. Then, we have
    \begin{align}
        \E\qty[\norm{\e{1}_n}]&= \fO\qty(\frac{1}{\sqrt{n}}), \label{eq: first moment}\\
        \sup_n \sum_{k=1}^n \alpha_{k}\E\qty[\norm{\e{1}_{k}}] &< \infty, \label{eq: e1 lr ake1}\\
        \sup_n \sum_{k=1}^n \alpha_k \E\qty[\norm{\e{1}_k}^2] &< \infty, \label{eq: e1 lr akek2}\\
        \sup_n \sum_{k=1}^n \alpha_k^2 \E\qty[\norm{\e{1}_k}^2] &< \infty, \label{eq: e1 lr ak2ek2}\\
        \sup_n \sum_{k=1}^n \alpha_k \sum_{j=1}^{k-1} \alpha_{j,k-1} \E\qty[\norm{\e{1}_j}] &< \infty \label{eq: e1 lr ak sum ake1}.
    \end{align}
\end{lemma}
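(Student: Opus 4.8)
The plan is to derive the first-moment estimate \eqref{eq: first moment} from the second-moment hypothesis in Assumption~\ref{as:e1}, and then to obtain each of the four summability claims by substituting the explicit polynomial rates for $\alpha_k$, $\E\qty[\norm{\e{1}_k}]$, and $\E\qty[\norm{\e{1}_k}^2]$ and checking that the resulting series are convergent $p$-series. Throughout, the summands are nonnegative, so each partial sum is nondecreasing in $n$ and ``the series converges'' is exactly the claimed ``$\sup_n$ is finite'' (this is the monotone-sequence content of Lemma~\ref{lem: beppo levi}). Only the last claim \eqref{eq: e1 lr ak sum ake1} requires an additional manipulation and is the one step with any subtlety.

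For \eqref{eq: first moment}, I would apply Jensen's inequality (equivalently Cauchy--Schwarz) to get $\E\qty[\norm{\e{1}_n}] \le \sqrt{\E\qty[\norm{\e{1}_n}^2]}$; combined with $\E\qty[\norm{\e{1}_n}^2] = \fO\paren{1/n}$, this yields $\E\qty[\norm{\e{1}_n}] = \fO\paren{1/\sqrt n}$. For \eqref{eq: e1 lr ake1}, substituting $\alpha_k = \fO\paren{1/k^b}$ and \eqref{eq: first moment} gives $\sum_{k=1}^n \alpha_k \E\qty[\norm{\e{1}_k}] = \fO\paren{\sum_{k=1}^n k^{-(b+1/2)}}$, bounded uniformly in $n$ since $b > \frac45 > \frac12$ forces $b + \frac12 > 1$. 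For \eqref{eq: e1 lr akek2} and \eqref{eq: e1 lr ak2ek2}, substituting $\E\qty[\norm{\e{1}_k}^2] = \fO\paren{1/k}$ produces $\fO\paren{\sum_k k^{-(b+1)}}$ and $\fO\paren{\sum_k k^{-(2b+1)}}$ respectively, both convergent because $b > 0$.

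The main obstacle is \eqref{eq: e1 lr ak sum ake1}. The naive bound $\alpha_{j,k-1} \le \alpha_j$ (immediate from \eqref{eq:alpha_in_define}, since the factors $1-\alpha_l$ lie in $[0,1]$) reduces the inner sum to a tail of the convergent series in \eqref{eq: e1 lr ake1}, hence to a constant, but multiplying that constant by $\sum_k \alpha_k$ diverges. Instead I would swap the order of summation,
\begin{align}
    \sum_{k=1}^n \alpha_k \sum_{j=1}^{k-1} \alpha_{j,k-1}\E\qty[\norm{\e{1}_j}] = \sum_{j=1}^{n-1} \E\qty[\norm{\e{1}_j}] \sum_{k=j+1}^n \alpha_k \alpha_{j,k-1},
\end{align}
and exploit the telescoping identity $\alpha_k \prod_{l=j+1}^{k-1}(1-\alpha_l) = \prod_{l=j+1}^{k-1}(1-\alpha_l) - \prod_{l=j+1}^{k}(1-\alpha_l)$. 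Since $\alpha_{j,k-1} = \alpha_j \prod_{l=j+1}^{k-1}(1-\alpha_l)$, summing over $k$ from $j+1$ to $n$ telescopes to
\begin{align}
    \sum_{k=j+1}^n \alpha_k \alpha_{j,k-1} = \alpha_j\paren{1 - \prod_{l=j+1}^n (1-\alpha_l)} \le \alpha_j.
\end{align}
Hence the whole double sum is at most $\sum_{j=1}^{n-1} \alpha_j \E\qty[\norm{\e{1}_j}]$, which is finite uniformly in $n$ by \eqref{eq: e1 lr ake1}, completing the argument.
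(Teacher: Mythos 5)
Your proposal is correct. The first four claims are handled exactly as in the paper: Jensen's inequality for \eqref{eq: first moment}, then direct substitution of the rates $\alpha_k = \fO(k^{-b})$, $\E\qty[\norm{\e{1}_k}] = \fO(k^{-1/2})$, $\E\qty[\norm{\e{1}_k}^2] = \fO(k^{-1})$ into convergent $p$-series. The only genuine divergence is in \eqref{eq: e1 lr ak sum ake1}. The paper bounds $\alpha_{j,k-1} \leq \alpha_{k-1}$ via the monotonicity in Lemma~\ref{lem:bravo b1}, pulls $\alpha_k^2$ out of the inner sum, uses $\sum_{j=1}^{k-1} j^{-1/2} = \fO(\sqrt{k})$, and lands on $\fO\qty(\sum_k k^{-(2b-1/2)})$, which converges only because $2b - \tfrac12 > 1$, i.e., it genuinely uses $b > \tfrac34$. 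You instead swap the order of summation and exploit the telescoping identity $\sum_{k=j+1}^n \alpha_k \alpha_{j,k-1} = \alpha_j\paren{1 - \prod_{l=j+1}^n(1-\alpha_l)} \leq \alpha_j$, reducing the double sum to $\sum_j \alpha_j \E\qty[\norm{\e{1}_j}]$, which is \eqref{eq: e1 lr ake1}. Both identities check out (the telescoping step and the reindexing over $1 \leq j < k \leq n$ are correct). Your route is cleaner and strictly more robust: it needs only $b > \tfrac12$ rather than $b > \tfrac34$, and it makes the dependence on the already-established \eqref{eq: e1 lr ake1} explicit instead of re-deriving a fresh $p$-series bound. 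The paper's route is more mechanical but requires no manipulation of the products defining $\alpha_{j,k-1}$ beyond the monotonicity lemma it has already imported.
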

\begin{proof}
Recall that by Assumption \ref{as:e1} we have $\E\qty[\norm{\e{1}_n}^2] = \fO\qty(\frac{1}{n})$. 
Also recall that $\alpha_k = \fO \qty( \frac{1}{n^b})$ with $\frac{4}{5} < b \leq 1$. Then, we can prove the following equations:
\paragraph{\eqref{eq: first moment}:}
 By Jensen's inequality, we have
\begin{align}
    \E\qty[\norm{\e{1}_n}]\leq \sqrt{\E\qty[\norm{\e{1}_n}^2]} = \fO\qty(\frac{1}{\sqrt{n}}).
\end{align}

\paragraph{\eqref{eq: e1 lr ake1}:}
\begin{equation}
    \sum_{k=1}^n \alpha_{k}\E\qty[\norm{\e{1}_{k}}] = \fO \qty( \sum_{k=1}^n \frac{1}{k^{b+\frac{1}{2}}})
\end{equation}
which clearly converges as $n \rightarrow \infty$ for $\frac{4}{5}<b\leq 1$. 

\paragraph{\eqref{eq: e1 lr akek2}:}
\begin{equation}
    \sum_{k=1}^n \alpha_{k}\E\qty[\norm{\e{1}_{k}}^2] = \fO \qty( \sum_{k=1}^n \frac{1}{k^{b+1}}) \label{eq: ak expect conv}
\end{equation}
which clearly converges as $n \rightarrow \infty$ for $\frac{4}{5}<b\leq 1$. 

\paragraph{\eqref{eq: e1 lr ak2ek2}:}
\begin{equation}
    \sum_{k=1}^n \alpha_{k}^2\E\qty[\norm{\e{1}_{k}}^2] = \fO \qty( \sum_{k=1}^n \frac{1}{k^{2b+1}})
\end{equation}
which clearly converges as $n \rightarrow \infty$ for $\frac{4}{5}<b\leq 1$. 

\paragraph{\eqref{eq: e1 lr ak sum ake1}:}
\begin{align}
    \sum_{k=1}^n \alpha_k \sum_{j=1}^{k-1} \alpha_{j,k-1} \E\qty[\norm{\e{1}_j}] &\leq \sum_{k=1}^n \alpha_k^2 \sum_{j=1}^{k-1}\E\qty[\norm{\e{1}_j}], \explain{Lemma \ref{lem:bravo b1}} \\
    &= \fO \qty( \sum_{k=1}^n \frac{1}{k^{2b}} \sum_{j=1}^{k-1}\frac{1}{\sqrt{j}}). \explain{Lemma \ref{lem: e1 lr}}
\end{align}
It can be easily verified with an integral approximation that $\sum_{j=1}^{k-1} \frac{1}{\sqrt{j}} = \fO( \sqrt{k})$. This further implies
\begin{align}
     \sum_{k=1}^n \alpha_k \sum_{j=1}^{k-1} \alpha_{j,k-1} \E\qty[\norm{\e{1}_j}] &= \fO \qty(\sum_{k=1}^n \frac{1}{k^{2b - \frac{1}{2}}}), 
\end{align}
which converges as $n \rightarrow \infty$ for $\frac{4}{5} < b \leq 1$.
\end{proof}

Next, in Lemma \ref{lem:xn_norm}, we upper-bound the iterates $\qty{x_n}$.
\begin{lemma}\label{lem:xn_norm}
For each $\qty{x_n}$, we have 
\begin{equation}
    \|x_n\| \leq \norm{x_0} + C_H\sum_{k=1}^n \alpha_{k} +\sum_{k=1}^n \alpha_{k}\norm{\e{1}_{k}} \leq C_{\ref{lem:xn_norm}}\tau_n +\sum_{k=1}^n \alpha_{k}\norm{\e{1}_{k}},
\end{equation}
where $C_{\ref{lem:xn_norm}}$ is a deterministic constant.
\end{lemma}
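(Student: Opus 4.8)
The plan is to bound $\norm{x_n}$ by unrolling the recursion \eqref{eq:skm_markov} and exploiting the at-most-linear growth of $H$ established in \eqref{eq:H linear growth}, together with the learning-rate facts from Lemma \ref{lem:lr}. Writing the update as $x_{n+1} = (1-\alpha_{n+1})x_n + \alpha_{n+1}\paren{H(x_n, Y_{n+1}) + \e{1}_{n+1}}$, I would first apply the triangle inequality and $\norm{H(x_n, Y_{n+1})} \leq C_H + \norm{x_n}$ to obtain the scalar recursion
\begin{align}
    \norm{x_{n+1}} \leq (1-\alpha_{n+1})\norm{x_n} + \alpha_{n+1}\paren{C_H + \norm{x_n} + \norm{\e{1}_{n+1}}} = \norm{x_n} + \alpha_{n+1}\paren{C_H + \norm{\e{1}_{n+1}}}.
\end{align}
Note the $\norm{x_n}$ terms cancel cleanly because $H$ grows with coefficient exactly $1$ (it is the nonexpansive-vs-contractive distinction that makes this work with equality rather than strict decay). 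Iterating this from $x_0$ immediately yields the first inequality,
\begin{align}
    \norm{x_n} \leq \norm{x_0} + C_H \sum_{k=1}^n \alpha_k + \sum_{k=1}^n \alpha_k \norm{\e{1}_k}.
\end{align}

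For the second inequality I would absorb the first two terms into a multiple of $\tau_n$. Since $b \leq 1$, $\alpha_k(1-\alpha_k) \geq \alpha_k/2$ for $k$ large enough (indeed for all $k \geq 1$ once $\alpha_k \leq 1/2$, which holds from $k=1$ given $\alpha_1 = 1/2^b \leq 1/2$ is borderline — more carefully $\alpha_k \le 2\tau_k / $ something); the clean route is: $\tau_n = \sum_{k=1}^n \alpha_k(1-\alpha_k) \geq \tfrac12 \sum_{k=1}^n \alpha_k$ whenever $\alpha_k \le \tfrac12$ for all $k\ge 1$, and for $b \in (4/5,1]$ we have $\alpha_1 = 2^{-b} \le 2^{-4/5} < 1/2$ is false — so instead I would just say $\alpha_k(1-\alpha_k) \ge c\,\alpha_k$ for a universal constant $c>0$ since $\alpha_k \to 0$ and only finitely many terms need separate treatment, giving $\sum_{k=1}^n \alpha_k = \fO(\tau_n)$. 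Also $\norm{x_0}$ is a constant and $\tau_n \to \infty$ (by \eqref{eq: tau_n order}), so $\norm{x_0} = \fO(\tau_n)$ as well, at least for $n \ge 1$; for the bound to hold for all $n$ one folds $\norm{x_0}$ into the constant using $\tau_1 > 0$. Collecting, $\norm{x_0} + C_H\sum_{k=1}^n \alpha_k \leq C_{\ref{lem:xn_norm}} \tau_n$ for a deterministic constant $C_{\ref{lem:xn_norm}}$, which completes the proof.

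The only mild subtlety — and the step I would be most careful about — is the comparison $\sum_k \alpha_k = \fO(\tau_n)$ and $\norm{x_0} = \fO(\tau_n)$, i.e. making sure the claimed deterministic constant $C_{\ref{lem:xn_norm}}$ genuinely does not depend on the sample path (it does not, since $\norm{x_0}$, $C_H$, and the $\alpha_k$ are all deterministic) and genuinely works uniformly in $n \geq 1$. This is entirely routine given Lemma \ref{lem:lr}; there is no real obstacle here, as the 1-Lipschitz assumption is doing exactly the work needed to prevent any exponential blow-up in the unrolling.
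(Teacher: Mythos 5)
Your proposal is correct and follows essentially the same route as the paper: apply the triangle inequality and the linear-growth bound $\norm{H(x,y)} \le C_H + \norm{x}$ to get the scalar recursion $\norm{x_{n+1}} \le \norm{x_n} + \alpha_{n+1}(C_H + \norm{\e{1}_{n+1}})$, unroll it, and then absorb $\norm{x_0} + C_H\sum_k \alpha_k$ into $C_{\ref{lem:xn_norm}}\tau_n$. The paper handles the last absorption slightly more directly via $1-\alpha_k \ge 1-\alpha_1 > 0$ (so $\sum_{k\le n}\alpha_k \le \tau_n/(1-\alpha_1)$) rather than your ``finitely many exceptional terms'' remark, but the two justifications are equivalent.
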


\begin{proof}
Applying $\|\cdot\|$ to both sides of \eqref{eq:skm_markov} gives,
\begin{align}
\|x_{n+1}\| &= \norm{(1-\alpha_{n+1})x_n + \alpha_{n+1}\qty(H\paren{x_n,Y_{n+1}}+\e{1}_{n+1})}, \\
&\leq (1-\alpha_{n+1})\|x_n\| + \alpha_{n+1}\norm{H(x_n,Y_{n+1})} + \alpha_{n+1}\norm{\e{1}_{n+1}}, \\
&\leq (1-\alpha_{n+1})\norm{x_n} + \alpha_{n+1}\paren{C_H+\norm{x_n}} + \alpha_{n+1}\norm{\e{1}_{n+1}}, \explain{By \eqref{eq:H linear growth}}\\
&= \|x_n\|+\alpha_{n+1}C_H + \alpha_{n+1}\norm{\e{1}_{n+1}}.
\end{align}
A simple induction shows that almost surely,
\begin{align}
\norm{x_n} \leq \norm{x_0} + C_H\sum_{k=1}^n \alpha_{k} +\sum_{k=1}^n \alpha_{k}\norm{\e{1}_{k}}.
\end{align}
Since $\qty{\alpha_n}$ is monotonically decreasing, we have 
\begin{align}
\norm{x_n} &\leq \norm{x_0} + \frac{C_H}{\paren{1-\alpha_1}} \sum_{k=1}^n \alpha_k(1-\alpha_k) +\sum_{k=1}^n \alpha_{k}\norm{\e{1}_{k}},\\
&= \norm{x_0} + \frac{C_H}{\paren{1-\alpha_1}}\tau_n +\sum_{k=1}^n \alpha_{k}\norm{\e{1}_{k}}, \\
&\leq \max\left\{\norm{x_0},\frac{C_H}{\paren{1-\alpha_1}}\right\}\paren{1+ \tau_n} +\sum_{k=1}^n \alpha_{k}\norm{\e{1}_{k}}.
\end{align}
Therefore, since $\tau_n$ is monotonically increasing, there exists some constant we denote as $C_{\ref{lem:xn_norm}}$ such that
\begin{align}
    \norm{x_n} &\leq C_{\ref{lem:xn_norm}}\tau_n +\sum_{k=1}^n \alpha_{k}\norm{\e{1}_{k}}.
\end{align}
\end{proof}

\begin{lemma}\label{lem:v Lipschitz}
With $\nu(x,y)$ as defined in \eqref{eq: Poisson decomp}, we have
\begin{equation}
    \label{eq nu lipschitz}
    \norm{\nu\paren{x,y} - \nu\paren{x',y}} \leq C_{\ref{lem:v Lipschitz}}\norm{x - x'},
\end{equation}
which further implies
\begin{equation}
    \norm{\nu\paren{x,y}} \leq C_{\ref{lem:v Lipschitz}}\qty(C_{\ref{lem:v Lipschitz}}'+\norm{x}),
\end{equation}
where $C_{\ref{lem:v Lipschitz}},C_{\ref{lem:v Lipschitz}}'$ are deterministic constants.
\end{lemma}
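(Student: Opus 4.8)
The plan is to commit to an explicit choice for the Poisson solution $\nu$ and then read off both displayed bounds from the triangle inequality, Assumption \ref{as:H 1 Lipschitz}, and the non-expansivity of $h$ in \eqref{eq:h nonexpansive}. Since $\fY$ is finite and $\qty{Y_n}$ is irreducible and aperiodic (Assumption \ref{as:steadystate}), the fundamental matrix $Z \doteq \paren{I - P + e\,d_\mu^\top}^{-1}$ is well defined, and one checks the standard identities $d_\mu^\top Z = d_\mu^\top$ and $(I-P)Z = I - e\,d_\mu^\top$. For each fixed $x$, viewing $y \mapsto H(x,y) - h(x)$ as an $\ny$-indexed family of vectors in $\R^d$, I would take
\[
    \nu(x, y) \doteq \sum_{y'} Z(y, y')\paren{H(x, y') - h(x)}.
\]
Because $\sum_{y'} d_\mu(y')\paren{H(x, y') - h(x)} = h(x) - h(x) = 0$, applying $(I-P)$ componentwise gives $(I-P)\nu(x, \cdot) = (I - e\,d_\mu^\top)\paren{H(x,\cdot) - h(x)} = H(x,\cdot) - h(x)$, i.e. exactly \eqref{eq: Poisson decomp}; an equivalent choice is the geometrically convergent Neumann series $\nu(x,\cdot) = \sum_{n \geq 0}\paren{P^n - e\,d_\mu^\top}\paren{H(x,\cdot) - h(x)}$. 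All subsequent references to $\nu$ are understood to be to this particular solution.

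For the Lipschitz bound, I fix $x, x'$ and $y$, subtract the two defining sums, and use linearity inside the bracket:
\[
    \nu(x, y) - \nu(x', y) = \sum_{y'} Z(y, y')\Big[\paren{H(x, y') - H(x', y')} - \paren{h(x) - h(x')}\Big].
\]
The triangle inequality together with Assumption \ref{as:H 1 Lipschitz} and \eqref{eq:h nonexpansive} then yields
\[
    \norm{\nu(x, y) - \nu(x', y)} \leq \sum_{y'} \abs{Z(y, y')}\paren{\norm{H(x, y') - H(x', y')} + \norm{h(x) - h(x')}} \leq 2\paren{\max_y \sum_{y'}\abs{Z(y, y')}}\norm{x - x'}.
\]
Because $\fY$ is finite, $C_{\ref{lem:v Lipschitz}} \doteq 2\max_y \sum_{y'}\abs{Z(y, y')}$ is a finite deterministic constant, which is \eqref{eq nu lipschitz}.

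Finally, for the growth bound I set $x' = 0$ in \eqref{eq nu lipschitz} and apply the triangle inequality, getting $\norm{\nu(x, y)} \leq \max_{y'}\norm{\nu(0, y')} + C_{\ref{lem:v Lipschitz}}\norm{x}$, where $\max_{y'}\norm{\nu(0, y')}$ is finite and deterministic since $\fY$ is finite; taking $C_{\ref{lem:v Lipschitz}}' \doteq C_{\ref{lem:v Lipschitz}}^{-1}\max_{y'}\norm{\nu(0, y')}$ (and $C_{\ref{lem:v Lipschitz}} > 0$ since $Z$ is invertible) gives $\norm{\nu(x,y)} \leq C_{\ref{lem:v Lipschitz}}\paren{C_{\ref{lem:v Lipschitz}}' + \norm{x}}$. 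I do not expect a genuine obstacle here: the only point needing care is that \eqref{eq: Poisson decomp} determines $\nu$ only up to an additive term constant in $y$, so one must pin down the specific $Z$-based solution above (which is precisely the one furnished by the cited Theorem 17.4.2 of \citet{meyn2012markov} and Theorem 8.2.6 of \citet{puterman2014markov}); beyond that, the argument reduces to the fundamental-matrix identity plus the observation that finiteness of $\fY$ turns every $y$-indexed quantity into a deterministic constant.
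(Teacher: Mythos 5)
Your proposal is correct and follows essentially the same route as the paper: both pin down the specific Poisson solution given by a fundamental-matrix formula ($\nu_x = H_\fY H_x$ in the paper, your $Z$-based construction being an equivalent variant), deduce the Lipschitz bound from Assumption \ref{as:H 1 Lipschitz} via the triangle inequality and the finiteness of $\fY$, and obtain the growth bound by setting $x' = 0$. Your write-up is somewhat more explicit about verifying the Poisson identity and about the fact that $\nu$ is only determined up to a $y$-constant shift, but the substance of the argument is the same.
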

\begin{proof}
Since we work with a finite $\fY$, we will use functions and matrices interchangeably. For example, given a function $f: \fY \to \R^d$, we also use $f$ to denote a matrix in $\R^\qty(\ny \times d)$ whose $y$-th row is $f(y)^\top$. Similarly, a matrix in $\R^\qty(\ny \times d)$ also corresponds to a function $\fY \to \R^d$. 

Let $\nu_x \in \R^{\ny \times d}$ denote the function $y \mapsto \nu(x, y)$
and let $H_x \in \R^{\ny \times d}$ denote the function $y \mapsto H(x, y)$.
Theorem 8.2.6 of \citet{puterman2014markov}
then ensures that
\begin{align}
\label{eq: v Hy Hx}
    \nu_x = H_\fY H_x,
\end{align}
where $H_\fY \in \R^{\ny \times \ny}$ is the fundamental matrix of the Markov chain depending only on the chain's transition matrix $P$.
The exact expression of $H_\fY$ is inconsequential and we refer the reader to \citet{puterman2014markov} for details.
Then we have for any $i=1,\dots, d$,
\begin{align}
    \nu_x[y, i] = \sum_{y'} H_\fY[y, y'] H_x[y', i].
\end{align}
This implies that
\begin{align}
    \abs{\nu_x[y, i] - \nu_{x'}[y, i]} \leq& \sum_{y'} H_\fY[y, y'] \abs{H_x[y', i] - H_{x'}[y', i]} \\
    \leq& \sum_{y'} H_\fY[y, y'] \norm{H(x, y) - H(x', y')}_\infty \\
    \leq& \sum_{y'} H_\fY[y, y'] \norm{x - x'}_\infty \explain{Assumption \ref{as:H 1 Lipschitz}}\\
    \leq& \norm{H_\fY}_\infty \norm{x - x'}_\infty,
\end{align}
yielding
\begin{align}
    \norm{\nu(x, y) - \nu(x', y)}_\infty \leq \norm{H_\fY}_\infty \norm{x - x'}_\infty.
\end{align}
The equivalence between norms in finite dimensional space ensures that there exists some $C_{\ref{lem:v Lipschitz}}$ such that~\eqref{eq nu lipschitz} holds.
Letting $x'=0$ then yields 
\begin{align}
    \norm{\nu(x, y)} \leq C_{\ref{lem:v Lipschitz}} \qty(\norm{\nu(0, y)} + \norm{x}).
\end{align}
Define $C_{\ref{lem:v Lipschitz}}' \doteq \max_y \norm{\nu(0, y)}$,
we get
\begin{align}
    \norm{\nu(x, y)} \leq C_{\ref{lem:v Lipschitz}}\qty(C_{\ref{lem:v Lipschitz}}' + \norm{x}). \label{eq:nu linear growth}
\end{align}

\end{proof}

\begin{lemma}\label{lem:v_norm}
We have for any $y \in \fY$,
\begin{equation}
    \norm{\nu\paren{x_n,y}} \leq 
    \zeta_{\ref{lem:v_norm}} \tau_n,
\end{equation}
where $\zeta$ is a possibly sample-path dependent constant. Additionally, we have
\begin{align}
    \E\qty[\norm{\nu \paren{x_n,y}}] &\leq C_{\ref{lem:v_norm}} \tau_n,
\end{align}
where $ C_{\ref{lem:v_norm}} $ is a deterministic constant.
\end{lemma}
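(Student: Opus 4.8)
The plan is to obtain both bounds by chaining together the linear growth estimate for $\nu$ from Lemma~\ref{lem:v Lipschitz} with the iterate bound from Lemma~\ref{lem:xn_norm}, and then to absorb the leftover additive constants into a multiple of $\tau_n$ using the fact that $\tau_n$ is bounded away from $0$. Concretely, note that $\tau_n \doteq \sum_{k=1}^n \alpha_k(1-\alpha_k)$ is a sum of strictly positive terms, so $\tau_n \geq \tau_1 = \alpha_1(1-\alpha_1) > 0$ for all $n \geq 1$; this will let me replace any bounded quantity $c$ by $(c/\tau_1)\tau_n$.

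First I would apply~\eqref{eq:nu linear growth} with $x = x_n$, giving $\norm{\nu(x_n, y)} \leq C_{\ref{lem:v Lipschitz}}\paren{C_{\ref{lem:v Lipschitz}}' + \norm{x_n}}$ uniformly in $y$. Substituting the bound $\norm{x_n} \leq C_{\ref{lem:xn_norm}}\tau_n + \sum_{k=1}^n \alpha_k \norm{\e{1}_k}$ from Lemma~\ref{lem:xn_norm}, and using Assumption~\ref{as:e1}, under which $\zeta' \doteq \sum_{k=1}^\infty \alpha_k \norm{\e{1}_k} < \infty$ almost surely dominates every partial sum $\sum_{k=1}^n \alpha_k\norm{\e{1}_k}$, I get $\norm{\nu(x_n,y)} \leq C_{\ref{lem:v Lipschitz}}\paren{C_{\ref{lem:v Lipschitz}}' + C_{\ref{lem:xn_norm}}\tau_n + \zeta'}$. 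Bounding the two $\tau_n$-free terms by $\tau_n/\tau_1$ times themselves yields the first claim with $\zeta_{\ref{lem:v_norm}} \doteq C_{\ref{lem:v Lipschitz}}\paren{C_{\ref{lem:xn_norm}} + (C_{\ref{lem:v Lipschitz}}' + \zeta')/\tau_1}$, a sample-path dependent constant because $\zeta'$ is.

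For the expectation bound, I would take $\E[\cdot]$ through the same chain: $\E\qty[\norm{\nu(x_n,y)}] \leq C_{\ref{lem:v Lipschitz}}\paren{C_{\ref{lem:v Lipschitz}}' + C_{\ref{lem:xn_norm}}\tau_n + \sum_{k=1}^n \alpha_k \E\qty[\norm{\e{1}_k}]}$, and then invoke~\eqref{eq: e1 lr ake1} of Lemma~\ref{lem: e1 lr}, which gives a \emph{deterministic} bound $\sup_n \sum_{k=1}^n \alpha_k \E\qty[\norm{\e{1}_k}] < \infty$. The same $\tau_n \geq \tau_1$ argument then produces a deterministic $C_{\ref{lem:v_norm}}$. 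The only thing to be careful about is this bookkeeping — that the partial $\e{1}$-sum in Lemma~\ref{lem:xn_norm} is controlled pathwise by the a.s.-finite infinite sum for the first claim, and in expectation by Lemma~\ref{lem: e1 lr} for the second. I do not expect any genuine obstacle; the lemma is essentially a repackaging of Lemmas~\ref{lem:xn_norm}, \ref{lem:v Lipschitz}, and~\ref{lem: e1 lr}.
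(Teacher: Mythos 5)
Your proposal is correct and follows essentially the same route as the paper's proof: chain the linear-growth bound on $\nu$ from Lemma~\ref{lem:v Lipschitz} with the iterate bound of Lemma~\ref{lem:xn_norm}, control the $\e{1}$-partial sums pathwise via Assumption~\ref{as:e1} for the first claim and in expectation via Lemma~\ref{lem: e1 lr} for the second, and absorb the remaining constants into a multiple of $\tau_n$. Your explicit use of $\tau_n \geq \tau_1 = \alpha_1(1-\alpha_1) > 0$ just makes precise the absorption step the paper states implicitly via the monotonicity of $\tau_n$.
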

\begin{proof}
    Having proven that $\nu\paren{x,y}$ is Lipschitz continuous in $x$ in Lemma \ref{lem:v Lipschitz}, we have
    \begin{align}
        \norm{\nu\paren{x_{n},y}} 
        &\leq C_{\ref{lem:v Lipschitz}}(C_{\ref{lem:v Lipschitz}}' + \norm{x_n}), \explain{Lemma \ref{lem:v Lipschitz}}\\
        &\leq C_{\ref{lem:v Lipschitz}}\qty(C_{\ref{lem:v Lipschitz}}' + C_{\ref{lem:xn_norm}}\tau_n +\sum_{k=1}^n \alpha_{k}\norm{\e{1}_{k}}). \explain{Lemma \ref{lem:xn_norm}}\\
        &= \fO\qty(\tau_n + \sum_{k=1}^n \alpha_{k}\norm{\e{1}_{k}}). \label{eq: v intermediate}
    \end{align}
Since \eqref{as: total noise} in Assumption \ref{as:e1} assures us that $\sum_{k=1}^\infty \alpha_{k}\norm{\e{1}_{k}}$ is finite almost surely while $\tau_n$ is monotonically increasing, then there exists some possibly sample-path dependent constant $\zeta_{\ref{lem:v_norm}}$ such that 
\begin{equation}
    \norm{\nu\paren{x_{n},y}} \leq \zeta_{\ref{lem:v_norm}} \tau_n.
\end{equation}

We can also prove a deterministic bound on the expectation of $\norm{\nu\paren{x_{n},Y_{n+1}}}$,
\begin{align}
    \E\qty[\norm{\nu\paren{x_{n},y}}] &= \fO \qty(\E\qty[\tau_n+ \sum_{k=1}^n \alpha_{k}\norm{\e{1}_{k}}]), \\
    &= \fO \qty(\tau_n+ \sum_{k=1}^n \alpha_{k}\E\qty[\norm{\e{1}_{k}}]).
\end{align}
By Lemma \ref{lem: e1 lr}, its easy to see that $\sum_{k=1}^n \alpha_{k}\E\qty[\norm{\e{1}_{k}}]< \infty$.
Therefore, there exists some deterministic constant $C_{\ref{lem:v_norm}}$ such that
\begin{equation}
    \E\qty[\norm{\nu\paren{x_{n},y}}] \leq C_{\ref{lem:v_norm}}\tau_n.
\end{equation}
\end{proof}
Although the two statements in Lemma \ref{lem:v_norm} appear similar, their difference is crucial. Assumption \ref{as:e1} and \eqref{as: total noise} only ensure the existence of a sample-path dependent constant $\zeta_{\ref{lem:v_norm}}$ but its form is unknown, preventing its use for expectations or explicit bounds. In contrast, using \eqref{as: second moment} from Assumption \ref{as:e1}, we derive a universal constant $C_{\ref{lem:v_norm}}$.

\begin{lemma}\label{lem:M_norm_bound}For each $\qty{M_{n}}$, defined in \eqref{eq:M_define}, we have
\begin{equation}
\norm{M_{n+1}}\leq \zeta_{\ref{lem:M_norm_bound}} \tau_n,
\end{equation}
where $\zeta_{\ref{lem:M_norm_bound}}$ is a the sample-path dependent constant. 
\end{lemma}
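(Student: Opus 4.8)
The plan is to bound $\norm{M_{n+1}}$ directly from its definition in~\eqref{eq:M_define}, namely $M_{n+1} = \nu(x_n, Y_{n+2}) - (P\nu)(x_n, Y_{n+1})$, by passing to the triangle inequality and controlling each of the two terms. First I would write
\begin{align}
    \norm{M_{n+1}} \leq \norm{\nu(x_n, Y_{n+2})} + \norm{(P\nu)(x_n, Y_{n+1})},
\end{align}
and observe that $(P\nu)(x_n, Y_{n+1}) = \sum_{y'} P(Y_{n+1}, y')\nu(x_n, y')$ is a convex combination of the vectors $\qty{\nu(x_n, y')}_{y' \in \fY}$, so that $\norm{(P\nu)(x_n, Y_{n+1})} \leq \max_{y'} \norm{\nu(x_n, y')}$ by Jensen's inequality / convexity of the norm. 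Hence $\norm{M_{n+1}} \leq 2 \max_{y \in \fY} \norm{\nu(x_n, y)}$.

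The second step is simply to invoke Lemma~\ref{lem:v_norm}, which gives $\norm{\nu(x_n, y)} \leq \zeta_{\ref{lem:v_norm}} \tau_n$ for every $y \in \fY$, where $\zeta_{\ref{lem:v_norm}}$ is a sample-path dependent constant. Combining with the bound from the first step yields
\begin{align}
    \norm{M_{n+1}} \leq 2\zeta_{\ref{lem:v_norm}} \tau_n,
\end{align}
and we set $\zeta_{\ref{lem:M_norm_bound}} \doteq 2\zeta_{\ref{lem:v_norm}}$, which is again a sample-path dependent constant. This completes the proof.

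There is essentially no obstacle here: the lemma is a short corollary of Lemma~\ref{lem:v_norm} together with the observation that $P\nu$ is an averaging operator and therefore does not increase the norm. The only mild subtlety is the distinction, emphasized in the remark following Lemma~\ref{lem:v_norm}, between the sample-path dependent constant $\zeta_{\ref{lem:v_norm}}$ and a deterministic one: since this lemma only asserts a sample-path dependent bound, using the $\zeta$ version of Lemma~\ref{lem:v_norm} (rather than the $C_{\ref{lem:v_norm}}$ version) is exactly what is needed, and no expectation or uniform-in-$\omega$ control is required. If a deterministic-constant version were wanted later, one would instead use the expectation bound of Lemma~\ref{lem:v_norm} together with a bound on $\E\qty[\norm{M_{n+1}}]$, but that is not what is being claimed here.
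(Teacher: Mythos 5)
Your proposal is correct and follows essentially the same route as the paper: triangle inequality on the two terms of $M_{n+1}$, the observation that $(P\nu)(x_n, Y_{n+1})$ is a convex combination so its norm is at most $\max_{y}\norm{\nu(x_n,y)}$, and then Lemma~\ref{lem:v_norm} with $\zeta_{\ref{lem:M_norm_bound}} \doteq 2\zeta_{\ref{lem:v_norm}}$. Your remark about correctly using the sample-path dependent version of Lemma~\ref{lem:v_norm} rather than the expectation bound is also consistent with how the paper handles this.
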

\begin{proof}
Applying $\norm{\cdot}$ to \eqref{eq:M_define} gives
\begin{align}
    \norm{M_{n+1}} &= \norm{\nu\paren{x_n,Y_{n+2}}-P\nu\paren{x_n,Y_{n+1}} }, \\
    &\leq \norm{P\nu\paren{x_n,Y_{n+1}}} + \norm{\nu\paren{x_n,Y_{n+2}}},\\
    &= \norm{\sum_{y'\in \fY}P(Y_{n+1},y')\nu(x_n,y')}+\norm{\nu\paren{x_n,Y_{n+2}}}, \\
    &\leq \sum_{y'\in \fY}\norm{P(Y_{n+1},y')\nu(x_n,y')}+\norm{\nu\paren{x_n,Y_{n+2}}},\\
    &= \qty(\max_{y \in \fY} \norm{\nu(x_n,y)}) \sum_{y'\in \fY}\abs{P(Y_{n+1},y')}+\norm{\nu (x_n,Y_{n+2})},\\
    &\leq 2 \max_{y \in \fY} \norm{\nu(x_n,y)} \label{eq:M intermediate}
\end{align}
Under Assumption \ref{as:e1}, we can apply the sample-path dependent bound from Lemma \ref{lem:v_norm}, 
\begin{align}
    \norm{M_{n+1}} &\leq 2\zeta_{\ref{lem:v_norm}}\tau_n, \explain{Lemma \ref{lem:v_norm}}\\    
    &= \zeta_{\ref{lem:M_norm_bound}}\tau_n,
\end{align}
with $\zeta_{\ref{lem:M_norm_bound}} \doteq 2\zeta_{\ref{lem:v_norm}}$.
\end{proof}

\begin{lemma} \label{lem: M second moment}
For each $\qty{M_{n}}$, defined in \eqref{eq:M_define}, we have 
\begin{equation}
    \E\qty[\norm{M_{n+1}}^2 \mid \fF_{n+1}] \leq C_{\ref{lem: M second moment}}' (1+ \norm{x_{n}}^2), \label{eq: Mn squared growth}
\end{equation}
and
\begin{equation}
\E\qty[\norm{M_{n+1}}_2^2] \leq C_{\ref{lem: M second moment}}^2 \tau_n^2, \label{eq: M second moment bound}
\end{equation}
where $C_{\ref{lem: M second moment}}'$ and $C_{\ref{lem: M second moment}}$ are deterministic constants and 
\begin{align}
    \fF_{n+1} \doteq \sigma(x_0, Y_1, \dots, Y_{n+1})
\end{align}
is the $\sigma$-algebra until time $n+1$.
\end{lemma}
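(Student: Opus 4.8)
The plan is to derive both bounds from a single deterministic (pathwise) estimate on $\norm{M_{n+1}}$ and then propagate it through the moment bounds on the iterates. The key ingredient is the inequality $\norm{M_{n+1}} \le 2\max_{y\in\fY}\norm{\nu(x_n,y)}$ already obtained inside the proof of Lemma~\ref{lem:M_norm_bound} (cf.~\eqref{eq:M intermediate}), combined with the linear growth of $\nu$ from Lemma~\ref{lem:v Lipschitz}, namely $\norm{\nu(x_n,y)} \le C_{\ref{lem:v Lipschitz}}\big(C_{\ref{lem:v Lipschitz}}' + \norm{x_n}\big)$. Putting these together and using $(a+b)^2 \le 2a^2 + 2b^2$ gives the pathwise bound $\norm{M_{n+1}}^2 \le 8\,C_{\ref{lem:v Lipschitz}}^2\big((C_{\ref{lem:v Lipschitz}}')^2 + \norm{x_n}^2\big)$.

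For~\eqref{eq: Mn squared growth}, I would simply note that $x_n$ is $\fF_{n+1}$-measurable, so conditioning on $\fF_{n+1}$ leaves the right-hand side of the pathwise bound unchanged; setting $C_{\ref{lem: M second moment}}' \doteq 8\,C_{\ref{lem:v Lipschitz}}^2\max\{(C_{\ref{lem:v Lipschitz}}')^2,1\}$ then yields $\E[\norm{M_{n+1}}^2 \mid \fF_{n+1}] \le C_{\ref{lem: M second moment}}'(1+\norm{x_n}^2)$. For~\eqref{eq: M second moment bound}, first pass to the generic operator norm using equivalence of norms on $\R^d$ (so $\norm{v}_2^2 \le c\,\norm{v}^2$ for all $v$ and some constant $c$), then take the full expectation of the pathwise bound. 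This reduces the whole statement to establishing $\E[\norm{x_n}^2] = \fO(\tau_n^2)$.

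To bound $\E[\norm{x_n}^2]$ I would start from Lemma~\ref{lem:xn_norm}, which gives $\norm{x_n} \le C_{\ref{lem:xn_norm}}\tau_n + \sum_{k=1}^n \alpha_k\norm{\e{1}_k}$ almost surely. Squaring and applying $(a+b)^2 \le 2a^2 + 2b^2$, it only remains to control $\E\big[\big(\sum_{k=1}^n \alpha_k\norm{\e{1}_k}\big)^2\big]$. By the Cauchy--Schwarz inequality this is at most $\big(\sum_{k=1}^n \alpha_k\big)\,\E\big[\sum_{k=1}^n \alpha_k\norm{\e{1}_k}^2\big]$; the first factor is $\le \tau_n/(1-\alpha_1)$ since $\alpha_k(1-\alpha_k) \ge (1-\alpha_1)\alpha_k$, and the second factor is uniformly bounded by~\eqref{eq: e1 lr akek2} of Lemma~\ref{lem: e1 lr}. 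Hence $\E[\norm{x_n}^2] = \fO(\tau_n^2) + \fO(\tau_n)$, and since $\tau_n \ge \tau_1 > 0$ is increasing, this is $\fO(\tau_n^2)$. Feeding this back gives $\E[\norm{M_{n+1}}_2^2] \le C_{\ref{lem: M second moment}}^2\tau_n^2$ for a suitable deterministic $C_{\ref{lem: M second moment}}$.

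The only step that is not entirely routine is the handling of the additive-noise contribution to $\E[\norm{x_n}^2]$: the Cauchy--Schwarz split is what sidesteps the cross terms one would otherwise have to estimate, and it is precisely there that the second-moment hypothesis~\eqref{as: second moment} (via Lemma~\ref{lem: e1 lr}) enters; everything else is norm equivalence and elementary inequalities. I note that in the typical RL application where $\e{1}_n \equiv 0$ this last step collapses entirely to $\norm{x_n} \le C_{\ref{lem:xn_norm}}\tau_n$.
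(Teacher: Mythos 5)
Your proposal is correct and follows essentially the same route as the paper's proof: both derive the pathwise bound $\norm{M_{n+1}} \le 2\max_{y}\norm{\nu(x_n,y)}$ from \eqref{eq:M intermediate}, invoke the linear growth of $\nu$ from Lemma~\ref{lem:v Lipschitz} for \eqref{eq: Mn squared growth}, and for \eqref{eq: M second moment bound} combine Lemma~\ref{lem:xn_norm} with the Cauchy--Schwarz split of $\E\qty[\qty(\sum_k\alpha_k\norm{\e{1}_k})^2]$ and \eqref{eq: e1 lr akek2} to get an $\fO(\tau_n^2)$ bound. The only cosmetic difference is that you bound $\E\qty[\norm{x_n}^2]$ directly while the paper routes through $\E\qty[\norm{\nu(x_n,y)}^2]$; the underlying estimates are identical.
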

\begin{proof} 
First, to prove \eqref{eq: Mn squared growth},
we have 
\begin{align}
    \E \qty[\norm{M_{n+1}}^2 \mid \fF_{n+1}] &\leq 4 \max_{y \in \fY} \norm{\nu(x_n,y)}^2 = \fO\qty(1+ \norm{x_n}^2),
\end{align}
where the first inequality results form \eqref{eq:M intermediate} in Lemma \ref{lem:M_norm_bound} and the second inequality results from Lemma \ref{lem:v Lipschitz}.

    Then, to prove \eqref{eq: M second moment bound}, from Lemma \ref{lem:xn_norm} we then have,
    \begin{align}
        \E \qty[\norm{\nu\qty(x_{n},y)}^2] &\leq \E \qty[1+\qty(C_{\ref{lem:xn_norm}}\tau_n+ \sum_{k=1}^n \alpha_{k}\norm{\e{1}_{k}})^2]
         = \fO \qty(\tau_n^2+ \E\qty[\qty(\sum_{k=1}^n \alpha_{k}\norm{\e{1}_{k}})^2]).
    \end{align}
 Recall that by Assumption \ref{as:e1}, $\E\qty[\norm{\e{1}_k}^2] = \fO \qty(\frac{1}{k})$. Examining the right-most term we then have,
 \begin{align}
    \E\qty[\qty(\sum_{k=1}^n \alpha_{k}\norm{\e{1}_{k}})^2] &\leq \E\qty[\qty(\sum_{k=1}^n \alpha_k)\qty(\sum_{k=1}^n \alpha_k \norm{\e{1}_k}^2)], \explain{Cauchy-Schwarz}\\
    &= \fO\qty(\sum_{k=1}^n \alpha_k), \explain{By \eqref{eq: e1 lr akek2} in Lemma \ref{lem: e1 lr}}\\
    &= \fO\qty(\frac{1}{1 - \alpha_1}\sum_{k=1}^n \alpha_k (1 - \alpha_1)),\\ 
    &= \fO\qty(\sum_{k=1}^n \alpha_k (1 - \alpha_k)), \\
    &= \fO \qty(\tau_n).
\end{align}
We then have
\begin{align}
    \E \qty[\norm{\nu \qty(x_n,y)}^2] = \fO(\tau_n^2).\label{eq: v second moment}
\end{align}
Because our bound on $\E \qty[\norm{\nu \qty(x_n,y)}^2]$ is independent of $y$, we have
\begin{align}
    \E \qty[\norm{M_{n+1}}^2] &= \fO \qty(\E \qty[\norm{\nu(x_n,y)}^2]) = \fO(\tau_n^2). \explain{By \eqref{eq: v second moment}}
\end{align}
Due to the equivalence of norms in finite-dimensional spaces, there exists a deterministic constant $C_{\ref{lem: M second moment}}$ such that \eqref{eq: M second moment bound} holds.
\end{proof}

Now, we are ready to present four additional lemmas which we will use to bound the four noise terms in \eqref{eq:sumUn_derivation}.
\begin{lemma}\label{lem:sup_M}
With $\qty{\Mbb_n}$ defined in \eqref{eq:sumUn_derivation},
\begin{equation}
    \lim_{n \rightarrow \infty} \Mbb_n < \infty, \qq{a.s.} 
\end{equation}
\end{lemma}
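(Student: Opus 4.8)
The plan is to bound $\E[\Mbb_n]$ uniformly in $n$ and then pass to the limit with the monotone convergence theorem. Since $\Mbb_n = \sum_{k=1}^n \alpha_k \norm{\Mb_{k-1}}$ is a partial sum of nonnegative terms, the sequence $\qty{\Mbb_n}$ is nondecreasing, so Lemma~\ref{lem: beppo levi} reduces the claim to the finiteness of $\sum_{k=1}^\infty \alpha_k \E[\norm{\Mb_{k-1}}]$: once this is established, $\E[\lim_n \Mbb_n] = \lim_n \E[\Mbb_n] < \infty$, which forces $\lim_n \Mbb_n < \infty$ almost surely.

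Everything therefore hinges on a sufficiently sharp bound for $\E[\norm{\Mb_n}]$, and the key is to exploit the martingale structure of $\qty{M_k}$ rather than the pathwise bound from Lemma~\ref{lem:M_norm_bound}. I would work in $\norm{\cdot}_2$ (switching back costs only a constant by norm equivalence in finite dimensions). Since $M_k = \nu(x_{k-1},Y_{k+1}) - (P\nu)(x_{k-1},Y_k)$ is $\fF_{k+1}$-measurable and $\qty{M_k}$ is a martingale difference sequence, the cross terms vanish: $\E[\indot{M_i}{M_j}] = 0$ for $i \neq j$. Expanding $\Mb_n = \sum_{k=1}^n \alpha_{k,n} M_k$ thus gives $\E[\norm{\Mb_n}_2^2] = \sum_{k=1}^n \alpha_{k,n}^2\, \E[\norm{M_k}_2^2]$. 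Plugging in the second-moment estimate $\E[\norm{M_k}_2^2] \leq C_{\ref{lem: M second moment}}^2 \tau_{k-1}^2$ from Lemma~\ref{lem: M second moment}, bounding $\tau_{k-1} \leq \tau_{n-1}$ by monotonicity, and using Lemma~\ref{lem:bravo_b.2} in the form $\sum_{k=1}^n \alpha_{k,n}^2 \leq \alpha_{n+1}$, I obtain $\E[\norm{\Mb_n}_2^2] \leq C_{\ref{lem: M second moment}}^2\, \tau_{n-1}^2\, \alpha_{n+1}$, and hence by Jensen's inequality $\E[\norm{\Mb_n}] = \fO(\tau_{n-1}\sqrt{\alpha_{n+1}})$.

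Feeding this back in, $\sum_{k=1}^n \alpha_k \E[\norm{\Mb_{k-1}}] = \fO\qty(\sum_{k=1}^n \alpha_k \tau_{k-2}\sqrt{\alpha_k}) = \fO\qty(\sum_{k=1}^n \alpha_k^{3/2}\tau_{k-1})$, using $\tau_{k-2}\leq\tau_{k-1}$. The series $\sum_k \alpha_k^{3/2}\tau_{k-1}$ is finite uniformly in $n$ by \eqref{eq: ai32 ti} of Lemma~\ref{lem:lr}. Hence $\sup_n \E[\Mbb_n] < \infty$, and the conclusion follows from the monotone convergence argument above.

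The main obstacle is recognizing that the deterministic pathwise bound $\norm{M_{k}} \leq \zeta_{\ref{lem:M_norm_bound}}\tau_{k-1}$ of Lemma~\ref{lem:M_norm_bound} is too crude here: propagating it through $\Mb_n$ and $\Mbb_n$ leaves a summand of order $\alpha_k\tau_k = \fO(k^{1-2b})$, which is not summable for any $b \leq 1$. It is precisely the $L^2$-orthogonality of the martingale increments — unavailable for the non-Martingale errors $\eb{2}_n,\eb{3}_n$, which is why those require the separate treatment given earlier — combined with Lemma~\ref{lem:bravo_b.2}, that supplies the extra $\sqrt{\alpha_{n+1}}$ decay needed to make the series converge. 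The only remaining care is bookkeeping with the index shifts (e.g.\ $\Mb_{k-1}$ carries $\tau_{k-2}$ and $\sqrt{\alpha_k}$), which are absorbed harmlessly into the $\fO(\cdot)$.
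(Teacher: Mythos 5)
Your proof is correct and follows essentially the same route as the paper's: monotone convergence reduces the claim to $\sup_n \E[\Mbb_n]<\infty$, which is obtained via Jensen's inequality, the $L^2$-orthogonality of the martingale differences, and the second-moment bound of Lemma~\ref{lem: M second moment}. The only (cosmetic) difference is that you apply Lemma~\ref{lem:bravo_b.2} and the monotonicity of $\tau_k$ immediately to reduce the resulting series to \eqref{eq: ai32 ti}, whereas the paper leaves the bound as $\sum_i \alpha_i \sqrt{\sum_j \alpha_{j,i-1}^2\tau_{j-1}^2}$ and cites \eqref{eq: ai aj^2 tj^2}, whose own proof performs exactly your manipulation.
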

\begin{proof}
We first observe that the sequence $\qty{\Mbb_n}$ defined in \eqref{eq:sumUn_derivation} is positive and monotonically increasing. Therefore by the monotone convergence theorem, it converges almost surely to a (possibly infinite) limit which we denote as,
\begin{equation}
    \Mbb_\infty \doteq \lim_{n \rightarrow \infty }\Mbb_n \qq{a.s.}
\end{equation}
Then, we will utilize a generalization of Lebesgue's monotone convergence theorem (Lemma \ref{lem: beppo levi}) to prove that the limit $\Mbb_\infty$ is finite almost surely. From Lemma \ref{lem: beppo levi}, we see that 
\begin{align}
    \E\qty[\Mbb_\infty] = \lim_{n\rightarrow \infty} \E \qty[\Mbb_n].
\end{align}
Therefore, to prove that $\Mbb_\infty$ is almost surely finite, it is sufficient to prove that $\lim_{n\rightarrow \infty} \E \qty[\Mbb_n] < \infty$. To this end, we proceed by bounding the expectation of $\qty{\Mbb_{n}}$, by first starting with $\qty{\Mb_n}$ from \eqref{eq:Un_norm_bar_bounds}. We have,
\begin{align}
    \E\left[\norm{ \Mb_n} \right] &= \E\left[\norm{\sum_{i=1}^{n} \alpha_{i,n} M_i}\right], \\
    &= \fO\qty( \sqrt{\E\left[\norm{\sum_{i=1}^{n} \alpha_{i,n} M_i}_2^2\right]}), \explain{Jensen's Ineq.} \\
    &=\fO\qty(\sqrt{\sum_{i=1}^{n} \alpha_{i,n}^2 \E\left[\norm{M_i}_2^2\right]}), \explain{$M_i$ is a Martingale Difference Series}\\
    &= \fO\qty(\sqrt{\sum_{i=1}^{n} \alpha_{i,n}^2 \tau_{i}^2}), \explain{Lemma \ref{lem: M second moment}} \label{eq: M stop}
\end{align}
Then using the definition of $\qty{\Mbb_n}$ from \eqref{eq:sumUn_derivation}, we have
\begin{align}
    \E\qty[\Mbb_{n}]  &= \sum_{i=1}^n \alpha_i \E\left[\norm{\Mb_{i-1}}\right] = \fO\qty( \sum_{i=1}^n \alpha_i \sqrt{\sum_{j=1}^{i-1} \alpha_{j,i-1}^2 \tau_{j-1}^2}).
\end{align}
Then, by \eqref{eq: ai aj^2 tj^2} in Lemma \ref{lem:lr}, we have
\begin{equation}
    \sup_n \E\qty[\Mbb_{n}] < \infty,
\end{equation}
and since $\qty{\E\qty[\Mbb_{n}]}$ is also monotonically increasing, we have
\begin{equation}
    \lim_{n \rightarrow \infty} \E\qty[\Mbb_{n}] < \infty,
\end{equation}
which implies that $\Mbb_\infty < \infty$ almost surely.
\end{proof}

\begin{lemma}\label{lem:sup_e1}
With $\qty{\ebb{1}_n}$ defined in \eqref{eq:sumUn_derivation},
\begin{equation}
    \lim_{n \rightarrow \infty} \ebb{1}_n < \infty, \  \qq{a.s.}
\end{equation}
\end{lemma}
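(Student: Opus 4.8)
The plan is to follow the template established for $\Mbb_n$ in Lemma~\ref{lem:sup_M}. First, note that $\ebb{1}_n = \sum_{k=1}^n \alpha_k\norm{\eb{1}_{k-1}}$ is a sum of nonnegative terms, hence monotonically increasing in $n$; by the monotone convergence theorem it converges almost surely to a (possibly infinite) limit $\ebb{1}_\infty \doteq \lim_{n\to\infty}\ebb{1}_n$. Applying the generalized monotone convergence theorem (Lemma~\ref{lem: beppo levi}) gives $\E[\ebb{1}_\infty] = \lim_{n\to\infty}\E[\ebb{1}_n]$, so it suffices to show $\sup_n \E[\ebb{1}_n] < \infty$.

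Next I would bound the inner term. Since $\eb{1}_{k-1} = \sum_{j=1}^{k-1}\alpha_{j,k-1}\e{1}_j$ with all $\alpha_{j,k-1}\ge 0$, the triangle inequality together with linearity of expectation yields $\E[\norm{\eb{1}_{k-1}}] \le \sum_{j=1}^{k-1}\alpha_{j,k-1}\E[\norm{\e{1}_j}]$. Summing against $\alpha_k$,
\[
\E[\ebb{1}_n] = \sum_{k=1}^n \alpha_k\,\E[\norm{\eb{1}_{k-1}}] \;\le\; \sum_{k=1}^n \alpha_k \sum_{j=1}^{k-1}\alpha_{j,k-1}\,\E[\norm{\e{1}_j}],
\]
which is precisely the quantity whose supremum over $n$ is shown to be finite in \eqref{eq: e1 lr ak sum ake1} of Lemma~\ref{lem: e1 lr}. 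Hence $\sup_n \E[\ebb{1}_n] < \infty$, and since $\{\E[\ebb{1}_n]\}$ is itself increasing, $\lim_{n\to\infty}\E[\ebb{1}_n] < \infty$, so $\E[\ebb{1}_\infty] < \infty$ and therefore $\ebb{1}_\infty < \infty$ almost surely.

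I expect the only real subtlety is recognizing that a purely pathwise estimate is insufficient: using $\alpha_{j,k-1}\le\alpha_{k-1}$ (Lemma~\ref{lem:bravo b1}) together with $\sum_j\alpha_j\norm{\e{1}_j}<\infty$ from Assumption~\ref{as:e1}, one only obtains that $\norm{\eb{1}_{k-1}}$ is almost surely \emph{bounded}, not that it decays, so $\sum_k\alpha_k\norm{\eb{1}_{k-1}}$ need not converge on that basis alone; moreover the resulting sample-path constant has unknown order. Passing through expectations and exploiting the $L^2$ rate \eqref{as: second moment} (via Lemma~\ref{lem: e1 lr}) is what closes the gap, in direct analogy with the handling of $\Mbb_n$ and $\ebb{2}_n$ in the proof of Theorem~\ref{thm:main_conv_result}.
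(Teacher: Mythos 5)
Your proposal is correct and follows essentially the same route as the paper's proof: establish monotonicity of $\ebb{1}_n$, pass to expectations via Lemma~\ref{lem: beppo levi}, bound $\E\qty[\norm{\eb{1}_{k-1}}]$ by the triangle inequality, and invoke \eqref{eq: e1 lr ak sum ake1} from Lemma~\ref{lem: e1 lr}. Your closing remark about why a purely pathwise bound fails is a correct observation of the same subtlety the paper's expectation-based argument is designed to circumvent.
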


\begin{proof}
We first observe that the sequence $\qty{\ebb{1}_n}$ defined in \eqref{eq:sumUn_derivation} is positive and monotonically increasing. Therefore by the monotone convergence theorem, it converges almost surely to a (possibly infinite) limit which we denote as,
\begin{equation}
    \ebb{1}_\infty \doteq \lim_{n \rightarrow \infty }\ebb{1}_n \qq{a.s.}
\end{equation}

Then, we utilize a generalization of Lebesgue's monotone convergence theorem (Lemma \ref{lem: beppo levi}) to prove that the limit $\ebb{1}_\infty$ is finite almost surely. By Lemma \ref{lem: beppo levi}, we have
\begin{align}
    \E\qty[\ebb{1}_\infty] = \lim_{n\rightarrow \infty} \E \qty[\ebb{1}_n].
\end{align}
Therefore, to prove that $\ebb{1}_\infty$ is almost surely finite, it is sufficient to prove that $\lim_{n\rightarrow \infty} \E \qty[\ebb{1}_n] < \infty$. To this end, we proceed by bounding the expectation of $\qty{\ebb{1}_{n}}$,
\begin{align}
    \E\left[\ebb{1}_n\right] = \sum_{i=1}^n \alpha_i \E\left[\norm{\eb{1}_{i-1}}\right]
    \leq \sum_{i=1}^n \alpha_i \sum_{j=1}^{i-1} \alpha_{j,i-1} \E\qty[\norm{\e{1}_j}].
\end{align}
Then, by \eqref{eq: e1 lr ak sum ake1} in Lemma \ref{lem: e1 lr}, we have,
\begin{equation}
    \sup_n \E\qty[\ebb{1}_{n}] < \infty,
\end{equation}
and since $\qty{\E\qty[\ebb{1}_{n}]}$ is also monotonically increasing, we have
\begin{equation}
    \lim_{n \rightarrow \infty} \E\qty[\ebb{1}_{n}] < \infty.
\end{equation}
which implies that $\ebb{1}_\infty < \infty$ almost surely.

\end{proof}

\begin{lemma}\label{lem:sup_e3}
With $\qty{\ebb{3}_n}$ defined in \eqref{eq:sumUn_derivation}, we have
\begin{equation}
    \lim_{n \rightarrow \infty} \  \ebb{3}_n < \infty,  \qq{a.s.}
\end{equation}
\end{lemma}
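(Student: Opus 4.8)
The plan is as follows. Since $\ebb{3}_n$ is nonnegative and nondecreasing in $n$, its limit exists in $[0,\infty]$, so it suffices to exhibit, on each sample path, a finite upper bound. The strategy is a three-step reduction: (i) bound $\norm{\e{3}_k}$ by the one-step increment $\norm{x_k - x_{k-1}}$ via the Lipschitz property of $\nu$; (ii) propagate this to a bound $\norm{\eb{3}_n} = \fO(\sqrt{\alpha_n})$ up to a sample-path constant; and (iii) sum this against $\alpha_k$ and invoke Lemma~\ref{lem:lr}.

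For step (i): since $\e{3}_k = \nu(x_k, Y_{k+1}) - \nu(x_{k-1}, Y_{k+1})$ (by \eqref{eq:e3_define}) differs only in its first argument, Lemma~\ref{lem:v Lipschitz} gives $\norm{\e{3}_k} \le C_{\ref{lem:v Lipschitz}} \norm{x_k - x_{k-1}}$. From \eqref{eq:skm_markov}, $x_k - x_{k-1} = \alpha_k\paren{H(x_{k-1}, Y_k) - x_{k-1} + \e{1}_k}$, so \eqref{eq:H linear growth} yields $\norm{x_k - x_{k-1}} \le \alpha_k\paren{C_H + 2\norm{x_{k-1}} + \norm{\e{1}_k}}$. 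I would then substitute the bound $\norm{x_{k-1}} \le C_{\ref{lem:xn_norm}}\tau_{k-1} + \sum_{j} \alpha_j\norm{\e{1}_j}$ from Lemma~\ref{lem:xn_norm}; since the additive-noise sum is finite almost surely by \eqref{as: total noise} and $\tau_{k-1} \ge \tau_1 > 0$ for $k \ge 2$, all the constants are absorbed into a sample-path constant, giving $\norm{\e{3}_k} \le \zeta\,\alpha_k\paren{\tau_{k-1} + \norm{\e{1}_k}}$ for $k \ge 2$ (the single $k = 1$ term is trivially bounded and irrelevant to convergence).

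For steps (ii)--(iii): starting from $\norm{\eb{3}_n} \le \sum_{k=1}^n \alpha_{k,n}\norm{\e{3}_k}$, the key is to use $\alpha_{k,n} \le \min\qty{\alpha_k, \alpha_n} \le \sqrt{\alpha_k\alpha_n}$, which follows from Lemma~\ref{lem:bravo b1}, to extract a factor $\sqrt{\alpha_n}$:
\[
\norm{\eb{3}_n} \le \zeta\sqrt{\alpha_n}\sum_{k=1}^n \alpha_k^{3/2}\paren{\tau_{k-1} + \norm{\e{1}_k}} \le \zeta'\sqrt{\alpha_n},
\]
where the last step holds because $\sum_k \alpha_k^{3/2}\tau_{k-1} < \infty$ by \eqref{eq: ai32 ti} of Lemma~\ref{lem:lr}, and $\sum_k \alpha_k^{3/2}\norm{\e{1}_k} \le \sum_k \alpha_k\norm{\e{1}_k} < \infty$ almost surely. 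Then $\ebb{3}_n = \sum_k \alpha_k\norm{\eb{3}_{k-1}} \le \zeta'\sum_k \alpha_k\sqrt{\alpha_{k-1}} \le \sqrt{2}\,\zeta'\sum_k \alpha_k^{3/2}$ (using $\alpha_{k-1} \le 2\alpha_k$), and the last series is finite: for instance it is dominated by $\tau_1^{-1}\sum_k \alpha_k^{3/2}\tau_{k-1}$, which is finite by Lemma~\ref{lem:lr}. Letting $n \to \infty$ gives $\lim_n \ebb{3}_n < \infty$ almost surely.

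The hard part is step (ii). A plain triangle-inequality bound only shows that $\norm{\eb{3}_n}$ is bounded, and $\zeta'\sum_k \alpha_k = \infty$ when $b \le 1$, so the extra $\sqrt{\alpha_n}$ decay is essential; the device $\alpha_{k,n}^2 \le \alpha_k\alpha_n$ produces it, and the resulting $3/2$-power sums are exactly the ones Lemma~\ref{lem:lr} shows to converge under $b > \frac{4}{5}$. An alternative derivation of $\norm{\eb{3}_n} \le \zeta\sqrt{\alpha_{n+1}}$ uses Cauchy--Schwarz together with $\sum_k \alpha_{k,n}^2 \le \alpha_{n+1}$ (Lemma~\ref{lem:bravo_b.2}), \eqref{eq: ai2 ti2}, and \eqref{eq: e1 lr ak2ek2}.
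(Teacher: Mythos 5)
Your proposal is correct. Its first half is identical in substance to the paper's: bound $\norm{\e{3}_k}\le C_{\ref{lem:v Lipschitz}}\norm{x_k-x_{k-1}}$ via the Lipschitz continuity of $\nu$, expand the increment using \eqref{eq:skm_markov} and \eqref{eq:H linear growth}, plug in Lemma~\ref{lem:xn_norm}, and absorb the almost surely finite sum $\sum_j\alpha_j\norm{\e{1}_j}$ into a sample-path constant to get $\norm{\e{3}_k}\le\zeta\,\alpha_k(\tau_{k-1}+\norm{\e{1}_k})$. Where you diverge is in the summation step. The paper uses only $\alpha_{k,n}\le\alpha_n$ (Lemma~\ref{lem:bravo b1}) to obtain $\norm{\eb{3}_n}\le\zeta'\alpha_n\sum_{i\le n}\alpha_i\tau_i$, and then finishes with the double-sum estimate \eqref{eq: ai2 sum aj tj}. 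You instead use $\alpha_{k,n}\le\min\{\alpha_k,\alpha_n\}\le\sqrt{\alpha_k\alpha_n}$ to collapse $\norm{\eb{3}_n}$ to $\fO(\sqrt{\alpha_n})$ up to a sample-path constant, and finish with \eqref{eq: ai32 ti} and the elementary convergence of $\sum_k\alpha_k^{3/2}$; all the individual inequalities you invoke ($\alpha_{k-1}\le 2\alpha_k$, absorption of the $k=1$ term, $\tau_{k-1}\ge\tau_1>0$ for $k\ge 2$) check out. Both routes need exactly the regime $b>\tfrac{4}{5}$ and both are valid; your geometric-mean device is arguably cleaner for this lemma, while the paper's bound $\norm{\eb{3}_n}\le\zeta'\alpha_n\sum_i\alpha_i\tau_i$ is sharper (e.g., $\fO(\log^2 n/n)$ versus your $\fO(n^{-1/2})$ at $b=1$) and is the form reused later in Corollary~\ref{lem: e3 rate} for the convergence-rate analysis, so the weaker $\sqrt{\alpha_n}$ bound would not substitute for it there.
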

\begin{proof}
Beginning with the definition of $\eb{3}_n$ in \eqref{eq:Un_norm_bar_bounds}, we have
\begin{align}
    \norm{\eb{3}_n} &= \norm{\sum_{i=1}^n \alpha_{i,n}\paren{ \nu\paren{x_{i},Y_{i+1}} - \nu\paren{x_{i-1},Y_{i+1}}}}, \\
    &\leq \sum_{i=1}^n \alpha_{i,n}\norm{ \nu\paren{x_{i},Y_{i+1}} - \nu\paren{x_{i-1},Y_{i+1}}}, \\
    &\leq C_{\ref{lem:v Lipschitz}}\sum_{i=1}^n \alpha_{i,n}\norm{ x_i - x_{i-1}}, \explain{Lemma \ref{lem:v Lipschitz}}\\
    &\leq C_{\ref{lem:v Lipschitz}} \sum_{i=1}^n \alpha_{i,n}\alpha_{i}\qty(\norm{H\paren{x_{i-1},Y_{i}}}+\norm{x_{i-1}} + \norm{\e{1}_i}), \explain{By \eqref{eq:skm_markov}}\\
    &\leq C_{\ref{lem:v Lipschitz}} \sum_{i=1}^n \alpha_{i,n}\alpha_{i}\qty(2\norm{x_{i-1}}+C_H+ \norm{\e{1}_i}),
    \explain{By \eqref{eq:H linear growth}}\\
    &\leq C_{\ref{lem:v Lipschitz}} \sum_{i=1}^n \alpha_{i,n}\alpha_{i}\qty(2C_{\ref{lem:xn_norm}}\tau_{i-1} +2\sum_{k=1}^{i-1} \alpha_{k}\norm{\e{1}_{k}} +C_H + \norm{\e{1}_i}), \explain{Lemma \ref{lem:xn_norm}} \label{eq: e3 stop}
\end{align}
Because Assumption \ref{as:e1} assures us that $\sum_{k=1}^\infty \alpha_{k}\norm{\e{1}_{k}}$ is almost surely finite, then there exists some sample-path dependent constant we denote as $\zeta_{\ref{lem:sup_e3}}$ where,
\begin{align}
    \norm{\eb{3}_n} &\leq \zeta_{\ref{lem:sup_e3}} \sum_{i=1}^n \alpha_{i,n}\alpha_{i} \qty(\tau_{i-1} + \norm{\e{1}_i}), \explain{Assumption \ref{as:e1}}\\
    &\leq \zeta_{\ref{lem:sup_e3}} \qty(\sum_{i=1}^n \alpha_{i,n}\alpha_{i}\tau_i + \sum_{i=1}^n \alpha_{i,n}\alpha_{i}\norm{\e{1}_i}), \explain{$\tau_i$ is increasing}\\
    &\leq \zeta_{\ref{lem:sup_e3}} \alpha_n \qty(\sum_{i=1}^n \alpha_i \tau_i + \sum_{i=1}^n \alpha_{i}\norm{\e{1}_i}). \explain{Lemma \ref{lem:bravo b1}}.
\end{align}
Again, from Assumption \ref{as:e1} we can conclude that there exists some other sample-path dependent constant we denote as $\zeta_{\ref{lem:sup_e3}}'$ where
\begin{equation}
    \norm{\eb{3}_n} \leq \zeta_{\ref{lem:sup_e3}}' \alpha_n \sum_{i=1}^n \alpha_i \tau_i.
\end{equation}
Therefore, from the definition of $\ebb{3}_n$ in \eqref{eq:sum_Uk_converges}
\begin{equation}
    \ebb{3}_n \leq \zeta_{\ref{lem:sup_e3}}' \sum_{i=1}^n \alpha_i^2 \sum_{j=1}^{i-1} \alpha_j \tau_j.
\end{equation}
So, by \eqref{eq: ai2 sum aj tj} in Lemma \ref{lem:lr}
\begin{equation}
    \sup_n \  \ebb{3}_n \leq \sup_n \ \zeta_{\ref{lem:sup_e3}}' \sum_{i=1}^n \alpha_i^2 \sum_{j=1}^{i-1} \alpha_j \tau_j < \infty \qq{a.s.}
\end{equation}
Then, the monotone convergence theorem proves the lemma.
\end{proof}

To prove \eqref{eq:limit_to_0} holds almost surely, we introduce four lemmas which we will subsequently use to prove an extension of Theorem 2 from \citep{borkar2009stochastic} in Section \ref{sec:borkar_ext}. 
\begin{lemma} \label{lem:martingale_series_bound}
    We have 
    \begin{equation}
        \sup_{n} \norm{\sum_{k=1}^n\alpha_k M_k} < \infty \qq{a.s.}
    \end{equation}
\end{lemma}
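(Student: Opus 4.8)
The plan is to recognize the partial sums $S_n \doteq \sum_{k=1}^n \alpha_k M_k$ as an $L^2$-bounded martingale and then invoke the martingale convergence theorem, since an almost surely convergent sequence is almost surely bounded.

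First I would fix the filtration $\qty{\fF_{n+1}}$ with $\fF_{n+1} = \sigma(x_0, Y_1, \dots, Y_{n+1})$ as in Lemma \ref{lem: M second moment}. A straightforward induction on \eqref{eq:skm_markov}, using that $\qty{\e{1}_n}$ is adapted, shows $x_n$ is $\fF_n$-measurable, hence $\fF_{n+1}$-measurable. Combined with the Markov property this gives $\E\qty[\nu(x_n, Y_{n+2}) \mid \fF_{n+1}] = (P\nu)(x_n, Y_{n+1})$, so $\E\qty[M_{n+1} \mid \fF_{n+1}] = 0$; that is, $\qty{M_{n+1}}$ is a martingale difference sequence and $\qty{S_n}$ is a martingale adapted to $\qty{\fF_{n+1}}$ (this is already asserted below \eqref{eq:e3_define}).

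Next I would bound the second moment. By orthogonality of martingale differences in $L^2$,
\begin{align}
    \E\qty[\norm{S_n}_2^2] = \sum_{k=1}^n \alpha_k^2 \E\qty[\norm{M_k}_2^2] \leq C_{\ref{lem: M second moment}}^2 \sum_{k=1}^n \alpha_k^2 \tau_{k-1}^2 \leq C_{\ref{lem: M second moment}}^2 \sum_{k=1}^n \alpha_k^2 \tau_k^2,
\end{align}
where the first inequality is \eqref{eq: M second moment bound} in Lemma \ref{lem: M second moment} and the last uses that $\tau_k$ is increasing. By \eqref{eq: ai2 ti2} in Lemma \ref{lem:lr}, the right-hand side is bounded uniformly in $n$, so $\sup_n \E\qty[\norm{S_n}_2^2] < \infty$.

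Finally, since $\qty{S_n}$ is an $L^2$-bounded (hence $L^1$-bounded) martingale taking values in the finite-dimensional space $\R^d$, the martingale convergence theorem implies $S_n$ converges almost surely to a finite limit. An almost surely convergent sequence is almost surely bounded, so $\sup_n \norm{S_n}_2 < \infty$ a.s., and by the equivalence of norms on $\R^d$ the same holds for the generic operator norm $\norm{\cdot}$, giving the claim. I expect no serious obstacle here: the lemma is essentially the combination of the second-moment estimate of Lemma \ref{lem: M second moment} with the learning-rate summability \eqref{eq: ai2 ti2}; the only point needing care is confirming the martingale property against a filtration for which both $x_n$ and $\e{1}_n$ are adapted.
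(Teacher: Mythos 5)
Your proposal is correct and follows essentially the same route as the paper: bound $\E\qty[\norm{\sum_{k=1}^n \alpha_k M_k}_2^2]$ by $\sum_{k=1}^n \alpha_k^2 \E\qty[\norm{M_k}_2^2]$ using the martingale-difference orthogonality, control this via Lemma \ref{lem: M second moment} and \eqref{eq: ai2 ti2} of Lemma \ref{lem:lr}, and conclude with Doob's martingale convergence theorem. Your explicit verification of the martingale property against the filtration and the careful $\tau_{k-1}$ versus $\tau_k$ indexing are slightly more thorough than the paper's presentation, but the argument is the same.
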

\begin{proof}
 Recall that $M_k$ is a Martingale difference series. Then, the Martingale sequence $\qty{\sum_{k=1}^n \alpha_k M_k}$
    is bounded in $L^2$ with,
    \begin{align}
        \E\left[\norm{\sum_{k=1}^n \alpha_k M_k}_2\right] &\leq \sqrt{\E \left[ \norm{\sum_{k=1}^n \alpha_k M_k }_2^2\right]}, \explain{Jensen's Ineq.}\\
        &= \sqrt{\sum_{k=1}^n \alpha_k^2 \E \left[ \norm{M_k}_2^2 \right]}, \explain{$M_i$ is a Martingale Difference Series} \\
        &\leq C_{\ref{lem: M second moment}} \sqrt{\sum_{k=1}^n \alpha_k^2 \tau_k^2}. \explain{Lemma \ref{lem: M second moment}}
    \end{align}
    Lemma \ref{lem:lr} then gives 
    \begin{align}
         \sup_n \ C_{\ref{lem: M second moment}} \sqrt{\sum_{k=1}^n \alpha_k^2 \tau_k^2} &< \infty
    \end{align}
    
    Doob's martingale convergence theorem implies that $\qty{\sum_{k=1}^n \alpha_k M_k}$ converges to an almost surely finite random variable, which proves the lemma.
\end{proof}

\begin{lemma} \label{lem:ak_e2}
    We have,
    \begin{align}
        \sup_{n} \norm{\sum_{k=1}^n\alpha_k\e{2}_k} < \infty \qq{a.s.}
    \end{align}
\end{lemma}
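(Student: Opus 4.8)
The plan is to exploit the telescoping structure hidden in $\{\e{2}_k\}$. Setting $a_k \doteq \nu(x_{k-1}, Y_k)$, the definition \eqref{eq:e2_define} reads $\e{2}_k = a_k - a_{k+1}$, so $\sum_{k=1}^n \alpha_k \e{2}_k$ is an Abel-type sum of consecutive differences. Summation by parts then gives
\begin{align}
    \sum_{k=1}^n \alpha_k \e{2}_k = \alpha_1 a_1 - \alpha_n a_{n+1} + \sum_{k=2}^n (\alpha_k - \alpha_{k-1}) a_k,
\end{align}
and it remains to bound the three pieces, uniformly in $n$, using the growth estimate on $\nu$.

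For the head term, $\norm{\alpha_1 a_1} = \alpha_1 \norm{\nu(x_0, Y_1)} \le \alpha_1 \max_{y \in \fY} \norm{\nu(x_0, y)}$, which is a finite deterministic constant since $\fY$ is finite and $x_0$ is fixed (Lemma \ref{lem:v Lipschitz}). For the boundary term, Lemma \ref{lem:v_norm} yields $\norm{a_{n+1}} \le \zeta_{\ref{lem:v_norm}} \tau_n$, hence $\alpha_n \norm{a_{n+1}} \le \zeta_{\ref{lem:v_norm}} \alpha_n \tau_n$; combining $\alpha_n = \fO(n^{-b})$ with the order of $\tau_n$ from \eqref{eq: tau_n order} in Lemma \ref{lem:lr} shows $\alpha_n \tau_n = \fO(n^{1-2b})$ (or $\fO((\log n)/n)$ when $b=1$), which tends to $0$ and is in particular bounded. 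For the remaining sum, Lemma \ref{lem:v_norm} again gives $\norm{a_k} \le \zeta_{\ref{lem:v_norm}} \tau_{k-1}$, so, after reindexing $j = k-1$,
\begin{align}
    \sum_{k=2}^n \abs{\alpha_k - \alpha_{k-1}} \norm{a_k} \le \zeta_{\ref{lem:v_norm}} \sum_{j=1}^{n-1} \abs{\alpha_j - \alpha_{j+1}} \tau_j,
\end{align}
which is bounded uniformly in $n$ by \eqref{eq: abs ai ti} in Lemma \ref{lem:lr}. Adding the three bounds produces a sample-path dependent but almost surely finite quantity dominating $\norm{\sum_{k=1}^n \alpha_k \e{2}_k}$ for every $n$, which is the claim.

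I do not anticipate a real obstacle here: this is essentially the bound on $\eb{2}_n$ from the proof of Theorem \ref{thm:main_conv_result}, only simpler because the weights $\alpha_k$ do not depend on $n$, so no further monotonicity of $\alpha_{k,n}$ is needed and the tail sum matches the already-established learning-rate estimate \eqref{eq: abs ai ti} directly. The only minor point to be careful about is that $\tau_0 = 0$, which is why the $k=1$ term is handled as a deterministic constant rather than through Lemma \ref{lem:v_norm}.
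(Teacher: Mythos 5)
Your proof is correct and is essentially the paper's own argument: the summation-by-parts identity you write for $\sum_{k=1}^n \alpha_k \e{2}_k$ is exactly the telescoping decomposition in \eqref{eq: ak e2 intermediate} (the paper absorbs the head term $\alpha_1 a_1$ into the sum via the convention $\alpha_0 \doteq 0$), and both proofs then bound the boundary term by $\zeta_{\ref{lem:v_norm}}\alpha_n\tau_n$ and the remaining sum by \eqref{eq: abs ai ti} of Lemma \ref{lem:lr}. Your explicit handling of the $k=1$ term as a deterministic constant is a slightly more careful treatment of the $\tau_0$ edge case than the paper's, but the route is the same.
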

\begin{proof}
Utilizing the definition of $\e{2}_k$ in \eqref{eq:e2_define}, we have
    \begin{align}
        \sum_{k=1}^n \alpha_{k}\e{2}_k &= -\sum_{k=1}^n \alpha_{k} \paren{\nu\paren{x_{k},Y_{k+1}} - \nu\paren{x_{k-1},Y_{k}}},\\
        &= -\sum_{k=1}^n \alpha_{k}\nu\paren{x_{k},Y_{k+1}} - \alpha_{k-1}\nu\paren{x_{k-1},Y_{k}}  + \alpha_{k-1}\nu\paren{x_{k-1},Y_{k}} - \alpha_{k} \nu\paren{x_{k-1},Y_{k}}, \\
        &= -\alpha_{n}\nu\paren{x_{n},Y_{n+1}} - \sum_{k=1}^n\paren{\alpha_{k-1}-\alpha_{k}}\nu\paren{x_{k-1},Y_k}. \explain{$\alpha_0$ = 0} \label{eq: ak e2 intermediate}
    \end{align}
    The triangle inequality gives
    \begin{align}
        \norm{\sum_{k=1}^n \alpha_{k}\e{2}_k}
        &\leq \alpha_n \norm{\nu\paren{x_{n},Y_{n+1}}} + \sum_{k=1}^n \left|\alpha_{k-1}-\alpha_{k} \right|\norm{\nu\paren{x_{k-1},Y_k}}, \\
        &\leq \zeta_{\ref{lem:v_norm}} \paren{\alpha_n\tau_n + \sum_{k=1}^n \left|\alpha_{k-1}-\alpha_{k} \right|\tau_{k-1}}, &&\text{(Lemma \ref{lem:v_norm})}\\
        &= \zeta_{\ref{lem:v_norm}} \qty(\alpha_n\tau_n  + \alpha_1 \tau_1 + \sum_{k=1}^{n-1} \abs{\alpha_{k}-\alpha_{k+1}}\tau_{k}) \explain{$\alpha_0 \doteq 0$}.
    \end{align}
Its easy to see that $\lim_{n \rightarrow \infty} \alpha_n\tau_n = 0$, and $\alpha_1 \tau_1$ is simply a deterministic and finite constant. Therefore, by Lemma \ref{lem:lr} we have
\begin{align}
    \sup_n \sum_{k=1}^{n} \abs{\alpha_{k}-\alpha_{k+1}}\tau_{k} &< \infty \qq{a.s.}
\end{align}
which proves the lemma.

\end{proof}

\begin{lemma} \label{lem:ak_e3}
    We have, 
    \begin{equation}
        \sup_{n}  \norm{\sum_{k=1}^n\alpha_k\e{3}_k} < \infty \qq{a.s.}
    \end{equation}
\end{lemma}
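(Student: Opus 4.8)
\textbf{Proof proposal for Lemma \ref{lem:ak_e3}.} The plan is to bound the absolute series $\sum_{k=1}^\infty \alpha_k \norm{\e{3}_k}$ and show it is finite almost surely; the claim then follows immediately from the triangle inequality, since $\norm{\sum_{k=1}^n \alpha_k \e{3}_k} \le \sum_{k=1}^n \alpha_k \norm{\e{3}_k} \le \sum_{k=1}^\infty \alpha_k\norm{\e{3}_k}$ for every $n$. This mirrors the argument in Lemma \ref{lem:sup_e3}, but is in fact simpler because we no longer carry the $\alpha_{i,n}$ weights.

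First I would control a single term $\e{3}_k = \nu(x_k, Y_{k+1}) - \nu(x_{k-1}, Y_{k+1})$. By the Lipschitz continuity of $\nu$ in its first argument (Lemma \ref{lem:v Lipschitz}), $\norm{\e{3}_k} \le C_{\ref{lem:v Lipschitz}} \norm{x_k - x_{k-1}}$. From the update rule \eqref{eq:skm_markov} and the linear growth bound \eqref{eq:H linear growth}, $\norm{x_k - x_{k-1}} \le \alpha_k\qty(\norm{H(x_{k-1}, Y_k)} + \norm{x_{k-1}} + \norm{\e{1}_k}) \le \alpha_k\qty(2\norm{x_{k-1}} + C_H + \norm{\e{1}_k})$, and then Lemma \ref{lem:xn_norm} gives $\norm{x_{k-1}} \le C_{\ref{lem:xn_norm}}\tau_{k-1} + \sum_{j=1}^{k-1}\alpha_j\norm{\e{1}_j}$. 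Since \eqref{as: total noise} in Assumption \ref{as:e1} ensures $\sum_{j=1}^\infty \alpha_j\norm{\e{1}_j} < \infty$ a.s., and $\tau_{k-1} \le \tau_k$ is monotonically increasing, there is a sample-path dependent constant $\zeta_{\ref{lem:ak_e3}}$ with $\norm{\e{3}_k} \le \zeta_{\ref{lem:ak_e3}} \, \alpha_k\qty(\tau_k + \norm{\e{1}_k})$ for all $k$.

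Next I would sum: $\sum_{k=1}^n \alpha_k\norm{\e{3}_k} \le \zeta_{\ref{lem:ak_e3}}\qty(\sum_{k=1}^n \alpha_k^2\tau_k + \sum_{k=1}^n \alpha_k^2\norm{\e{1}_k})$. The first sum is bounded uniformly in $n$ by \eqref{eq: ai2 ti} in Lemma \ref{lem:lr}. For the second, $\alpha_k \le 1$ gives $\sum_{k=1}^n \alpha_k^2\norm{\e{1}_k} \le \sum_{k=1}^n \alpha_k\norm{\e{1}_k}$, which is finite a.s. by \eqref{as: total noise}. Hence $\sup_n \sum_{k=1}^n \alpha_k\norm{\e{3}_k} < \infty$ a.s., and the triangle inequality completes the proof.

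I do not anticipate a genuine obstacle here: every ingredient (Lipschitzness of $\nu$, linear growth of $H$, the iterate bound, the learning-rate summability, and the additive-noise assumption) is already available, and the only mild care needed is to keep the sample-path constant honest — i.e., to note that absorbing $\sum_j \alpha_j\norm{\e{1}_j}$ and $C_H$ into $\zeta_{\ref{lem:ak_e3}}$ is legitimate precisely because Assumption \ref{as:e1} makes that tail finite almost surely, exactly as in the proof of Lemma \ref{lem:sup_e3}.
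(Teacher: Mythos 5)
Your proposal is correct and follows essentially the same route as the paper's proof: triangle inequality, Lipschitzness of $\nu$ (Lemma \ref{lem:v Lipschitz}), the increment bound $\norm{x_k - x_{k-1}} \le \alpha_k(2\norm{x_{k-1}} + C_H + \norm{\e{1}_k})$ via \eqref{eq:H linear growth} and Lemma \ref{lem:xn_norm}, absorption of the almost surely finite tail $\sum_j \alpha_j\norm{\e{1}_j}$ into a sample-path constant, and summability from \eqref{eq: ai2 ti} and Assumption \ref{as:e1}. The only cosmetic difference is that you make explicit the step $\alpha_k^2\norm{\e{1}_k} \le \alpha_k\norm{\e{1}_k}$, which the paper leaves implicit when it cites Assumption \ref{as:e1}.
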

\begin{proof}
Utilizing the definition of $\e{3}_k$ in \eqref{eq:e3_define}, we have
\begin{align}
    \norm{\sum_{k=1}^n\alpha_k\e{3}_k} &= \norm{\sum_{k=1}^n \alpha_{k}\qty( \nu\qty(x_{k},Y_{k+1}) - \nu \qty(x_{k-1},Y_{k+1}))}, \\
    &\leq \sum_{k=1}^n \alpha_{k}\norm{ \nu\qty(x_{k},Y_{k+1}) - \nu\qty(x_{k-1},Y_{k+1})}, \\
    &\leq C_{\ref{lem:v Lipschitz}}\sum_{k=1}^n \alpha_{k}\norm{ x_k - x_{k-1}}, \explain{Lemma \ref{lem:v Lipschitz}}\\
    &\leq C_{\ref{lem:v Lipschitz}} \sum_{k=1}^n \alpha_{k}^2\qty(\norm{H\paren{x_{k-1},Y_{k}}} +\norm{x_{k-1}} + \norm{\e{1}_k}), \\
    & \quad \quad \quad \quad \text{(By \eqref{eq:skm_markov})} \\
    &\leq C_{\ref{lem:v Lipschitz}} \sum_{k=1}^n \alpha_{k}^2\qty(2\norm{x_{k-1}}+C_H+ \norm{\e{1}_k}),
    \explain{By \eqref{eq:H linear growth}}\\
    &\leq C_{\ref{lem:v Lipschitz}} \sum_{k=1}^n \alpha_{k}^2\qty(2C_{\ref{lem:xn_norm}}\tau_{k-1} +2\sum_{i=1}^{k-1} \alpha_{i}\norm{\e{1}_{i}} +C_H + \norm{\e{1}_k}). \explain{Lemma \ref{lem:xn_norm}}
\end{align}
Because Assumption \ref{as:e1} assures us that $\sum_{k=1}^\infty \alpha_{k}\norm{\e{1}_{k}}$ is finite, then there exists some sample-path dependent constant we denote as $\zeta_{\ref{lem:ak_e3}}$ where,
\begin{align}
    \norm{\sum_{k=1}^n\alpha_k\e{3}_k} &\leq \zeta_{\ref{lem:ak_e3}} \sum_{k=1}^n \alpha_{k}^2 \qty(\tau_{k-1} + \norm{\e{1}_k}), \explain{Assumption \ref{as:e1}}\\
    &\leq \zeta_{\ref{lem:ak_e3}} \qty(\sum_{k=1}^n \alpha_{k}^2\tau_k + \sum_{k=1}^n \alpha_{k}^2\norm{\e{1}_k}), \explain{$\tau_k$ is increasing}
\end{align}
Lemma $\ref{lem:lr}$ and Assumption \ref{as:e1} then prove the lemma.
\end{proof}

\begin{lemma}\label{lem:xi_noise_convergence}
    Let $U_n$ be the iterates defined in \eqref{eq:U_n_define}. Then if $\sup_n \norm{U_n}<\infty$, we have $U_n \rightarrow 0$ almost surely.
\end{lemma}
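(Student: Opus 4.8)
The plan is to view \eqref{eq:U_n_define} as a stochastic approximation recursion and apply the ODE method. Writing \eqref{eq:U_n_define} in the form $U_{n+1} = U_n + \alpha_{n+1}\paren{g(U_n) + M_{n+1} + \xi_{n+1}}$ with $g(u) \doteq -u$, its limiting ODE is $\dv{U(t)}{t} = -U(t)$, whose unique equilibrium $U = 0$ is globally asymptotically stable, so $\qty{0}$ is also the only compact internally chain transitive invariant set of the flow. Thus, if $\qty{U_n}$ asymptotically tracks this ODE in the sense of the extension of Borkar's Theorem~2.1 established in Appendix~\ref{sec:borkar_ext} (Lemma~\ref{thm:borkar_2_thm1}), the conclusion $U_n \to 0$ almost surely is immediate. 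Since the hypothesis $\sup_n \norm{U_n} < \infty$ of the present lemma supplies exactly the stability input of Lemma~\ref{thm:borkar_2_thm1}, the proof reduces to verifying the remaining hypotheses of that lemma.

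The routine conditions come first: $g(u) = -u$ is $1$-Lipschitz, and Assumption~\ref{as:lr} gives $\sum_n \alpha_n = \infty$ (since $b \le 1$) and $\sum_n \alpha_n^2 < \infty$ (since $2b > \tfrac{8}{5} > 1$). For the martingale noise, $\qty{M_{n+1}}$ is a martingale difference sequence, and Lemma~\ref{lem:martingale_series_bound}, via the second-moment estimate of Lemma~\ref{lem: M second moment}, the summability bound \eqref{eq: ai2 ti2} of Lemma~\ref{lem:lr}, and Doob's martingale convergence theorem, already shows that $\sum_{k=1}^n \alpha_k M_k$ converges almost surely to a finite limit; this is the form in which the martingale-noise hypothesis of Lemma~\ref{thm:borkar_2_thm1} is most easily checked. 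For the non-martingale noise $\xi_n = \e{1}_n + \e{2}_n + \e{3}_n$, the term $\e{1}$ is controlled by Assumption~\ref{as:e1}\eqref{as: total noise}, while inspecting the telescoping estimates in the proofs of Lemmas~\ref{lem:ak_e2} and~\ref{lem:ak_e3} (which use Lemma~\ref{lem:v_norm} together with the summability bounds of Lemma~\ref{lem:lr}) shows that the time-weighted tails $\sum_{k=m}^n \alpha_k \e{j}_k$ vanish uniformly in $n \ge m$ as $m \to \infty$ for $j \in \qty{2,3}$; hence $\sum_{k=1}^n \alpha_k \xi_k$ converges almost surely, which is precisely the relaxed \emph{asymptotic rate of change} condition that Lemma~\ref{thm:borkar_2_thm1} imposes on the additive noise.

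The main obstacle is a matching issue rather than a new estimate. In the classical form of Borkar's theorem the additive noise must converge to $0$, whereas here $\e{2}_n$ and $\e{3}_n$ are only controlled by a sample-path constant times $\tau_n$ (Lemma~\ref{lem:v_norm}), which diverges; this is exactly why the relaxed extension is needed, its hypothesis constraining only the oscillation of the \emph{accumulated} (time-weighted) noise rather than the noise itself. A second, related subtlety is that $M_{n+1}$ is controlled by $\norm{x_n}$, which may itself grow like $\tau_n$ (Lemma~\ref{lem:xn_norm}) rather than being comparable to $\norm{U_n}$, so the usual conditional second-moment growth condition stated in terms of $\norm{U_n}$ is unavailable; it must be bypassed by feeding the almost sure convergence of $\sum_k \alpha_k M_k$ directly into Lemma~\ref{thm:borkar_2_thm1}, which is precisely the purpose of Lemma~\ref{lem:martingale_series_bound}. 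With these two points handled, Lemma~\ref{thm:borkar_2_thm1} applied to the ODE $\dv{U(t)}{t} = -U(t)$ forces $\qty{U_n}$ into its unique globally asymptotically stable equilibrium $\qty{0}$, giving $U_n \to 0$ almost surely and completing the proof.
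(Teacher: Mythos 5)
Your proposal is correct and follows essentially the same route as the paper: both cast \eqref{eq:U_n_define} as a stochastic approximation with limiting ODE $\dv{U(t)}{t}=-U(t)$, verify the hypotheses of the extended Borkar theorem (Lemma~\ref{thm:borkar_2_thm1}/Theorem~\ref{thm:borkar_thm2}) — in particular the relaxed tail condition on $\sum_k \alpha_k \xi_k$ via the telescoping estimates behind Lemmas~\ref{lem:ak_e2} and~\ref{lem:ak_e3} together with Lemma~\ref{lem:lr} and Assumption~\ref{as:e1} — and conclude from global asymptotic stability of the origin. Your remarks on why the relaxation is needed (since $\e{2}_n,\e{3}_n$ are only $\fO(\tau_n)$ pointwise) and on feeding the almost sure convergence of $\sum_k\alpha_k M_k$ from Lemma~\ref{lem:martingale_series_bound} into the argument accurately reflect what the paper does.
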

\begin{proof}
We use a stochastic approximation argument to show that $U_n \rightarrow 0$. The almost sure convergence of $U_n \rightarrow 0$ is given by a generalization of Theorem 2.1 of \cite{borkar2009stochastic}, which we present as Theorem \ref{thm:borkar_thm2} in Appendix \ref{sec:borkar_ext} for completeness. 

We now verify the assumptions of Theorem \ref{thm:borkar_thm2}. 
Beginning with the definition of $\xi_k$ in \eqref{eq:xi_define}, we have
\begin{align}
    \lim_{n\to\infty}\sup_{j\geq n} \norm{\sum_{k=n}^{j}\alpha_k \xi_k} &= \lim_{n\to\infty}\sup_{j\geq n} \norm{\sum_{k=n}^{j}\alpha_k \paren{\e{1}_k+\e{2}_k+ \e{3}_k}}, \\
    &\leq \underbrace{\lim_{n\to\infty}\sup_{j\geq n} \norm{\sum_{k=n}^{j}\alpha_k \e{1}_k}}_{S_1} + \underbrace{ \lim_{n\to\infty}\sup_{j\geq n} \norm{\sum_{k=n}^{j}\alpha_k \e{2}_k}}_{S_2}  + \underbrace{\lim_{n\to\infty}\sup_{j\geq n} \norm{\sum_{k=n}^{j}\alpha_k \e{3}_k}}_{S_3}. 
\end{align}
We now bound the three terms in the RHS.

For $S_1$, we have
\begin{align}
    \lim_{n\to\infty} \sup_{j\geq n} \norm{\sum_{k=n}^j \alpha_k \e{1}_k} \leq \lim_{n\to\infty} \sup_{j\geq n} \sum_{k=n}^j \alpha_k \norm{\e{1}_k} \leq \lim_{n\to\infty} \sum_{k=n}^\infty \alpha_k \norm{\e{1}_k} = 0,
\end{align}
where we have used the fact that the series $\sum_{k=1}^n \alpha_k \norm{\e{1}_k}$ converges by Assumption \ref{as:e1} almost surely.

For $S_2$, 
from \eqref{eq: ak e2 intermediate} in Lemma \ref{lem:ak_e2}, we have
\begin{align}
    \sum_{k=n}^j\alpha_k\e{2}_k &= \sum_{k=1}^j \alpha_k \e{2}_k - \sum_{k=1}^{n-1} \alpha_k \e{2}_k,\\
    & = \alpha_{n-1}\nu(x_n, Y_n)- \alpha_j \nu(x_j, Y_{j+1}) - \sum_{k=n}^j (\alpha_{k-1} - \alpha_k)\nu(x_{k-1}, Y_k).
\end{align}
Taking the norm and applying the triangle inequality, we have
\begin{align}
    \lim_{n\to\infty} \sup_{j\geq n} \norm{\sum_{k=n}^j\alpha_k\e{2}_k}  &\leq \lim_{n\to\infty} \sup_{j\geq n} \bigg(\alpha_{n-1} \norm{\nu(x_n, Y_n)}+ \alpha_j \norm{\nu(x_j, Y_{j+1})} \\ &\quad+ \sum_{k=n}^j \norm{(\alpha_{k-1} - \alpha_k)\nu(x_{k-1}, Y_k)}\bigg),\\
    &\leq \lim_{n\to\infty} \sup_{j\geq n} \zeta_{\ref{lem:v_norm}} \qty(\alpha_{n-1}\tau_{n-1}+ \alpha_j \tau_j + \sum_{k=n}^\infty \abs{\alpha_{k-1} - \alpha_k}\tau_{k-1}), \explain{Lemma \ref{lem:v_norm}}
\end{align}
where the last inequality holds because $\sum_{k=n}^j \abs{\alpha_{k-1} - \alpha_k}\tau_{k-1}$ is monotonically increasing.
Note that
\begin{align}
    \alpha_n\tau_n = \begin{cases} 
      \fO\qty(n^{1-2b}) & \text{if} \quad \frac{4}{5}<b<1, \\
      \fO\qty(\frac{\log n}{n}) & \text{if} \quad b=1.
      \end{cases} \label{eq: alpha_n tau_n order}
\end{align}
Since we have $j\geq n$, then
\begin{align}
    \lim_{n\to\infty} \sup_{j\geq n} \norm{\sum_{k=n}^j\alpha_k\e{2}_k}  &\leq \lim_{n\to\infty} \zeta_{\ref{lem:v_norm}} \qty(2\alpha_{n-1}\tau_{n-1}+ \sum_{k=n}^\infty \abs{\alpha_{k-1} - \alpha_k}\tau_{k-1}) = 0
\end{align}
where we used the fact that \eqref{eq: abs ai ti} in Lemma \ref{lem:lr} and the monotone convergence theorem prove that the series $\sum_{k=1}^n \abs{\alpha_{k} - \alpha_{k+1}}\tau_{k}$ converges almost surely. 

For $S_3$,
following the steps in Lemma \ref{lem:ak_e3} (which we omit to avoid repetition), we have,
\begin{align}
    \lim_{n\to\infty} \sup_{j\geq n} \norm{\sum_{k=n}^j\alpha_k\e{3}_k} &\leq \lim_{n\to\infty} \sup_{j\geq n} \zeta_{\ref{lem:ak_e3}} \qty(\sum_{k=n}^j \alpha_{k}^2\tau_k + \sum_{k=n}^j \alpha_{k}^2\norm{\e{1}_k}).
\end{align}
which further implies that 
\begin{align}
    \lim_{n\to\infty} \sup_{j\geq n} \norm{\sum_{k=n}^j\alpha_k\e{3}_k} &\leq \lim_{n\to\infty} 
 \zeta_{\ref{lem:ak_e3}} \qty(\sum_{k=n}^\infty \alpha_{k}^2\tau_k + \sum_{k=n}^\infty \alpha_{k}^2\norm{\e{1}_k}) = 0,
\end{align}
where we use the fact that, by \eqref{eq: ai2 ti} in Lemma \ref{lem:lr}, Assumption \ref{as:e1}, and the monotone convergence theorem, both series on the RHS series converge almost surely. 
Therefore we have proven that, 
\begin{equation}
    \lim_{n\rightarrow \infty} \sup_{j \geq n} \norm{\sum_{k=n}^{j}\alpha_k \xi_k} = 0 \qq{a.s.}
\end{equation}
thereby verifying Assumption~\ref{as:borkar_noise}.

Assumption \ref{as:borkar_lipschitz} is satisfied by \eqref{eq:h nonexpansive} which is the result of Assumption \ref{as:H 1 Lipschitz}. Assumption \ref{as:borkar_lr} is clearly met by the definition of the deterministic learning rates in Assumption \ref{as:lr}. 
Demonstrating Assumption \ref{as:borkar_mds} holds, Lemma \ref{lem: M second moment} demonstrates $\qty{M_n}$ is square-integrable martingale difference series.

Therefore, by Theorem \ref{thm:borkar_thm2}, the iterates $\qty{U_n}$ converge almost surely to a possibly sample-path dependent compact connected internally chain transitive set of the following ODE:
    \begin{align}
        \dv{U(t)}{t} = -U(t). \label{eq: Un limiting ODE}
    \end{align}
    Since the origin is the unique globally asymptotically stable equilibrium point of \eqref{eq: Un limiting ODE}, we have that $U_n \rightarrow 0$ almost surely. 
\end{proof}

\begin{lemma}\label{lem: applying bravo 2.1}
    With $\qty{x_n}$ defined in \eqref{eq:xi_define} and $\qty{U_n}$ defined in \eqref{eq:U_n_define}, if $\sum_{k=1}^\infty \alpha_k \norm{U_{k-1}}$ and $\lim_{n\rightarrow \infty}U_n = 0$, then $\lim_{n\rightarrow \infty} x_n = x_*$ where $x_* \in \fX_*$ is a possibly sample-path dependent fixed point.
\end{lemma}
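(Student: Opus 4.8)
The plan is to realize the shifted sequence $z_n \doteq x_n - U_n$ as an instance of the inexact Krasnoselskii--Mann iteration \eqref{eq inexact km update} driven by $h$, and then to invoke Lemma \ref{lem:bravo_2.1}. First I would subtract the recursion \eqref{eq:U_n_define} for $\qty{U_n}$ from the recursion $x_{n+1} = \paren{1-\alpha_{n+1}}x_n + \alpha_{n+1}\paren{h\paren{x_n}+M_{n+1}+\xi_{n+1}}$ established in the proof of Theorem \ref{thm:main_conv_result}; the martingale term $M_{n+1}$ and the additive term $\xi_{n+1}$ cancel exactly, leaving
\begin{align}
    z_{n+1} = \paren{1-\alpha_{n+1}}z_n + \alpha_{n+1}h\paren{x_n}.
\end{align}
Then, writing $h\paren{x_n} = h\paren{z_n} + e_{n+1}$ with $e_{n+1}\doteq h\paren{z_n+U_n}-h\paren{z_n}$, this is exactly $z_{n+1} = z_n + \alpha_{n+1}\paren{h\paren{z_n}-z_n+e_{n+1}}$, i.e.\ \eqref{eq inexact km update} with $T = h$ and additive error sequence $\qty{e_n}$ (random, but to be handled pathwise). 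Since $U_0 = 0$ we have $z_0 = x_0$ and $e_1 = 0$.

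Next I would bound the error sequence using the non-expansivity of $h$ proved in \eqref{eq:h nonexpansive}: $\norm{e_{n+1}} = \norm{h\paren{z_n+U_n}-h\paren{z_n}} \leq \norm{U_n}$ for every $n\geq 0$, hence $\norm{e_n}\leq\norm{U_{n-1}}$ for $n\geq 1$. This supplies the two quantitative inputs needed by Lemma \ref{lem:bravo_2.1}: $\sum_{n=1}^\infty \alpha_n\norm{e_n} \leq \sum_{n=1}^\infty \alpha_n\norm{U_{n-1}} < \infty$ by the hypothesis \eqref{eq:sum_Uk_converges}, and $\norm{e_n}\leq\norm{U_{n-1}}\to 0$ by the hypothesis \eqref{eq:limit_to_0}. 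The remaining inputs are immediate: $\tau_n\to\infty$ by Lemma \ref{lem:lr} (for $b\in(\frac{4}{5},1]$, $\tau_n$ is of order $n^{1-b}$ or $\log n$), and $\fX_* = \mathrm{Fix}(h)\neq\emptyset$ by Assumption \ref{as: fixed points nonempty}.

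Finally, I would fix a sample path in the probability-one event on which \eqref{eq:sum_Uk_converges} and \eqref{eq:limit_to_0} hold and apply Lemma \ref{lem:bravo_2.1}, which is a deterministic statement about \eqref{eq inexact km update}, on that path; its second part yields $\norm{z_n - h\paren{z_n}}\to 0$ and $z_n\to x_*$ for some (path-dependent) $x_*\in\fX_*$, with the constant taking the form $\zeta_{\ref{lem:bravo_2.1}} = 2\inf_{x\in\fX_*}\norm{x_0-x}+\sum_{n}\alpha_n\norm{e_n}$, which is finite on the chosen event. Since $x_n = z_n + U_n$ and $\norm{U_n}\to 0$, I conclude $x_n\to x_*\in\fX_*$, which is the claim. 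The only point requiring care is the bookkeeping, namely keeping the index shift straight between the bound $\norm{e_{n+1}}\leq\norm{U_n}$ and the series $\sum_n\alpha_n\norm{e_n}$ and the tail term $\norm{e_{n+1}}$ appearing in Lemma \ref{lem:bravo_2.1}; there is no substantive analytic obstacle remaining, since the real work of establishing \eqref{eq:sum_Uk_converges} and \eqref{eq:limit_to_0} has already been carried out in the preceding lemmas.
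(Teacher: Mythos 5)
Your proposal is correct and follows essentially the same route as the paper: define $z_n = x_n - U_n$ so the martingale and noise terms cancel, recognize the resulting recursion as \eqref{eq inexact km update} with $e_{n+1} = h(x_n) - h(z_n)$, bound $\norm{e_{n+1}} \leq \norm{U_n}$ via non-expansivity of $h$, and apply Lemma \ref{lem:bravo_2.1} pathwise to conclude $z_n \to x_* \in \fX_*$ and hence $x_n = z_n + U_n \to x_*$. No gaps.
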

\begin{proof}
    Following the approach of \citet{bravo2024stochastic}, we utilize the estimate for inexact Krasnoselskii-Mann iterations of the form \eqref{eq inexact km update} presented in Lemma \ref{lem:bravo_2.1} to prove the convergence of \eqref{eq:skm_markov}. Using the definition of $\left\{U_n\right\}$ in \eqref{eq:U_n_define}, we then let $z_0 = x_0$ and define $z_n \doteq x_n - U_n$, which gives
\begin{align}
z_{n+1} &= \paren{1-\alpha_{n+1}}x_{n} + \alpha_{n+1}\paren{h(x_{n})+M_{n+1}+\xi_{n+1}} \\
&\quad \quad - \paren{\paren{1-\alpha_{n+1}}U_{n} + \alpha_{n+1}\paren{M_{n+1}+\xi_{n+1}}} \\
&= \qty(1-\alpha_{n+1})z_{n} + \alpha_{n+1}h(x_{n}) \\
&= z_{n} + \alpha_{n+1}\paren{h(z_{n})-z_n + e_{n+1}}
\end{align}
which matches the form of \eqref{eq inexact km update} with $e_n = h\paren{x_{n-1}}-h\paren{z_{n-1}}$. 
Due to the non-expansivity of $h$ from \eqref{eq:h nonexpansive}, we have
\begin{equation}
    \norm{e_{n+1}} = \norm{h\qty(x_{n}) - h\qty(z_{n})} \leq \norm{x_{n} - z_{n}} = \norm{U_{n}} 
\end{equation}
The convergence of $x_n$ then follows directly from Lemma \ref{lem:bravo_2.1} which gives $\lim_{n \rightarrow \infty} z_n = x_*$ for some $x_* \in \fX_*$, and therefore $\lim_{n \rightarrow \infty}x_n =\lim_{n \rightarrow \infty} z_n + U_n = x_*$.
We note that here $e_n$ is stochastic while the~\eqref{eq inexact km update} result in Lemma~\ref{lem:bravo_2.1} considers deterministic noise.
This means we apply Lemma~\ref{lem:bravo_2.1} for each sample-path.
\end{proof}
\section{Additional Lemmas from Section \ref{sec:c_rate}}
\begin{corollary}\label{lem: M rate}
We have
\begin{equation}
    \E\qty[\norm{\Mb_n}]\leq C_{\ref{lem: M rate}}\tau_n\sqrt{\alpha_{n+1}}
\end{equation}
where $C_{\ref{lem: M rate}}$ is a deterministic constant.
\end{corollary}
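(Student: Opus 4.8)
The plan is to reuse the second-moment bound on the martingale increments from Lemma~\ref{lem: M second moment} together with the learning-rate estimate $\sum_{k=1}^n \alpha_{k,n}^2 \le \alpha_{n+1}$ from Lemma~\ref{lem:bravo_b.2}, exactly in the spirit of the computation already carried out in the proof of Lemma~\ref{lem:sup_M} up to~\eqref{eq: M stop}. Starting from the definition $\Mb_n = \sum_{k=1}^n \alpha_{k,n} M_k$ in~\eqref{eq:Un_norm_bar_bounds}, I would first pass to the $\ell_2$ norm using equivalence of norms on $\R^d$, then apply Jensen's inequality to move the expectation inside a square root, writing $\E\qty[\norm{\Mb_n}] = \fO\paren{\sqrt{\E\qty[\norm{\sum_{k=1}^n \alpha_{k,n} M_k}_2^2]}}$.

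Next, since $\qty{M_k}$ is a martingale difference sequence (as noted right after~\eqref{eq:e3_define}), all cross terms vanish and $\E\qty[\norm{\sum_{k=1}^n \alpha_{k,n} M_k}_2^2] = \sum_{k=1}^n \alpha_{k,n}^2\, \E\qty[\norm{M_k}_2^2]$. Invoking~\eqref{eq: M second moment bound} from Lemma~\ref{lem: M second moment} gives $\E\qty[\norm{M_k}_2^2] \le C_{\ref{lem: M second moment}}^2 \tau_{k-1}^2$, and since $\tau_k$ is monotonically increasing we may bound each $\tau_{k-1}^2 \le \tau_n^2$ and pull it out of the sum. This yields $\sum_{k=1}^n \alpha_{k,n}^2\, \E\qty[\norm{M_k}_2^2] \le C_{\ref{lem: M second moment}}^2 \tau_n^2 \sum_{k=1}^n \alpha_{k,n}^2$, and then Lemma~\ref{lem:bravo_b.2} controls the remaining sum by $\alpha_{n+1}$.

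Putting these together, $\E\qty[\norm{\Mb_n}] = \fO\paren{\sqrt{\tau_n^2 \alpha_{n+1}}} = \fO\paren{\tau_n \sqrt{\alpha_{n+1}}}$, and folding the norm-equivalence constant and $C_{\ref{lem: M second moment}}$ into a single deterministic constant $C_{\ref{lem: M rate}}$ gives the claimed bound. There is no real obstacle here: the statement is essentially a bookkeeping refinement of the estimate already used inside Lemma~\ref{lem:sup_M}, where the only change is to extract the explicit $\tau_n \sqrt{\alpha_{n+1}}$ rate rather than summing over $n$. The one point requiring a little care is keeping the deterministic constants deterministic — which is exactly why Lemma~\ref{lem: M second moment} provides the $L^2$ bound $C_{\ref{lem: M second moment}}^2 \tau_n^2$ with a universal constant rather than the sample-path dependent bound of Lemma~\ref{lem:M_norm_bound}.
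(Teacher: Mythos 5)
Your proposal is correct and follows essentially the same route as the paper: the paper's proof simply cites the intermediate estimate \eqref{eq: M stop} from Lemma~\ref{lem:sup_M} (Jensen, the martingale difference property, and the deterministic second-moment bound from Lemma~\ref{lem: M second moment}) and then applies the monotonicity of $\tau_k$ together with Lemma~\ref{lem:bravo_b.2} to obtain $\sqrt{\sum_{i=1}^n \alpha_{i,n}^2\tau_i^2}\le\tau_n\sqrt{\alpha_{n+1}}$, exactly as you do. Your remark about why the deterministic $L^2$ bound of Lemma~\ref{lem: M second moment} (rather than the sample-path dependent bound of Lemma~\ref{lem:M_norm_bound}) is needed here is also the right observation.
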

\begin{proof}
    Starting from \eqref{eq: M stop} from Lemma \ref{lem:sup_M} to avoid redundancy, we directly have
    \begin{align}
        \E\left[\norm{ \Mb_n} \right] &= \fO\qty(\sqrt{\sum_{i=1}^{n} \alpha_{i,n}^2 \tau_{i}^2}).
    \end{align}
    Additionally, by Lemma \ref{lem:bravo_b.2}, we have $\sqrt{\sum_{i=1}^{n} \alpha_{i,n}^2 \tau_{i}^2} \leq \tau_n\sqrt{\alpha_{n+1}}$. Therefore, there exists a deterministic constant such that the corollary holds.
\end{proof}

\begin{corollary}\label{lem: e2 rate}
We have
\begin{equation}
    \E\qty[\norm{\eb{2}_n}]\leq C_{\ref{lem: e2 rate}}\alpha_n \tau_n
\end{equation}
where $C_{\ref{lem: e2 rate}}$ is a deterministic constant.
\end{corollary}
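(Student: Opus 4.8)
The plan is to reuse, now under expectation, the pathwise estimate for $\norm{\eb{2}_n}$ already established inside the proof of Theorem~\ref{thm:main_conv_result}. Telescoping $\eb{2}_n = \sum_{k=1}^n \alpha_{k,n}\e{2}_k$ with $\alpha_0 \doteq 0$ produced there the bound
\begin{align}
    \norm{\eb{2}_n} \leq \alpha_n \norm{\nu\paren{x_n, Y_{n+1}}} + \sum_{k=1}^n \abs{\alpha_{k-1,n} - \alpha_{k,n}}\norm{\nu\paren{x_{k-1}, Y_k}},
\end{align}
which is exactly~\eqref{eq: e2 bar stop}, so the starting point is free. The only genuine change relative to the almost sure argument is which estimate of $\nu$ one feeds in: there the sample-path dependent $\norm{\nu\paren{x_m,y}} \leq \zeta_{\ref{lem:v_norm}}\tau_m$ was used, which is useless inside an expectation; here I would instead use the \emph{deterministic} bound $\E\qty[\norm{\nu\paren{x_m, y}}] \leq C_{\ref{lem:v_norm}}\tau_m$ from Lemma~\ref{lem:v_norm}, which is available precisely because Assumption~\ref{as:e1} supplies the second-moment control~\eqref{as: second moment}.

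Concretely, I would take expectations of both sides and apply $\E\qty[\norm{\nu\paren{x_m,y}}] \leq C_{\ref{lem:v_norm}}\tau_m$ term by term to get
\begin{align}
    \E\qty[\norm{\eb{2}_n}] \leq C_{\ref{lem:v_norm}}\paren{\alpha_n \tau_n + \sum_{k=1}^n \abs{\alpha_{k-1,n} - \alpha_{k,n}}\tau_{k-1}}.
\end{align}
Then I would dispose of the sum by the same monotonicity bookkeeping used in Theorem~\ref{thm:main_conv_result}: by Lemma~\ref{lem:bravo b1}, $\qty{\alpha_{k,n}}$ is nondecreasing in $k$, hence $\abs{\alpha_{k-1,n}-\alpha_{k,n}} = \alpha_{k,n}-\alpha_{k-1,n}$; since $\tau_{k-1}\leq\tau_n$ for $k \leq n$, the sum telescopes to at most $\tau_n\paren{\alpha_{n,n}-\alpha_{0,n}} = \alpha_n\tau_n$, using $\alpha_{0,n}=0$ and $\alpha_{n,n}=\alpha_n$. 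Combining the two displays yields $\E\qty[\norm{\eb{2}_n}]\leq 2C_{\ref{lem:v_norm}}\alpha_n\tau_n$, so the corollary holds with $C_{\ref{lem: e2 rate}}\doteq 2C_{\ref{lem:v_norm}}$.

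I do not expect a real obstacle: the content is a transcription of the pathwise $\eb{2}_n$ bound into expectation, and the sole subtlety — already flagged above — is to route the estimate of $\norm{\nu}$ through the deterministic expectation bound of Lemma~\ref{lem:v_norm} rather than through the unquantified constant $\zeta_{\ref{lem:v_norm}}$, so that the final constant $C_{\ref{lem: e2 rate}}$ is deterministic. The remaining telescoping is identical to what was carried out for the almost sure statement, so no new computation is needed.
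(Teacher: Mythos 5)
Your proposal is correct and follows essentially the same route as the paper: both start from the pathwise bound \eqref{eq: e2 bar stop}, take expectations using the deterministic estimate $\E\qty[\norm{\nu(x_m,y)}]\leq C_{\ref{lem:v_norm}}\tau_m$ from Lemma~\ref{lem:v_norm}, and then telescope $\sum_k\paren{\alpha_{k,n}-\alpha_{k-1,n}}\tau_{k-1}\leq\tau_n\alpha_{n,n}=\alpha_n\tau_n$ via the monotonicity of $\alpha_{k,n}$ and $\tau_k$. Your explicit constant $2C_{\ref{lem:v_norm}}$ and your handling of the boundary term through $\alpha_{0,n}=0$ are both fine.
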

\begin{proof}
Starting from \eqref{eq: e2 bar stop} to avoid repetition, we have,
    \begin{align}
        \norm{\eb{2}_n}
        &\leq \alpha_n \norm{\nu \paren{x_{n},Y_{n+1}}} + \sum_{i=1}^n \left|\alpha_{i-1,n}-\alpha_{i,n} \right|\norm{\nu \paren{x_{i-1},Y_i}}.
    \end{align}
    Now we can take the expectation and apply the sample-path independent bound from Lemma \ref{lem:v_norm} with,
    \begin{align}
        \E \qty[\norm{\eb{2}_n}] &\leq C_{\ref{lem:v_norm}} \paren{\alpha_n\tau_n + \sum_{i=1}^n \left|\alpha_{i-1,n}-\alpha_{i,n} \right|\tau_{i-1}} \explain{Lemma \ref{lem:v_norm}}\\
        &= C_{\ref{lem:v_norm}} \paren{\alpha_n\tau_n  + \sum_{k=0}^{n-1} \left|\alpha_{k,n}-\alpha_{k+1,n} \right|\tau_{k}}
    \end{align}
    Lemma $\ref{lem:lr}$ and $\tau_k$ being monotonically increasing for $k\geq 1$ yields,
    \begin{align}
        \E \qty[\norm{\eb{2}_n}] &\leq  C_{\ref{lem:v_norm}} \paren{\alpha_n\tau_n + \alpha_{1,n}\tau_0 + \tau_{n}\sum_{k=1}^{n-1} \paren{\alpha_{k+1,n} - \alpha_{k,n}}}, \\
        &=  C_{\ref{lem:v_norm}} \paren{\alpha_n\tau_n + \alpha_{1,n} + \tau_{n}\paren{\alpha_{n,n} - \alpha_{1,n}}}, \explain{$\tau_0 \doteq 1$}\\
        &= \fO \qty( \alpha_n\tau_n).\explain{Lemma \ref{lem:bravo b1}}
    \end{align}
Therefore, there exists a deterministic constant we denote as $C_{\ref{lem: e2 rate}}$ such that
\begin{align}
    \E \qty[\norm{\eb{2}_n}] &\leq C_{\ref{lem: e2 rate}} \alpha_n \tau_n.
\end{align}
\end{proof}

\begin{corollary}\label{lem: e3 rate}
We have
\begin{equation}
    \E\qty[\norm{\eb{3}_n}] \leq C_{\ref{lem: e3 rate}}\alpha_n \sum_{i=1}^n \alpha_i \tau_i.
\end{equation}
\end{corollary}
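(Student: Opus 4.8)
The plan is to follow the pathwise argument already carried out in Lemma~\ref{lem:sup_e3} up to the deterministic bound \eqref{eq: e3 stop}, namely
\begin{align*}
    \norm{\eb{3}_n} &\leq C_{\ref{lem:v Lipschitz}} \sum_{i=1}^n \alpha_{i,n}\alpha_{i}\qty(2C_{\ref{lem:xn_norm}}\tau_{i-1} +2\sum_{k=1}^{i-1} \alpha_{k}\norm{\e{1}_{k}} +C_H + \norm{\e{1}_i}),
\end{align*}
and then, instead of collapsing the additive-noise terms into the sample-path constant $\zeta_{\ref{lem:sup_e3}}$ as was done there, take expectations and control the noise moments directly. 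By linearity it suffices to bound $\E\qty[\norm{\e{1}_i}]$ and $\sum_{k=1}^{i-1}\alpha_k\E\qty[\norm{\e{1}_k}]$ uniformly in $i$.

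For the first, \eqref{eq: first moment} in Lemma~\ref{lem: e1 lr} gives $\E\qty[\norm{\e{1}_i}] = \fO(1/\sqrt{i}) = \fO(1)$; for the second, \eqref{eq: e1 lr ake1} in the same lemma shows $\sup_i\sum_{k=1}^{i-1}\alpha_k\E\qty[\norm{\e{1}_k}] < \infty$, i.e.\ a deterministic constant. Substituting these, the parenthetical factor is $\fO(\tau_{i-1}+1)$, and since $\qty{\tau_i}$ is monotonically increasing and bounded away from $0$ for $i\geq 1$ (Lemma~\ref{lem:bravo b1}), we have $\tau_{i-1}+1 = \fO(\tau_i)$. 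Hence $\E\qty[\norm{\eb{3}_n}] = \fO\qty(\sum_{i=1}^n \alpha_{i,n}\alpha_i\tau_i)$. Applying $\alpha_{i,n}\leq\alpha_n$ from Lemma~\ref{lem:bravo b1} pulls $\alpha_n$ out of the sum, giving $\E\qty[\norm{\eb{3}_n}] = \fO\qty(\alpha_n\sum_{i=1}^n \alpha_i\tau_i)$, so a deterministic $C_{\ref{lem: e3 rate}}$ exists.

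The only delicate point — more a matter of care than a genuine obstacle — is that the bound in Lemma~\ref{lem:sup_e3} folded the $C_H$ term and the $\norm{\e{1}_k}$ terms into a single sample-path-dependent constant, which is not permissible when a \emph{deterministic} bound on an expectation is the goal; one must instead route everything through the first- and second-moment bounds of Assumption~\ref{as:e1} via Lemma~\ref{lem: e1 lr}, in exactly the same spirit as the second half of Lemma~\ref{lem:v_norm} and of Corollary~\ref{lem: e2 rate}. Everything else is routine order-of-magnitude bookkeeping already established in Lemmas~\ref{lem:v Lipschitz}, \ref{lem:xn_norm}, \ref{lem:bravo b1} and~\ref{lem: e1 lr}.
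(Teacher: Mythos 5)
Your proposal is correct and follows essentially the same route as the paper's proof: both start from the pathwise bound \eqref{eq: e3 stop}, take expectations, replace the sample-path-dependent constant of Lemma~\ref{lem:sup_e3} with the deterministic moment bounds of Lemma~\ref{lem: e1 lr}, absorb the bracketed factor into $\fO(\tau_i)$, and pull $\alpha_n$ out of the sum via Lemma~\ref{lem:bravo b1}. The only cosmetic difference is that you make the step $\tau_{i-1}+1=\fO(\tau_i)$ explicit, which the paper glosses over.
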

\begin{proof}
Starting with \eqref{eq: e3 stop} from Lemma \ref{lem:sup_e3} to avoid redundancy, we have
\begin{align}
    \norm{\eb{3}_n} &\leq C_{\ref{lem:v Lipschitz}} \sum_{k=1}^n \alpha_{k,n}\alpha_{k}\qty(2C_{\ref{lem:xn_norm}}\tau_{k-1} +2\sum_{i=1}^{k-1} \alpha_{i}\norm{\e{1}_{i}} +C_H + \norm{\e{1}_k}).
\end{align}
Taking the expectation gives,
\begin{align}
    \E\qty[\norm{\eb{3}_n}] &\leq C_{\ref{lem:v Lipschitz}}\sum_{k=1}^n \alpha_{k,n}\alpha_{k}\qty(2C_{\ref{lem:xn_norm}}\tau_{k-1} +2\sum_{i=1}^{k-1} \alpha_{i}\E\qty[\norm{\e{1}_{i}}] +C_H + \E \qty[\norm{\e{1}_k}]).
\end{align}
Recall that $\tau_k$ is monotonically increasing. Additionally, by Lemma \ref{lem: e1 lr}, $\sum_{i=1}^{k-1} \alpha_{i}\E\qty[\norm{\e{1}_{i}}]$ converges and $\lim_{k\rightarrow \infty} \E \qty[\norm{\e{1}_k}]= 0$. Therefore, there exists a deterministic constant $C_{\ref{lem: e3 rate}}$ such that
\begin{align}
    \E\qty[\norm{\eb{3}_n}]&\leq C_{\ref{lem: e3 rate}} \sum_{k=1}^n \alpha_{k,n}\alpha_{k}\tau_{k-1}, \\
    &\leq C_{\ref{lem: e3 rate}} \alpha_n \sum_{i=1}^n \alpha_i \tau_i \explain{Lemma \ref{lem:bravo b1}}.
\end{align}
\end{proof}

\begin{lemma} \label{lem: E Un bound order}
For $\omega_n$ defined in \eqref{eq: expected un bound}, we have
\begin{align}
    \omega_n = \fO(\tau_{n} \sqrt{\alpha_{n+1}}),
\end{align}
which is dominated by $1/ \sqrt{\tau_n}$.

\end{lemma}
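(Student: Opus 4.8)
The plan is to expand $\omega_n$ according to its definition in \eqref{eq: expected un bound}, namely as the sum of the four nonnegative terms $C_{\ref{lem: M rate}}\tau_n\sqrt{\alpha_{n+1}}$, $\sum_{i=1}^n\alpha_{i,n}\E\qty[\norm{\e{1}_i}]$, $C_{\ref{lem: e2 rate}}\alpha_n\tau_n$, and $C_{\ref{lem: e3 rate}}\alpha_n\sum_{i=1}^n\alpha_i\tau_i$, and then to bound each one separately by $\fO(\tau_n\sqrt{\alpha_{n+1}})$. The first term already has exactly this order. The third term is handled by the elementary observation that $\alpha_n\le 1$ forces $\alpha_n\le\sqrt{\alpha_n}$, while $\sqrt{\alpha_n}=\fO(\sqrt{\alpha_{n+1}})$ because consecutive learning rates are comparable; hence $C_{\ref{lem: e2 rate}}\alpha_n\tau_n=\fO(\tau_n\sqrt{\alpha_{n+1}})$.

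For the second term I would apply Cauchy--Schwarz to write $\sum_{i=1}^n\alpha_{i,n}\E\qty[\norm{\e{1}_i}]\le\paren{\sum_{i=1}^n\alpha_{i,n}^2}^{1/2}\paren{\sum_{i=1}^n\E\qty[\norm{\e{1}_i}]^2}^{1/2}$, bound the first factor by $\sqrt{\alpha_{n+1}}$ via Lemma \ref{lem:bravo_b.2}, and bound the second factor using Jensen's inequality together with Assumption \ref{as:e1} (which gives $\E\qty[\norm{\e{1}_i}]^2\le\E\qty[\norm{\e{1}_i}^2]=\fO(1/i)$) and $\sum_{i=1}^n 1/i=\fO(\log n)$; this produces $\fO(\sqrt{\alpha_{n+1}\log n})$. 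Since $\sqrt{\log n}=\fO(\tau_n)$ in both regimes of Lemma \ref{lem:lr} ($\tau_n$ being of order $\log n$ when $b=1$ and of order $n^{1-b}$ when $\frac{4}{5}<b<1$), the second term is also $\fO(\tau_n\sqrt{\alpha_{n+1}})$.

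The fourth term is the one requiring the most attention, and I expect the exponent bookkeeping there to be the main obstacle. I would first estimate the inner sum using Lemma \ref{lem:lr}: for $\frac{4}{5}<b<1$ we have $\alpha_i\tau_i=\fO(i^{1-2b})$ with $1-2b\in(-1,0)$, so an integral comparison gives $\sum_{i=1}^n\alpha_i\tau_i=\fO(n^{2-2b})$, whereas for $b=1$ we have $\alpha_i\tau_i=\fO(\log i/i)$, giving $\sum_{i=1}^n\alpha_i\tau_i=\fO(\log^2 n)$. Multiplying by $\alpha_n$ and comparing with $\tau_n\sqrt{\alpha_{n+1}}$, which is $\fO(n^{1-3b/2})$ when $\frac{4}{5}<b<1$ and $\fO(\log n/\sqrt n)$ when $b=1$: in the first case one obtains $\alpha_n\sum_i\alpha_i\tau_i=\fO(n^{2-3b})$, and the required inequality $2-3b\le 1-\frac{3b}{2}$ reduces to $b\ge\frac{2}{3}$, which is comfortably implied by $b>\frac{4}{5}$; in the second case one obtains $\fO(\log^2 n/n)$, which is dominated by $\fO(\log n/\sqrt n)$. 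Adding the four bounds and absorbing constants then yields $\omega_n=\fO(\tau_n\sqrt{\alpha_{n+1}})$, completing the proof.
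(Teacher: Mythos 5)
Your proof is correct and follows essentially the same route as the paper: the same four-term decomposition of $\omega_n$, with the first term already of the target order and the third and fourth terms handled by the same exponent bookkeeping (including the identical reduction of the fourth term to $2-3b \le 1-\tfrac{3b}{2}$, i.e.\ $b\ge\tfrac{2}{3}$, in the $b<1$ case and the $\log^2 n/n$ versus $\log n/\sqrt{n}$ comparison when $b=1$). The only departure is your Cauchy--Schwarz treatment of $\sum_{i}\alpha_{i,n}\E\qty[\norm{\e{1}_i}]$ via Lemma \ref{lem:bravo_b.2} — the paper instead uses $\alpha_{i,n}\le\alpha_n$ and $\E\qty[\norm{\e{1}_i}]=\fO(1/\sqrt{i})$ to get $\fO(n^{1/2-b})$ and compares exponents — and note that your version implicitly needs the matching lower bound $\sqrt{\log n}=\fO(\tau_n)$, which Lemma \ref{lem:lr} states only as an upper bound but which is immediate from $1-\alpha_k\ge\tfrac{1}{2}$ for $k\ge 2$.
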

\begin{proof}
    From \eqref{eq: expected un bound}, we have
    \begin{align}
       \omega_n \doteq \underbrace{C_{\ref{lem: M second moment}} \tau_n\sqrt{\alpha_{n+1}}}_{K_1} + \underbrace{\sum_{i=1}^n \alpha_{i,n} \E\qty[\norm{\e{1}_i}]}_{K_2}+\underbrace{C_{\ref{lem: e2 rate}}\alpha_n\tau_n}_{K_3} + \underbrace{C_{\ref{lem: e3 rate}} \alpha_n \sum_{i=1}^n \alpha_i \tau_i}_{K_4} 
    \end{align}
    To prove the Lemma, we will examine each of the four terms and prove they are $\fO(\tau_{n} \sqrt{\alpha_{n+1}})$. For $K_1$, this is trivial. For $K_2$, we first recall from Lemma \ref{lem:lr} that $\alpha_n = \fO(\frac{1}{n^b})$ and
\begin{align}
    \tau_n = \begin{cases} 
      \fO\qty(n^{1-b}) & \text{if} \quad \frac{4}{5}<b<1, \\
      \fO\qty(\log n) & \text{if} \quad b=1.
      \end{cases}
\end{align}
Then we have,
\begin{align}
    \tau_n \sqrt{\alpha_{n+1}} = \begin{cases} 
      \fO\qty(\frac{1}{n^{\frac{3}{2}b -1}}) & \text{if} \quad \frac{4}{5}<b<1, \\
      \fO\qty(\frac{\log n}{\sqrt{n}}) & \text{if} \quad b=1.
      \end{cases} \label{eq: tau_n sqrt alpha n order}
\end{align}
    
    Then by Lemma \ref{lem: e1 lr} we have 
    \begin{align}
        \sum_{i=1}^n \alpha_{i,n} \E\qty[\norm{\e{1}_i}] &\leq \alpha_n \sum_{i=1}^n \E\qty[\norm{\e{1}_i}], \explain{Lemma \ref{lem:bravo b1}}\\
        &= \fO \qty(\alpha_n \sum_{i=1}^n \frac{1}{\sqrt{i}}), \\
        & = \fO \qty(\alpha_n \sqrt{n}) \\
        &= \fO \qty(\frac{1}{n^b} \sqrt{n}), \\
        &= \fO \qty(\frac{1}{n^{b-1/2}})
    \end{align}
    Because we have $\frac{3}{2}b -1 \leq b-\frac{1}{2}$ for $b \in (\frac{4}{5}, 1]$, we can see from \eqref{eq: tau_n sqrt alpha n order}, that $K_2$ is dominated by $K_1$.
    
    For $K_3$, by Lemma \ref{lem:lr} we have,
    \begin{align}
        \alpha_n \tau_n = \begin{cases} 
      \fO\qty(\frac{1}{n^{2b-1}}) & \text{if} \quad \frac{4}{5}<b<1, \\
      \fO\qty(\frac{\log n}{n}) & \text{if} \quad b=1.
      \end{cases}
    \end{align}
    It is clear from \eqref{eq: tau_n sqrt alpha n order}, $K_3$ is dominated by $K_1$.

For $K_4$, for the case when $b=1$, we have
\begin{align}
    \alpha_n \sum_{i=1}^n \alpha_i \tau_i &\leq \alpha_n \tau_n \sum_{i=1}^n \alpha_i \explain{$\tau_n$ increasing}\\&= \fO \qty(\frac{\log n}{n} \sum_{i=1}^n \frac{1}{i}), \\
    &= \fO \qty(\frac{\log^2 n}{n}).
\end{align}

For the case when $\frac{4}{5}< b < 1$, we have 
\begin{align}
    \alpha_n \sum_{i=1}^n \alpha_i \tau_i &= \fO\qty(\frac{1}{n^b}\sum_{i=1}^n \frac{1}{i^{2b-1}})
\end{align}
which we can approximate by an integral,
\begin{align}
    \int_1^n \frac{1}{x^{2b-1}} \ dx = \fO \qty( n^{2-2b}).
\end{align}
Therefore,
\begin{align}
    \alpha_n \sum_{i=1}^n \alpha_i \tau_i &= \fO \qty( n^{2-3b}).
\end{align}

Combining our results from the two cases, we have for $K_4$ 
\begin{align}
    \alpha_n \sum_{i=1}^n \alpha_i \tau_i = \begin{cases} 
      \fO\qty(\frac{1}{n^{3b-2}}) & \text{if} \quad \frac{4}{5}<b<1, \\
      \fO\qty(\frac{\log^2 n}{n}) & \text{if} \quad b=1.
      \end{cases}
\end{align}
Comparing with $K_1$ in \eqref{eq: tau_n sqrt alpha n order}, since we have $\frac{3}{2}b - 1<3b-2$ for $b \in (\frac{4}{5}, 1)$, we can see that $K_4$ is dominated by $K_1$, thereby proving $\omega_n = \fO(\tau_{n} \sqrt{\alpha_{n+1}})$. From \eqref{eq: tau_n sqrt alpha n order}, its easy to see that $\omega_n$ is dominated by
\begin{align}
    \frac{1}{\sqrt{\tau_n}} = \begin{cases} 
      \fO \paren{1/\sqrt{n^{1-b}}} & \text{if} \ \frac{4}{5}<b<1, \\
      \fO \paren{1/\sqrt{\log n}} & \text{if} \  b=1.
   \end{cases}
\end{align}

\end{proof}

\begin{lemma} \label{lem: bravo combo 2.11 3.1}
\begin{align}
    \sum_{k=2}^n 2\alpha_k \sigma\qty(\tau_n - \tau_k)\E\qty[\norm{U_{k-1}}] = \fO(1/\sqrt{\tau_n}).
\end{align}
\end{lemma}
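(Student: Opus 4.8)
The plan is to strip off the randomness using the bound on $\E\qty[\norm{U_{k-1}}]$ already established, and then reduce the claim to a purely deterministic estimate over the learning rates --- precisely the regime that Theorems~2.11 and~3.1 of \citet{bravo2024stochastic} are designed to handle.

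First, by \eqref{eq: expected un bound} and Lemma~\ref{lem: E Un bound order} we have $\E\qty[\norm{U_{k-1}}] \le \omega_{k-1} = \fO\qty(\tau_{k-1}\sqrt{\alpha_k})$. Substituting this into the left-hand side and pulling out the deterministic constant, it suffices to establish
\begin{align}
    \sum_{k=2}^n \alpha_k^{3/2}\,\tau_{k-1}\,\sigma\qty(\tau_n - \tau_k) = \fO\qty(1/\sqrt{\tau_n}).
\end{align}
Since $\omega_n$ has the same order as the sequence $\nu_n$ appearing in Theorem~2.10 of \citet{bravo2024stochastic}, the shortest route is to verify this order match and then invoke Theorems~2.11 and~3.1 of that work verbatim; this is exactly how the term $R_2$ in \eqref{eq: expected rate} is handled in the proof of Theorem~\ref{thm:conv_rate}.

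For a self-contained argument I would split the sum at the index $m$ with $\tau_m \le \tau_n/2 < \tau_{m+1}$. On the low range $k \le m$, monotonicity of $\sigma$ gives $\sigma(\tau_n - \tau_k) \le \sigma(\tau_n/2) = \fO\qty(1/\sqrt{\tau_n})$, and \eqref{eq: ai32 ti} in Lemma~\ref{lem:lr} gives $\sum_{k=2}^m \alpha_k^{3/2}\tau_{k-1} \le \sum_{k=1}^\infty \alpha_k^{3/2}\tau_{k-1} = \fO(1)$, so this range contributes $\fO\qty(1/\sqrt{\tau_n})$. On the high range $k > m$, I would write $\alpha_k^{3/2}\tau_{k-1} = \alpha_k\qty(\alpha_k^{1/2}\tau_{k-1})$; under Assumption~\ref{as:lr} one checks $\alpha_k^{1/2}\tau_{k-1} = \fO\qty(\tau_k^{\gamma})$ with $\gamma \doteq \frac{2-3b}{2(1-b)}$ when $b<1$, and $\gamma < -1$ is equivalent to $b > \tfrac45$; since $\tau_k > \tau_n/2$ and $\gamma < 0$, this factor is $\fO\qty(\tau_n^{\gamma})$. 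Using that $\tau_k - \tau_{k-1} = \alpha_k(1-\alpha_k)$ is comparable to $\alpha_k$, together with the standard bound $\sum_{k=2}^n \alpha_k\,\sigma(\tau_n - \tau_k) = \fO\qty(\sqrt{\tau_n})$ (obtained by comparison with $\int_0^{\tau_n}\sigma(u)\,\diff u = \fO(\sqrt{\tau_n})$), the high range contributes $\fO\qty(\tau_n^{\gamma}\sqrt{\tau_n}) = \fO\qty(\tau_n^{\gamma + 1/2}) = \fO\qty(1/\sqrt{\tau_n})$ because $\gamma + 1/2 < -1/2$. The boundary case $b = 1$ is easier, since then $\alpha_k^{1/2}\tau_{k-1}$ decays exponentially in $\tau_k = \fO(\log k)$.

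The main obstacle is the bookkeeping in the high-range estimate: tracking the constants when replacing $\alpha_k$ by $\tau_k - \tau_{k-1}$ and $\tau_{k-1}$ by $\tau_k$, getting the direction of the monotonicity bounds on $\sigma$ right when passing between the discrete sum and the integral $\int_0^{\tau_n} s^{\gamma}\sigma(\tau_n - s)\,\diff s$, and confirming that the threshold for the resulting exponent is exactly $b = \tfrac45$, so that the open interval in Assumption~\ref{as:lr} is precisely what is required (mirroring its role in \eqref{eq: ai aj^2 tj^2}). Deferring all of this to Theorems~2.11 and~3.1 of \citet{bravo2024stochastic} via the order match between $\omega_n$ and $\nu_n$ is the efficient alternative.
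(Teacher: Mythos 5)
Your primary route---bounding $\E\qty[\norm{U_{k-1}}]$ by $\omega_{k-1}=\fO\qty(\tau_{k-1}\sqrt{\alpha_k})$, matching the order of $\omega_n$ to the sequence $\nu_n$ in Theorem~2.10 of \citet{bravo2024stochastic}, and then invoking their Theorems~2.11 and~3.1---is exactly the paper's proof, which verifies the two conditions in \eqref{eq: conditions} with the explicit $f$ and $\gamma=32/27$ from that work and cites the resulting convolution-integral bound \eqref{eq: conv int}. Your self-contained alternative is a genuinely different and valid substitute: the split at the index $m$ with $\tau_m\le\tau_n/2<\tau_{m+1}$ handles the low range correctly via $\sigma(\tau_n-\tau_k)=\fO\qty(1/\sqrt{\tau_n})$ together with \eqref{eq: ai32 ti}, and the high-range computation checks out ($\alpha_k^{1/2}\tau_{k-1}=\fO\qty(\tau_k^{\gamma})$ with $\gamma=\frac{2-3b}{2(1-b)}$, and $\gamma<-1\iff b>\tfrac45$, so the contribution is $\fO\qty(\tau_n^{\gamma+1/2})=\fO\qty(1/\sqrt{\tau_n})$; the $b=1$ case degenerates to super-polynomial decay as you say). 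What the elementary argument buys is transparency---it exhibits precisely where the threshold $b=\tfrac45$ of Assumption~\ref{as:lr} is used, rather than burying it inside the cited convolution estimate---at the cost of the constant-tracking and sum-to-integral comparisons you flag; what the paper's route buys is brevity and reuse of already-verified hypotheses. Either version is acceptable.
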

\begin{proof}
    The proof of this lemma is a straightforward combination of the existing results of Theorems 2.11 and 3.1 from \citet{bravo2024stochastic}. First, from \eqref{eq: expected un bound}, we have
    \begin{align}
        \sum_{k=2}^n 2\alpha_k \sigma\qty(\tau_n - \tau_k)\E\qty[\norm{U_{k-1}}] \leq \sum_{k=2}^n 2\alpha_k \sigma\qty(\tau_n - \tau_k)\omega_{k-1}.
    \end{align}
    In the proof of Theorem 2.11 of \citet{bravo2024stochastic}, they prove that if there exists a decreasing convex function $f: (0,\infty) \rightarrow (0,\infty)$ of class $C^2$, and a constant $\gamma \geq 1$, such that for $k \geq 2$,

    \begin{align}
        \begin{cases}
            \omega_{k-1} \leq (1-\alpha_k)f(\tau_k), \\
            \alpha_k(1-\alpha_k) \leq \gamma \alpha_{k+1}(1-\alpha_{k+1}), \label{eq: conditions}
        \end{cases}
    \end{align}
    then
    \begin{align}
        \sum_{k=2}^n 2\alpha_k \sigma\qty(\tau_n - \tau_k)\omega_{k-1} \leq \frac{2\gamma}{\sqrt{\pi}} \int_{\tau_1}^{\tau_n} \frac{f(x)}{\sqrt{\tau_n - x}}dx + 2 \alpha_n \omega_{n-1}. \label{eq: conv int}
     \end{align}
    
    Theorem 3.1 in \citet{bravo2024stochastic} establishes that for the step size schedule specified in Assumption \ref{as:lr}, 
    there exist constants $ \gamma \geq 1 $ and a function $ f(x) $ satisfying \eqref{eq: conditions} with $\omega_n = \fO(\tau_n \sqrt{\alpha_{n+1}})$. Specifically, they show with
\begin{align}
    f(x) =
    \begin{cases}
        \kappa x(1+x)^{-b/2(1-b)} & \text{if } b < 1, \\
        \kappa xe^{-x/2} & \text{if } b = 1,
    \end{cases}
\end{align}
for some constant $ \kappa $ and $\gamma = \frac{32}{27}$, \eqref{eq: conditions} is satisfied. Moreover, they demonstrate that the resulting convolution integral in \eqref{eq: conv int} evaluates to $ \fO(1/\sqrt{\tau_n}) $.

Combining these results with Lemma \ref{lem: E Un bound order} which shows that $\omega_n$ is dominated by $1 / \sqrt{\tau_n}$, the right-hand side of \eqref{eq: conv int} simplifies to $ \fO(1/\sqrt{\tau_n}) $, which completes the proof. For detailed steps, we refer the reader to \citet{bravo2024stochastic}.
\end{proof}

\section{Extension of Theorem 2.1 of \citet{borkar2009stochastic}}\label{sec:borkar_ext}
In this section, we present an extension of Theorem 2 from \citep{borkar2009stochastic} for completeness.
Readers familiar with stochastic approximation theory should find this extension fairly straightforward. 
Originally, Chapter 2 of \citep{borkar2009stochastic} considers stochastic approximations of the form,
\begin{align}
    y_{n+1} = y_n + \alpha_n\paren{h(y_n) + M_{n+1}+ \xi_{n+1}} \label{eq:borkar_iter}
\end{align}
where it is assumed that $\xi_{n} \rightarrow 0$ almost surely. 
However, our work requires that we remove the assumption that $\xi_n \rightarrow 0$, and replace it with a more mild condition on the asymptotic rate of change of $\xi_n$, akin to \citet{kushner2003stochastic}.
\begin{assumption}\label{as:borkar_noise}
For any $T>0$,
\begin{align}
    \label{eq: arc}
    \lim_{n\to\infty} \sup_{n\leq j \leq m(n, T)}\norm{\sum_{i=n}^{j}\alpha_i \xi_i} = 0 \qq{a.s.}
\end{align}
where $m(n, T) \doteq \min\qty{k | \sum_{i=n}^k \alpha(i) \geq T}$.
\end{assumption}

The next four assumptions are the same as the remaining assumptions in Chapter 2 of \cite{borkar2009stochastic}.

\begin{assumption}\label{as:borkar_lipschitz}
The map $h$ is Lipschitz: $\norm{h\paren{x} - h\paren{y}} \leq L\norm{x-y}$ for some $0<L<\infty$.
\end{assumption}

\begin{assumption}\label{as:borkar_lr}
The step sizes $\qty{\alpha_n}$ are positive scalars satisfying
\begin{align}
\sum_n \alpha_n = \infty, \sum_n \alpha_n^2 < \infty
\end{align}
\end{assumption}

\begin{assumption}\label{as:borkar_mds}
$\qty{M_n}$ is a martingale difference sequence w.r.t the increasing family of $\sigma$-algebras
\begin{align}
\fF_n \doteq \sigma \qty(y_m, M_m, m\leq n) = \sigma \qty(y_0, M_1,\ldots, M_n), \, n\geq 0.
\end{align}
That is,
\begin{align}
    \E\left[M_{n+1}| \fF_n\right] = 0 \qq{a.s.}, n\geq 0.
\end{align}
Furthermore, $\qty{M_n}$ are square-integrable with
\begin{align}
    \E\left[\norm{M_{n+1}}^2| \fF_n\right] \leq K\paren{1+ \norm{x_n}^2} \ \qq{a.s.}, \ n\geq 0,
\end{align}
for some constant $K>0$
\end{assumption}

\begin{assumption}\label{as:borkar_sup}
The iterates of \eqref{eq:borkar_iter} remain bounded almost surely, i.e.,
\begin{align}
\sup_n \norm{y_n} < \infty
\end{align}
\end{assumption}

\begin{theorem}[Extension of Theorem 2.1 from \cite{borkar2009stochastic}] \label{thm:borkar_thm2}
    Let Assumptions \ref{as:borkar_noise}, \ref{as:borkar_lipschitz},  \ref{as:borkar_lr}, \ref{as:borkar_mds}, \ref{as:borkar_sup} hold. Almost surely, the sequence $\qty{y_n}$ generated by \eqref{eq:borkar_iter} converges to a (possibly sample-path dependent) compact connected internally chain transitive set of the ODE
    \begin{align}
\label{eq:Un_ode}
        \dv{y(t)}{t} = h(y(t)).
    \end{align}
\end{theorem}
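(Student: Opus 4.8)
The plan is to run the standard ODE method of Chapter~2 of \citet{borkar2009stochastic} essentially verbatim, altering only the single point at which the hypothesis $\xi_n\to 0$ is used and substituting Assumption~\ref{as:borkar_noise} there. Set $t(0)\doteq 0$, $t(n)\doteq\sum_{i=0}^{n-1}\alpha_i$, and let $\bar y(\cdot)$ be the piecewise-linear interpolation of $\qty{y_n}$ determined by $\bar y(t(n))=y_n$. For $s\ge 0$ let $y^{s}(\cdot)$ be the unique solution on $[s,\infty)$ of $\dv{y}{t}=h(y(t))$ with $y^{s}(s)=\bar y(s)$; existence and uniqueness follow from the Lipschitz Assumption~\ref{as:borkar_lipschitz}. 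The whole content to verify is the asymptotic pseudotrajectory property: for every $T>0$,
\begin{align}
    \lim_{s\to\infty}\ \sup_{t\in[s,s+T]}\norm{\bar y(t)-y^{s}(t)}=0\qq{a.s.}
\end{align}
Granting this, the conclusion is immediate: Assumption~\ref{as:borkar_sup} makes the limit set of $\bar y$ (equivalently, of $\qty{y_n}$) a nonempty compact connected set, and the asymptotic pseudotrajectory property forces it to be internally chain transitive for the semiflow of~\eqref{eq:Un_ode}, exactly as in the proof of Theorem~2.1 of \citet{borkar2009stochastic}.

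To obtain the asymptotic pseudotrajectory property, fix $T>0$ and $n$, and write $m\doteq m(n,T)$. Telescoping~\eqref{eq:borkar_iter} over a sub-window $\qty{n,\dots,k}$ with $n\le k\le m$ gives
\begin{align}
    y_k-y_n=\sum_{i=n}^{k-1}\alpha_i\,h(y_i)+\sum_{i=n}^{k-1}\alpha_i\,M_{i+1}+\sum_{i=n}^{k-1}\alpha_i\,\xi_{i+1}.
\end{align}
The first sum is a Riemann-sum approximation to $\int_{t(n)}^{t(k)}h(\bar y(r))\,dr$; its error, together with the within-step interpolation error, is $\fO\paren{\sum_{i\ge n}\alpha_i^2}\to 0$ by Assumption~\ref{as:borkar_lr}, using that $h$ is bounded along the almost surely bounded trajectory (Assumptions~\ref{as:borkar_lipschitz} and~\ref{as:borkar_sup}). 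The second sum is a tail of $\sum_i\alpha_i M_{i+1}$, a martingale that converges almost surely: Assumptions~\ref{as:borkar_mds} and~\ref{as:borkar_sup} give $\E\qty[\norm{M_{i+1}}^2\mid\fF_i]\le K\paren{1+\sup_j\norm{y_j}^2}<\infty$ a.s., and $\sum_i\alpha_i^2<\infty$, so the martingale convergence theorem applies and all tails over windows of bounded length vanish as $n\to\infty$. The third sum is precisely the quantity controlled by Assumption~\ref{as:borkar_noise} (up to the harmless index shift $\xi_{i+1}$ versus $\xi_i$), hence also vanishes uniformly over $n\le k\le m(n,T)$. Feeding these three bounds into Gr\"onwall's inequality with the Lipschitz constant $L$ of $h$ — which produces the factor $e^{LT}$ — yields $\sup_{t\in[t(n),t(n)+T]}\norm{\bar y(t)-y^{t(n)}(t)}\to 0$, and since $t(n)\to\infty$ this is the claimed property.

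The main obstacle is bookkeeping rather than anything conceptual: one must check that in Borkar's proof the hypothesis $\xi_n\to 0$ is invoked only through finite-window sums $\sum_{i=n}^{m(n,T)}\alpha_i\xi_{i+1}$, so that Assumption~\ref{as:borkar_noise} is a faithful replacement, and one must line up the discrete window $\qty{n,\dots,m(n,T)}$ with the continuous window $[t(n),t(n)+T]$. Beyond that, the martingale and discretization estimates, the Gr\"onwall step, and the passage from the asymptotic pseudotrajectory property to internal chain transitivity are standard and may be cited from Chapter~2 of \citet{borkar2009stochastic}; the one place Assumption~\ref{as:borkar_sup} is used beyond Borkar's own use of it is exactly in converting the conditional-second-moment bound of Assumption~\ref{as:borkar_mds}, stated in terms of the iterate norm, into the almost surely finite bound needed for the martingale convergence step.
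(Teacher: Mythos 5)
Your proposal is correct and follows essentially the same route as the paper: both run Borkar's Chapter~2 argument verbatim (interpolated trajectory, three-term telescoped decomposition into discretization error, martingale tail, and the $\xi$-sum, followed by Gr\"onwall), and both isolate the single point where $\xi_n\to 0$ is used, restricting the supremum to the finite window $0\le k\le m(n,T)$ so that Assumption~\ref{as:borkar_noise} substitutes faithfully. The paper packages this as Lemma~\ref{thm:borkar_2_thm1} before citing the remainder of Borkar's proof, but the content is identical to yours.
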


\begin{proof}
 We now demonstrate that even with the relaxed assumption on $\xi_n$,
 we can still achieve the same almost sure convergence of the iterates achieved by \cite{borkar2009stochastic}. 
    Following Chapter 2 of \cite{borkar2009stochastic}, we construct a continuous interpolated trajectory $\bar{y}\paren{t}, t\geq 0$, and show that it asymptotically approaches the solution set of \eqref{eq:Un_ode} almost surely. Define time instants $t\paren{0}=0, t\paren{n}= \sum_{m=0}^{n-1} \alpha_m, n\geq 1$. By assumption \ref{as:borkar_lr}, $t\paren{n}\uparrow \infty$. Let $I_n \doteq \left[t\paren{n}, t\paren{n+1}\right], n\geq 0$. Define a continuous, piece-wise linear $\bar{y}\paren{t}, t\geq0$ by $\bar{y}{\paren{t\paren{n}}} = y_n, \ n\geq 0$, with linear interpolation on each interval $I_n$: 
    \begin{equation}
    \bar{y}\paren{t} = y_n + \paren{y_{n+1} - y_n}\frac{t - t\paren{n}}{t\paren{n+1} - t\paren{n}}, t \in I_n \label{eq:borkar2_interp_traj}
    \end{equation}
    It is worth noting that $\sup_{t\geq 0} \norm{\bar{y}\paren{t}} = \sup_{n}\norm{y_n} < \infty$ almost surely by Assumption \ref{as:borkar_sup}. Let $y^s\paren{t}, t\geq s,$ denote the unique solution to $\eqref{eq:Un_ode}$ `starting at s': 
    \begin{align}
    \dv{y^s(t)}{t} &= h\paren{y^s\paren{t}}, t\geq s,
    \end{align}
    with $y^s\paren{s} = \bar{y}\paren{s}, s\in \R$. Similarly, let $y_s\paren{t}, t\geq s,$ denote the unique solution to $\eqref{eq:Un_ode}$ `ending at s':
    \begin{align}
    \dv{y_s(t)}{t} &= h\paren{y_s\paren{t}}, t\leq s,
    \end{align}
    with $y_s\paren{s} = \bar{y}\paren{s}, s\in \R$.
    Define also 
    \begin{align}
        \zeta_n &= \sum_{m=0}^{n-1} \alpha_m \paren{M_{m+1} + \xi_{m+1}}, \ n\geq 1 \label{eq:zeta_n_define}
    \end{align}

After invoking Lemma \ref{thm:borkar_2_thm1}, the analysis and proof presented for Theorem 2 in \cite{borkar2009stochastic} applies directly, yielding our desired extended result.
\end{proof}

\begin{lemma}[Extension of Theorem 1 from \cite{borkar2009stochastic}] \label{thm:borkar_2_thm1}
    Let \ref{as:borkar_noise} $-$ \ref{as:borkar_sup} hold. We have for any $T>0$,
    \begin{align}
        \lim_{s\rightarrow \infty} &\sup_{t\in \left[s,s+T\right]} \norm{\bar{y}\paren{t} - y^s\paren{t}} = 0, \qq{a.s.} \\
        \lim_{s\rightarrow \infty} &\sup_{t\in \left[s,s+T\right]} \norm{\bar{y}\paren{t} - y_s\paren{t}} = 0, \qq{a.s.}
    \end{align}
\end{lemma}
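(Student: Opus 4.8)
The plan is to follow the proof of Theorem~2.1 in \citet{borkar2009stochastic} essentially verbatim, isolating the single place where the hypothesis $\xi_n \to 0$ is used and replacing it with Assumption~\ref{as:borkar_noise}. First I would reduce the continuous-time claim to a discrete-time one: since $\bar y$ is the piecewise-linear interpolation \eqref{eq:borkar2_interp_traj}, the iterates are a.s.\ bounded (Assumption~\ref{as:borkar_sup}), and $h$ is Lipschitz (Assumption~\ref{as:borkar_lipschitz}) so $y^s$ has bounded velocity on $[s,s+T]$, it suffices to control $\max_{n \le k \le m(n,T)} \norm{y_k - y^{t(n)}(t(k))}$ and let $n \to \infty$. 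For such $k$, subtracting the Euler-sum form of the iterate from the integral form of the ODE solution yields
\begin{align}
    y_k - y^{t(n)}(t(k)) = \sum_{m=n}^{k-1}\alpha_m\paren{h(y_m) - h(y^{t(n)}(t(m)))} + \delta_{n,k} + \paren{N_k - N_n} + \paren{\Xi_k - \Xi_n},
\end{align}
where $N_k \doteq \sum_{m=0}^{k-1}\alpha_m M_{m+1}$ and $\Xi_k \doteq \sum_{m=0}^{k-1}\alpha_m \xi_{m+1}$ (so that $\zeta_k = N_k + \Xi_k$ in the notation of \eqref{eq:zeta_n_define}), and $\delta_{n,k}$ collects the Riemann-sum discretization error of $\int h(y^{t(n)})$, which is $\fO\qty(\sum_{m=n}^{k-1}\alpha_m^2)$ by the bounded velocity and hence vanishes uniformly in $k$ as $n\to\infty$ by Assumption~\ref{as:borkar_lr}.

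Next I would dispatch the two noise sums. For $\{N_k\}$: it is a martingale with respect to $\{\fF_k\}$, and the standard truncation argument of \citet{borkar2009stochastic} — using the conditional second-moment bound in Assumption~\ref{as:borkar_mds} together with the a.s.\ boundedness from Assumption~\ref{as:borkar_sup} and $\sum_m \alpha_m^2 < \infty$ — shows its sum of conditional variances is a.s.\ finite, hence $N_k$ converges a.s.\ and $\sup_{k \ge n}\norm{N_k - N_n} \to 0$ a.s. For the $\xi$ sum, the key observation is that $\Xi_k - \Xi_n = \sum_{m=n}^{k-1}\alpha_m \xi_{m+1}$, so restricting to $n \le k \le m(n,T)$ makes $\lim_{n\to\infty}\sup_{n\le k\le m(n,T)}\norm{\Xi_k - \Xi_n} = 0$ a.s.\ precisely the content of Assumption~\ref{as:borkar_noise}; this is the one spot where the relaxed hypothesis replaces $\xi_n\to 0$ in the original. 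With these two bounds and the Lipschitz term handled via the discrete Gronwall inequality, I obtain $\max_{n\le k\le m(n,T)}\norm{y_k - y^{t(n)}(t(k))} \le \varepsilon_n e^{LT}$ with $\varepsilon_n \to 0$ a.s., and interpolating back to continuous time (using again the bounded ODE velocity over intervals of length at most $\sup_{m\ge n}\alpha_m \to 0$) gives the first claim.

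The ``ending at $s$'' statement follows from the identical estimate run backwards: starting from $t(m(n,T))$ and integrating the ODE down to $t(n)$, the same three error contributions $\delta_{n,\cdot}$, $N_\cdot - N_n$, $\Xi_\cdot - \Xi_n$ have already been shown uniformly small over the window, so a time-reversed Gronwall argument closes it with no new estimates. I expect the only genuinely delicate point to be the martingale step — justifying a.s.\ convergence of $\{N_k\}$ from only the a.s.\ (not $L^2$) boundedness of $\{y_n\}$ — but this is exactly the stopping-time truncation already carried out in Chapter~2 of \citet{borkar2009stochastic}, so I would cite it rather than reproduce it; everything else is routine bookkeeping, which is why, once Assumption~\ref{as:borkar_noise}'s role is identified, one can appeal directly to the remainder of Borkar's proof.
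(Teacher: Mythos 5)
Your proposal is correct and follows essentially the same route as the paper: both isolate the noise increment over the window $[t(n), t(n)+T]$, split it into the martingale part (handled by the martingale convergence theorem via Assumptions~\ref{as:borkar_lr}, \ref{as:borkar_mds}, \ref{as:borkar_sup}) and the $\xi$ part (handled exactly by Assumption~\ref{as:borkar_noise} after replacing $\sup_{k\ge 0}$ with $\sup_{0\le k\le m(n,T)}$), and then defer the remaining Gronwall and interpolation steps to \citet{borkar2009stochastic}. The only difference is notational: you reserve $\delta_{n,k}$ for the Riemann-sum discretization error, whereas the paper uses $\delta_{n,n+k}$ for the accumulated noise $\zeta_{n+k}-\zeta_n$.
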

\begin{proof}
    Let $t\paren{n+m}$ be in $\left[t(n), t(n)+T\right]$. Let $\left[t\right] \doteq \max \qty{t(k) : t(k) \leq t}$. Then,
    \begin{align}
        \bar{y}\paren{t\paren{n+m}} = \bar{y}\paren{t\paren{n}} + \sum_{k=0}^{m-1}\alpha_{n+k}h\paren{\bar{y}\paren{t\paren{n+k}}} + \delta_{n,n+m} 
        \explain{2.1.6 in \text{\cite{borkar2009stochastic}}} \,\,\,\,
        \label{eq:borkar_2.1.6}
    \end{align}
    where $\delta_{n,n+m} \doteq \zeta_{n+m} - \zeta_n$. 
    \citet{borkar2009stochastic} then compares this with 
    \begin{align}
        y^{t(n)}\paren{t\paren{n+m}} = \bar{y}\paren{t\paren{n}} + \sum_{k=0}^{m-1}\alpha_{n+k}h\paren{y^{t(n)}\paren{t\paren{n+k}}} \\ \ + \int_{t(n)}^{t(n+m)}\paren{h\paren{y^{t(n)}\paren{z}} - h\paren{y^{t(n)}\paren{\left[z\right]}}}dz. \explain{2.1.7 in \text{\cite{borkar2009stochastic}}}
    \end{align}

    Next, \citet{borkar2009stochastic} bounds the integral on the right-hand side by proving 
    \begin{align}
        \norm{\int_{t(n)}^{t(n+m)}\paren{h\paren{y^{t(n)}\paren{t}} - h\paren{y^{t(n)}\paren{\left[t\right]}}}dt} \leq C_TL\sum_{k=0}^\infty \alpha_{n+k}^2 \xrightarrow{n\uparrow \infty} 0, \ \qq{a.s.} \explain{2.1.8 in \text{\cite{borkar2009stochastic}}}
    \end{align}
    where $C_T \doteq \norm{h\paren{0}} + L\paren{C_0 + \norm{h\paren{0}}T}e^{LT} < \infty $ almost surely and $C_0 \doteq \sup_n \norm{y_n} < \infty$ a.s. by Assumption \ref{as:borkar_sup}.

    Then, we can subtract (2.1.7) from (2.1.6) and take norms, yielding
    \begin{align}
        \norm{\bar{y}\paren{t\paren{n+m}}- y^{t(n)}\paren{t\paren{n+m}}} &\leq L \sum_{i=0}^{m-1}\alpha_{n+i}\norm{\bar{y}\paren{t\paren{n+i}}- y^{t(n)}\paren{t\paren{n+i}}} \\
        &\quad + C_TL\sum_{k\geq 0} \alpha_{n+k}^2 + \sup_{0\leq k \leq m(n,T)}\norm{\delta_{n,n+k}}. \label{eq:our_subtraction}
    \end{align}
    The key difference between \eqref{eq:our_subtraction} and the analogous equation in \citet{borkar2009stochastic} Chapter 2, is that we replace the $\sup_{k \geq 0}$ with a $\sup_{0\leq k \leq m(n,T)}$. The reason we can make this change is that we defined $t(n+m)$ to be in the range $\left[t(n), t(n)+T\right]$. Recall that we also defined $m(n, T) \doteq \min\qty{k | \sum_{i=n}^k \alpha(i) \geq T}$ in Assumption \ref{as:borkar_noise}, so we therefore know that $m \leq m(n,T)$ in \eqref{eq:borkar_2.1.6}. \citet{borkar2009stochastic} unnecessarily relaxes this for notation simplicity, but a similar argument can be found in \cite{kushner2003stochastic}. 
    
    Also, we have,
    \begin{align}
        \norm{\delta_{n,n+k}} &= \norm{\zeta_{n+k} - \zeta_{n}}, \\
        &= \norm{\sum_{i=n}^k \alpha_i \paren{M_{i+1}+\xi_{i+1}}}, \explain{by \eqref{eq:zeta_n_define}}\\
        &\leq \norm{\sum_{i=n}^k \alpha_i M_{i+1}}+ \norm{\sum_{i=n}^k \alpha_i \xi_{i+1}}.
    \end{align}
    \citet{borkar2009stochastic} proves that $\paren{\sum_{i=0}^{n-1} \alpha_i M_{i+1}, \fF_n}, \ n\geq 1$ is a zero mean, square-integrable martingale. By \ref{as:borkar_lr}, \ref{as:borkar_mds}, \ref{as:borkar_sup},
    \begin{align}
        \sum_{n\geq 0} \E \left[\norm{\sum_{i=0}^{n} \alpha_i M_{i+1} - \sum_{i=0}^{n-1} \alpha_i M_{i+1}} \, \bigg| \, \fF_n\right] = \sum_{n\geq 0} \E \left[\norm{M_{n+1}}^2 \, | \, \fF_n\right] < \infty.
    \end{align}
    
    Therefore, the martingale convergence theorem gives the almost sure convergence of $\paren{\sum_{i=n}^k \alpha_i M_{i+1}, \fF_n}$ as $n \rightarrow \infty$. Combining this with assumption \ref{as:borkar_noise} yields,
    \begin{align}
        \lim_{n \rightarrow \infty} \sup_{0 \leq k \leq m(n,T)} \norm{\delta_{n,n+k}} = 0 \qq{a.s.}
    \end{align}
    Using the definition of $K_{T,n} \doteq C_TL\sum_{k\geq 0} \alpha_{n+k}^2 + \sup_{0\leq k \leq m(n,T)}\norm{\delta_{n,n+k}}$ given by \cite{borkar2009stochastic}, we have proven that our slightly relaxed assumption still yields $K_{T,n} \rightarrow 0$ almost surely as $n \rightarrow \infty$. The rest of the argument for the proof of the theorem in \citet{borkar2009stochastic} holds without any additional modification.
\end{proof}

\end{document}